\documentclass{article} 
\usepackage{iclr2027_conference,times}

\usepackage{amsmath,amsfonts,bm}

\def\eqref#1{equation~\ref{#1}}

\def\1{\bm{1}}

\DeclareMathAlphabet{\mathsfit}{\encodingdefault}{\sfdefault}{m}{sl}
\SetMathAlphabet{\mathsfit}{bold}{\encodingdefault}{\sfdefault}{bx}{n}

\usepackage{hyperref}
\usepackage{url}

\usepackage{comment}
\usepackage{booktabs}
\usepackage{lipsum}
\usepackage{placeins}
\usepackage{graphicx}
\usepackage{amsmath, amsthm, amssymb}
\usepackage{amsfonts}
\usepackage[OT1]{fontenc}
\usepackage{bm}
\usepackage{multirow}
\usepackage{subcaption}
\usepackage{enumitem}

\usepackage{algorithm}
\usepackage{algorithmic}

\newtheorem{prop}{Proposition}

\title{Repurposing Obsolete Representations for Post-Deployment Adaptation}

\author{
Daniel Bethell$^{1}$,
Charmaine Barker$^{1}$,
Simos Gerasimou$^{1,2}$ \\
\\
$^{1}$Department of Computer Science, University of York, UK \\
$^{2}$Department of Elect. Eng., and Computer Science and Eng., Cyprus University of Technology, Cyprus \\
\texttt{\{daniel.bethell, charmaine.barker, simos.gerasimou\}@york.ac.uk}
}

\newcommand{\simos}[1]{\textcolor{black}{#1}}

\iclrfinalcopy 
\begin{document}

\maketitle

\begin{abstract}
Deep neural networks are increasingly deployed in long-lived systems, where task requirements may change after training. In such settings, part of the original output space may become obsolete: a class, prediction region, or learned behaviour may no longer be valid. Existing approaches either leave the obsolete behaviour intact or require fine-tuning, which can be expensive. We propose Deep Repurposing (DR), a post-hoc framework for adapting models under task obsolescence. DR estimates the latent geometry of obsolete and retained regions, removes obsolete-supporting components, and reallocates retained-compatible evidence through an analytic repair map without gradient updates. This yields repaired predictions and representations in which obsolete regions no longer act as valid outputs, while useful obsolete structure can support the retained task. Across multiple task settings, DR removes obsolete behaviour while preserving retained utility. More importantly, across classification benchmarks, DR matches or exceeds competing unlearning and editing baselines in retained accuracy, eliminates obsolete predictions, and adapts up to $60\times$ faster than competing unlearning methods.
\end{abstract}

\section{Introduction}
\label{sec:Introduction}

Deep neural networks have become central to modern perception and decision-making systems, achieving strong performance across computer vision~\citep{khan2022transformers}, robotics~\citep{rudin2022learning}, and autonomous exploration~\citep{cimurs2021goal}. In these settings, models are often deployed as part of larger operational pipelines, where their predictions may guide downstream planning. In practice, deployment settings often evolve after training, while the model itself remains costly or difficult to revise~\citep{sculley2015hidden}.

When deployment conditions change, parts of the original task may cease to reflect the system’s current objective. This leads to task obsolescence, where a previously valid output class, prediction region, or learned behaviour becomes invalid, irrelevant, or no longer actionable. For example, a perception model may retain an output category that is no longer used by the downstream system, or a prediction region whose operational meaning has changed~\citep{sculley2015hidden}.

Existing approaches to task obsolescence remain limited. Leaving a model unchanged allows obsolete outputs to persist~\citep{sculley2015hidden}, while output-level suppression or post-hoc repair can alter exposed predictions without necessarily changing the underlying latent representation~\citep{hakemi2025post}. Retraining and fine-tuning provide stronger adaptation but require training data and substantial computation. Machine unlearning~\citep{zhang2023review} and model editing~\citep{ilharco2022editing} instead remove or modify learned behaviour, but do not consider whether obsolete structure can be reused. Existing methods therefore suppress, erase, or retrain obsolete behaviour rather than repurposing useful obsolete evidence for the retained task.

Motivated by this gap, we propose \textit{Deep Repurposing (DR)}, a post-hoc framework for adapting models under task obsolescence. Rather than simply suppressing obsolete outputs or erasing their associated knowledge, DR removes obsolete identity while reallocating retained-compatible evidence through the model’s learned geometry. This produces a repaired prediction space in which obsolete regions no longer act as valid outputs, but their useful structure can still support the retained task. Empirically, we show that this enables task adaptation without full retraining, reducing obsolete behaviour while preserving retained performance across multiple task settings.

\section{Related Work}
\label{sec:Related Work}

Deployed neural networks may become misaligned as data, labels, or downstream objectives evolve~\citep{sculley2015hidden}. Retraining or fine-tuning can adapt models to revised tasks, but require access to training data and significant computation.

Machine unlearning~\citep{zhang2023review} removes unwanted behaviour from trained models, for example through curvature-based degradation~\citep{golatkar2020eternal} or parameter dampening~\citep{foster2024fast, foster2024loss}. However, these methods treat the target primarily as information to suppress and can harm retained performance.

Model editing and classifier repair instead alter targeted behaviours through changes to classifier weights, feature directions~\citep{kim2026exploring}, or decision regions~\citep{hakemi2025post, ilharco2022editing}. While cheaper than retraining, their objective is generally to erase, weaken, or redirect unwanted behaviour rather than distinguish obsolete identity from useful obsolete evidence or repurpose that evidence toward retained regions.

Representation-based approaches provide another alternative. Prototype and metric-learning methods use embedding-space centroids or distances~\citep{mensink2013distance, snell2017prototypical}, while feature-direction methods identify behaviour-associated directions~\citep{kim2018interpretability, ilharco2022editing}. These approaches are typically designed for recognition, generalisation, or generic editing rather than task-obsolescence repair. DR instead uses representation geometry to remove obsolete-supporting components, redirect useful evidence to retained regions, and reconstruct repaired outputs and latents without gradient updates.

Overall, existing approaches can retrain, forget, edit, or mask obsolete behaviour, but do not directly address task obsolescence as repurposing. DR fills this gap by analytically redirecting useful obsolete structure toward the retained task.

\section{Preliminaries}
\label{sec:Preliminaries}
We consider a supervised prediction problem with an input space $\mathcal{X}$ and target space $\mathcal{Y}$. Samples $(x, y) \in \mathcal{X} \times \mathcal{Y}$ are drawn from an underlying distribution $P$, from which we obtain disjoint train, calibration, and test sets $\mathcal{D}_{\text{train}}, \mathcal{D}_{\text{cal}}, \mathcal{D}_{\text{test}} \overset{\text{i.i.d.}}{\sim} P_{\mathcal{X}\times\mathcal{Y}}$. We assume a pretrained model $f_\theta : \mathcal{X}\rightarrow\hat{\mathcal{Y}}$ with parameters $\theta$ that was trained on $\mathcal{D}_{\text{train}}$. The model is defined as $f_\theta(x) = h_\theta(\phi_\theta(x))$ where $z = \phi_\theta(x) \in \mathbb{R}^d$ is the penultimate latent representation and $h_\theta$ is the following prediction head.

To express obsolescence, we introduce a finite set of decision regions $\mathcal{C} = \{1, \ldots, K\}$ where $K$ is the number of task-specific regions, and $c: \mathcal{Y} \rightarrow \mathcal{C}$ maps each target to its associated region. In classification, $c(y) = y$ and each region corresponds to an output class. In regression or structured prediction, $c(y)$ may instead denote a value interval, output prototype, anchor, or application-defined regime. A post-deployment change identifies a subset $\mathcal{O} \subset \mathcal{C}$ as obsolete, with the remaining regions $\mathcal{R} = \mathcal{C} \setminus \mathcal{O}$ retained.

The objective of Deep Repurposing is to construct a lightweight repaired predictor $f_\theta^{*}$ that removes prediction behaviour associated with $\mathcal{O}$, while preserving or improving performance on retained regions $\mathcal{R}$. Importantly, the pretrained parameters remain fixed: i.e., $\theta$ is not updated during repair or deployment adaptation.

\section{Deep Repurposing}
\label{sec:Deep Repurposing}
Deep Repurposing (DR) is an analytic post-deployment repair process that operates within the latent space of a neural network. Given a model $f_\theta = h_\theta \circ \phi_\theta$ and a calibration set $\mathcal{D}_{\text{cal}}$, DR operates in two phases. In the offline calibration phase, a small calibration set is used to estimate region geometry, obsolete-retained directions, and residual correction prototypes. In the online inference phase, these fixed quantities are used to repair each input analytically, without gradients, optimiser steps, or fine-tuning. The repaired model removes obsolete regions from the output space while reallocating preserved-compatible evidence to retained regions.

\begin{figure*}[tb]
    \centering
    \includegraphics[width=0.95\linewidth]{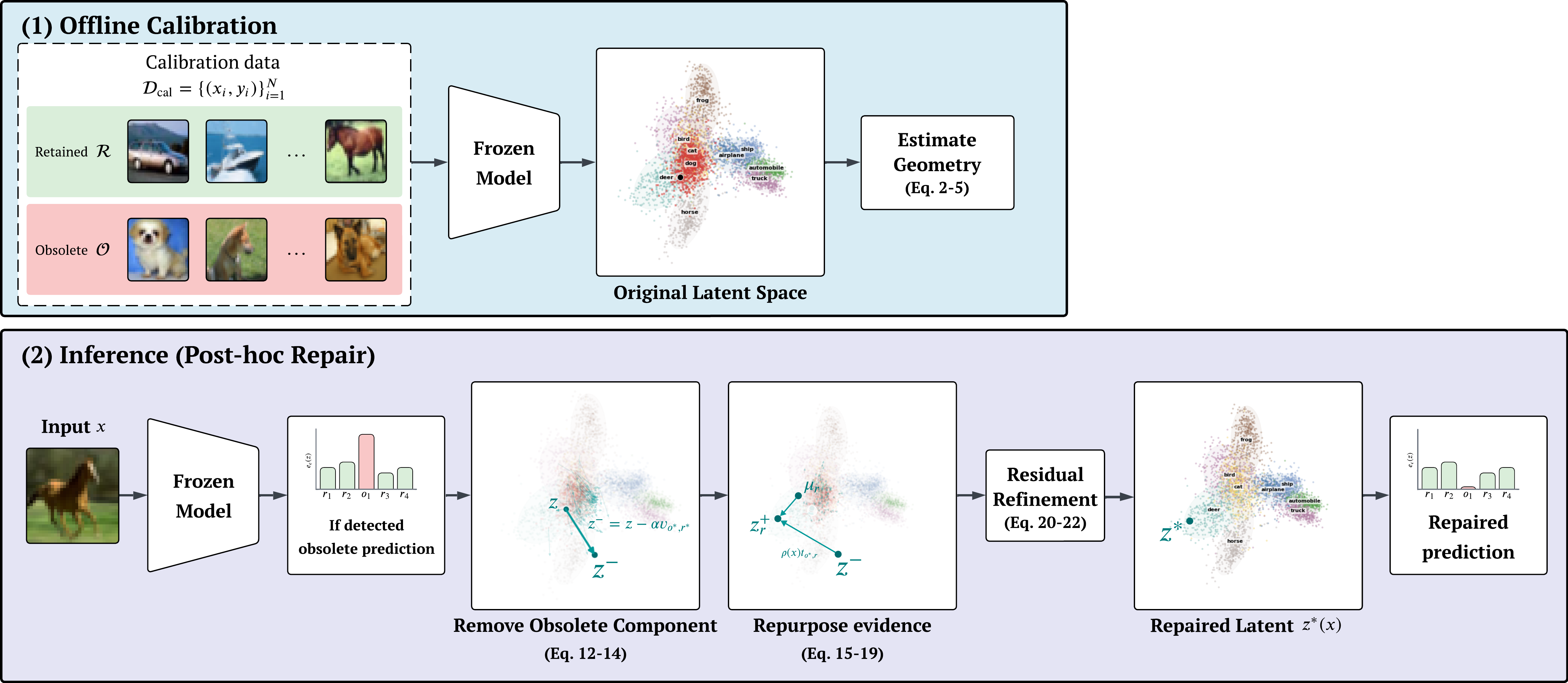}
    \caption{Overview of Deep Repurposing (DR). Offline calibration estimates retained and obsolete latent geometry and residual correction quantities using a frozen pretrained model. At inference, DR detects obsolete predictions, removes obsolete-supporting components, repurposes evidence towards the best-suited retained regions, applies residual refinement, and returns repaired predictions and latent representations.}
    \label{fig:overview}
\end{figure*}

\subsection{Offline Calibration}
\label{sec:DR-offline-calibration}

The offline phase is performed once before deployment. Its role is to estimate all geometric and residual quantities required by DR at inference time. The pretrained network is used only as a fixed feature extractor and predictor; no gradients are computed, and no model parameters are updated.

DR first embeds calibration samples using the backbone:
\begin{equation}
Z_{\mathrm{cal}} = \{z_i = \phi_\theta(x_i) : (x_i,y_i) \in \mathcal{D}_{\mathrm{cal}}\}
\label{eq:dr-cal-latents}
\end{equation}
For each region $c \in \mathcal{O}\cup\mathcal{R}$, let $\mathcal{D}^{c}_{\mathrm{cal}} = \{(x_i,y_i)\in\mathcal{D}_{\mathrm{cal}} : c(y_i)=c \}$ denote the calibration samples assigned to region $c$. DR fits a diagonal Gaussian prototype to the latent representations of each region:
\begin{equation}
\begin{aligned}
\mu_c
&=
\frac{1}{|\mathcal{D}^{c}_{\mathrm{cal}}|}
\sum_{(x_i,y_i)\in\mathcal{D}^{c}_{\mathrm{cal}}}
z_i, \\
\sigma_c^2
&=
\frac{1}{|\mathcal{D}^{c}_{\mathrm{cal}}|}
\sum_{(x_i,y_i)\in\mathcal{D}^{c}_{\mathrm{cal}}}
(z_i-\mu_c)\odot(z_i-\mu_c)
+
\epsilon\mathbf{1}_d
\end{aligned}
\label{eq:dr-region-moments}
\end{equation}
where $\epsilon>0$ is a numerical stabiliser. These Gaussian prototypes provide a lightweight estimate of where each retained and obsolete region lies in the model's latent space.

For any latent representation $z$, DR defines a region evidence score $e_c(z)$ for each $c\in\mathcal{O}\cup\mathcal{R}$. When the model has a final linear layer (a score head) producing region scores $s_\theta(z)=Wz+b$, the evidence is the corresponding model score $e_c(z) = s_{\theta,c}(z)$. When no explicit score head is available, as in prototype-defined regression or structured prediction, DR instead uses the calibrated latent likelihood $e_c(z) = \log \widehat{p}(z\mid c)$. Thus, $e_c(z)$ always measures the support that the representation gives to region $c$, either through the model's own score head or through calibration-defined prototype geometry.

DR then constructs pairwise obsolete-retained directions for every $(o,r) \in \mathcal{O} \times \mathcal{R}$. If a score head is available, the obsolete-removal direction is induced by the classifier:
\begin{equation}
v_{o,r}
=
\frac{W_o-W_r}{\|W_o-W_r\|_2}
\label{eq:dr-score-removal-direction}
\end{equation}
\simos{where $\{W_o, W_r\} \in \mathbb{R}^d$ are the weight vectors corresponding to the obsolete $o$ and retained $r$ classes, respectively.}
This is the direction in latent space that increases the score of obsolete region $o$ relative to retained region $r$. Moving in the opposite direction, therefore, decreases obsolete evidence relative to that retained alternative. The corresponding retained-transport direction is estimated from the calibrated prototypes:
\begin{equation}
t_{o,r}
=
\frac{\mu_r-\mu_o}{\|\mu_r-\mu_o\|_2+\epsilon}
\label{eq:dr-transport-direction}
\end{equation}
The removal direction $v_{o,r}$ determines which obsolete-supporting component should be stripped from the representation, while $t_{o,r}$ determines where the removed evidence is redirected in retained latent geometry.

For models without an explicit score head, both directions are defined from the prototype axis:
\begin{equation}
v_{o,r}
=
\frac{\mu_o-\mu_r}{\|\mu_o-\mu_r\|_2+\epsilon},
\qquad
t_{o,r}
=
-v_{o,r}
\label{eq:dr-prototype-directions}
\end{equation}
In this case, the same obsolete-retained axis is used in opposite directions: $v_{o,r}$ removes movement toward the obsolete prototype, while $t_{o,r}$ redirects the representation toward the retained prototype.

The offline phase also calibrates residual refinement. This step corrects systematic errors left by the analytic core repair. Since obsolete labels are no longer valid under the revised task, residual refinement is estimated only on retained calibration samples $\mathcal{D}^{\mathcal{R}}_{\mathrm{cal}} = \{(x_i,y_i)\in\mathcal{D}_{\mathrm{cal}} : c(y_i)\in\mathcal{R}\}$. Let $\psi(y_i)\in\Delta_{\mathcal{R}}$ denote the retained target representation for $y_i$. In classification, $\psi(y_i)$ is the one-hot vector over retained classes. In region-based regression, it is the target responsibility vector over retained value regions.

Residual refinement is fitted using the output of DR before any residual correction is applied. Let $\mathcal{T}_{\mathrm{core}}: \mathcal{X} \rightarrow \Delta_{\mathcal{R}}$ define the pre-refinement DR mapping from an input to a retained-region responsibility vector. This is the analytic retained-region repair used at inference after the geometric quantities above have been fitted, but with residual correction disabled. Its construction is given in the online inference phase below. For each retained calibration sample $(x_i,y_i) \in \mathcal{D}^{\mathcal{R}}_{\mathrm{cal}}$, DR computes $q^{\mathrm{core}}_{\mathcal{R}}(x_i) = \mathcal{T}_{\mathrm{core}}(x_i)$. 

The residual error is then defined as the difference between the desired retained target representation (one-hot for classification) and the pre-refinement probability vector $\delta_i^{\mathrm{res}} = \psi(y_i)-q^{\mathrm{core}}_{\mathcal{R}}(x_i)$.

Each residual is assigned to the retained destination selected by the core repair $a_i = \arg\max_{r\in\mathcal{R}} q^{\mathrm{core}}_r(x_i)$. For each retained destination $a \in \mathcal{R}$, let $\mathcal{I}_a=\{i:a_i=a\}$ denote the set of retained calibration samples assigned to destination $a$. DR estimates the mean residual for destination $a$ as:
\begin{equation}
\bar{\delta}_a
=
\frac{1}{|\mathcal{I}_a|}
\sum_{i\in\mathcal{I}_a}
\delta_i^{\mathrm{res}}
\label{eq:dr-mean-residual}
\end{equation}
To avoid over-correcting noisy destinations, this residual is reliability-adjusted:
\begin{equation}
\eta_a
=
\frac{\|\bar{\delta}_a\|_2^2}
{\|\bar{\delta}_a\|_2^2
+
\frac{1}{|\mathcal{I}_a|}
\operatorname{tr}(\widehat{\Sigma}_{\delta,a})
+
\epsilon}
\label{eq:dr-residual-shrinkage}
\end{equation}
where $\widehat{\Sigma}_{\delta,a}$ is the empirical covariance of the residuals assigned to destination $a$. The stored residual prototype is $\widetilde{\delta}_a = \eta_a\bar{\delta}_a$. If no calibration sample selects destination $a$, DR sets $\widetilde{\delta}_a = 0$.

Finally, DR estimates the amount of obsolete responsibility that is typical for retained calibration samples. First, the all-region responsibility is defined as:
\begin{equation}
\widetilde{q}_c(x)
=
\frac{\exp(e_c(\phi_\theta(x)))}
{\sum_{k\in\mathcal{O}\cup\mathcal{R}}
\exp(e_k(\phi_\theta(x)))},
\qquad
c\in\mathcal{O}\cup\mathcal{R}
\label{eq:dr-all-region-responsibility}
\end{equation}
The total obsolete responsibility is $\widetilde{q}_{\mathcal{O}}(x) = \sum_{o\in\mathcal{O}} \widetilde{q}_o(x)$. The retained-background obsolete responsibility is then:
\begin{equation}
\bar{q}^{\mathrm{cal}}_{\mathcal{O},\mathcal{R}}
=
\frac{1}{|\mathcal{D}^{\mathcal{R}}_{\mathrm{cal}}|}
\sum_{(x_i,y_i)\in\mathcal{D}^{\mathcal{R}}_{\mathrm{cal}}}
\widetilde{q}_{\mathcal{O}}(x_i)
\label{eq:dr-retained-background-obsolete-responsibility}
\end{equation}
This quantity provides the calibration baseline used to decide how strongly residual refinement should be applied at inference. For region-based regression, any scalar decoder coefficient $\kappa\in[0,1]$ is also selected on calibration data. The offline phase returns the fixed DR calibration object $\Lambda_{\mathrm{DR}}=\{\mu_c,\sigma_c^2,v_{o,r},t_{o,r},\widetilde{\delta}_a,\bar{q}^{\mathrm{cal}}_{\mathcal{O},\mathcal{R}},\kappa\}$,
whose components remain fixed during deployment.

\subsection{Online Inference}
\label{sec:DR-online-inference}

At inference time, DR repairs each input independently using the fixed calibration object $\Lambda_{\mathrm{DR}}$. Given an input $x$ and the original latent representation $z=\phi_\theta(x)$, DR evaluates the evidence scores $e_c(z)$ for all regions and defines the repair mask:
\begin{equation}
\mathcal{M}(x)
=
\mathbf{1}
\left[
\arg\max_{c\in\mathcal{O}\cup\mathcal{R}} e_c(z)
\in\mathcal{O}
\right]
\label{eq:dr-repair-mask}
\end{equation}
The mask identifies whether the model currently assigns the input to an obsolete region. This keeps the core intervention targeted: the retained evidence is analytically replaced only when obsolete evidence is active.

When $\mathcal{M}(x)=1$, DR selects the strongest obsolete region and the strongest retained alternative:
\begin{equation}
o^{*}
=
\arg\max_{o\in\mathcal{O}}e_o(z),
\qquad
r^{*}
=
\arg\max_{r\in\mathcal{R}}e_r(z)
\label{eq:dr-active-regions}
\end{equation}
It then removes the obsolete-supporting component from $z$:
\begin{equation}
z^{-}
=
z-\alpha(x)v_{o^{*},r^{*}}
\label{eq:dr-stripped-latent}
\end{equation}
For score-based heads, the removal coefficient is the smallest non-negative movement along $v_{o^{*},r^{*}}$ that removes the active obsolete-retained margin up to margin $m \geq 0$:
\begin{equation}
\alpha(x)
=
\max
\left\{
\frac{e_{o^{*}}(z)-e_{r^{*}}(z)+m}
{\|W_{o^{*}}-W_{r^{*}}\|_2},
0
\right\}
\label{eq:dr-score-removal-coefficient}
\end{equation}
For prototype-based repair, where no score-head margin is available, the removal coefficient is the positive projection of $z$ along the obsolete-retained prototype axis:
\begin{equation}
\alpha(x)
=
\max
\left\{
\langle z-\mu_{r^{*}},v_{o^{*},r^{*}}\rangle,
0
\right\}
\label{eq:dr-prototype-removal-coefficient}
\end{equation}
If $\mathcal{M}(x)=0$, DR sets $z^{-}=z$ and preserves the original retained evidence.

The stripped representation is then evaluated under each retained prototype by computing $B_r(z^{-})=\log \widehat{p}(z^{-}\mid r)$ for every $r\in\mathcal{R}$. These scores measure which retained region best explains the representation after obsolete-supporting evidence has been removed. Since Bayes scores and model scores may lie on different numerical scales, DR aligns the retained Bayes-score vector to the scale of the retained model-evidence vector before combining them. For retained-region vectors $a_{\mathcal{R}}$ and $u_{\mathcal{R}}$, define
\begin{equation}
\operatorname{Align}(a_{\mathcal{R}};u_{\mathcal{R}})_r
=
\bar{u}
+
\frac{s_u}{s_a+\epsilon}
(a_r-\bar{a}),
\quad r\in\mathcal{R}
\label{eq:dr-scale-alignment}
\end{equation}
where $\bar{a},s_a$ and $\bar{u},s_u$ are the mean and standard deviation of $a_{\mathcal{R}}$ and $u_{\mathcal{R}}$, respectively. DR then uses the aligned retained Bayes scores $\widehat{B}_{\mathcal{R}}(z^{-})=\operatorname{Align}(B_{\mathcal{R}}(z^{-});e_{\mathcal{R}}(z))$.

DR next estimates which retained region benefits most specifically from the removed obsolete evidence. Let $\rho(x) = \|z-z^{-}\|_2$ denote the magnitude of the stripped component. For each retained candidate $r$, DR transports the stripped representation toward that retained region $z^{+}_r = z^{-} + \rho(x)t_{o^{*},r}$. The contrastive gain for the retained region $r$ is:
\begin{equation}
G_r(x)
=
\left[B_r(z^{+}_r)-B_r(z^{-})\right]
-
\max_{k\in\mathcal{R},\,k\neq r}
\left[B_k(z^{+}_r)-B_k(z^{-})\right]
\label{eq:dr-contrastive-gain}
\end{equation}
Only positive gains are retained $G^{+}_r(x) = \max\{G_r(x),0\}$. The gain is therefore high only when transport toward $r$ improves the likelihood of $r$ more than it improves competing retained regions. The aligned gain is:
\begin{equation}
\widehat{G}^{+}_r(x)
=
\operatorname{Align}(G^{+}_{\mathcal{R}}(x);e_{\mathcal{R}}(z))_r
\label{eq:dr-aligned-gain}
\end{equation}
The repaired retained evidence proposed by the core map is:
\begin{equation}
\widetilde{e}_r(x)
=
(1-\beta)e_r(z)
+
\beta\widehat{B}_r(z^{-})
+
\lambda\widehat{G}^{+}_r(x),
\qquad
r\in\mathcal{R}
\label{eq:dr-repaired-retained-evidence}
\end{equation}
where $\beta$ controls the contribution of retained prototype evidence and $\lambda$ controls the contribution of the contrastive gain. The core retained evidence is then:
\begin{equation}
e^{\mathrm{core}}_r(x)
=
\begin{cases}
\widetilde{e}_r(x), & \mathcal{M}(x)=1,\\
e_r(z), & \mathcal{M}(x)=0,
\end{cases}
\qquad
r\in\mathcal{R}
\label{eq:dr-core-retained-evidence}
\end{equation}
Normalising over retained regions gives the core retained responsibility vector:
%
%
Applying a softmax over these retained-region scores gives $q^{\mathrm{core}}_\mathcal{R}(x)=\mathcal{T}_{\mathrm{core}}(x)$, the pre-refinement probability vector used during calibration to estimate the residual prototypes. DR then applies the residual correction estimated during calibration. The correction for the current input is the responsibility-weighted combination of stored residual prototypes:
\begin{equation}
\Delta(x)
=
\sum_{a\in\mathcal{R}}
q^{\mathrm{core}}_a(x)\widetilde{\delta}_a
\label{eq:dr-residual-correction}
\end{equation}
The correction is gated by the input's obsolete responsibility relative to the retained calibration background:
\begin{equation}
g_{\mathrm{obs}}(x)
=
\min
\left\{
1,
\frac{\widetilde{q}_{\mathcal{O}}(x)}
{\bar{q}^{\mathrm{cal}}_{\mathcal{O},\mathcal{R}}+\epsilon}
\right\}
\label{eq:dr-obsolete-gate}
\end{equation}
The final retained responsibility vector is:
\begin{equation}
q^{*}_{\mathcal{R}}(x)
=
\Pi_{\Delta_{\mathcal{R}}}
\left[
q^{\mathrm{core}}_{\mathcal{R}}(x)
+
g_{\mathrm{obs}}(x)\Delta(x)
\right]
\label{eq:dr-refined-responsibility}
\end{equation}
where $\Pi_{\Delta_{\mathcal{R}}}$ projects onto the retained simplex. For classification, this projection is simplex normalisation. For unconstrained regression targets, the corresponding projection may be the identity map or the appropriate retained-output projection.

For score-based classification, the final retained responsibilities are converted back into retained evidence as $e^{*}_r(x)=\log(q^{*}_r(x)+\epsilon)$ for each $r\in\mathcal{R}$. Obsolete regions are assigned a finite evidence floor:
\begin{equation}
e^{*}_o(x)
=
2\min_{r\in\mathcal{R}} e^{*}_r(x)
-
\log\!\left(1+|\mathcal{R}|\epsilon\right),
\quad
o\in\mathcal{O}
\label{eq:dr-obsolete-evidence-floor}
\end{equation}
This makes obsolete evidence non-competitive with retained evidence in the repaired raw output, rather than merely removing obsolete labels with an external mask. The repaired classification distribution is:
\begin{equation}
p^{*}_c(x)
=
\frac{\exp(e^{*}_c(x))}
{\sum_{k\in\mathcal{O}\cup\mathcal{R}}\exp(e^{*}_k(x))},
\qquad
c\in\mathcal{O}\cup\mathcal{R}
\label{eq:dr-final-classification-distribution}
\end{equation}
and the repaired prediction is $\widehat{y}^{*}(x) = \arg\max_{r\in\mathcal{R}} p^{*}_r(x)$. The maximisation is restricted to $\mathcal{R}$ because obsolete regions are no longer valid outputs under the revised task.

For target-region regression or structured prediction, the retained responsibilities are decoded through the retained output support $\widehat{y}_{\mathrm{reg}}(x) = \sum_{r\in\mathcal{R}} q^{*}_r(x)v_r$ where $v_r$ is the representative output value for the retained region $r$. DR also projects the original prediction onto the retained output support $\widehat{y}_{\mathrm{proj}}(x) = \Pi_{\mathcal{Y}_{\mathcal{R}}} [\widehat{y}_{\theta}(x)]$. The final repaired regression prediction is:
\begin{equation}
\widehat{y}^{*}(x)
=
\Pi_{\mathcal{Y}_{\mathcal{R}}}
\left[
\kappa \widehat{y}_{\mathrm{reg}}(x)
+
(1-\kappa)\widehat{y}_{\mathrm{proj}}(x)
\right]
\label{eq:dr-final-regression-output}
\end{equation}
When the obsolete region is semantic rather than a numeric output interval, the repaired retained responsibilities are instead used to construct a repaired latent representation, which is decoded by the original task head.

The final step is latent repurposing. This ensures that downstream modules inspecting the representation receive a latent vector consistent with the repaired task. For score-based heads, DR constructs a latent whose retained head scores match the repaired retained evidence. Let $W_{\mathcal{R}}$ and $b_{\mathcal{R}}$ denote the retained rows of the score head. The score-consistent component is $z^{*}_{\mathrm{row}} = W_{\mathcal{R}}^{\top} \left(W_{\mathcal{R}}W_{\mathcal{R}}^{\top}\right)^{-1}\left(e^{*}_{\mathcal{R}}(x)-b_{\mathcal{R}}\right)$ which gives the minimum-norm latent component whose retained-head scores match the repaired evidence. Since multiple latent vectors can produce the same retained scores, DR fills the remaining nullspace using retained-region geometry:
\begin{equation}
z^{*}_{\mathrm{null}}
=
\sum_{r\in\mathcal{R}}
q^{*}_r(x)
\left(I-W_{\mathcal{R}}^{\dagger}W_{\mathcal{R}}\right)
\mu_r
\label{eq:dr-nullspace-latent}
\end{equation}
The repaired latent is then $z^{*}(x) = z^{*}_{\mathrm{row}} + z^{*}_{\mathrm{null}}$. The head-visible component is therefore consistent with the repaired retained decision, while the remaining degrees of freedom are reconstructed from retained prototype geometry.

When no score head is available, exact score consistency cannot be imposed. DR instead returns the retained posterior barycentre $z^{*}(x) = \sum_{r\in\mathcal{R}} q^{*}_r(x)\mu_r$. Thus, in all task settings, DR removes obsolete regions from the repaired output space and reconstructs the returned representation from retained-region structure, while the pretrained parameters $\theta$ remain fixed throughout calibration and inference.

\section{Experimental Evaluation}
\label{sec:Experimental Evaluation}
We evaluate Deep Repurposing in a comprehensive series of experiments comparing against state-of-the-art approaches across multiple tasks and models. The 

\textbf{Comparative Approaches.} We compare Deep Repurposing against the base model, SSD~\citep{foster2024fast}, LFSSD~\citep{foster2024loss}, Boundary Expand~\citep{chen2023boundary}, Boundary Shrink~\citep{chen2023boundary}, Class Vector Unlearning~\citep{kim2026exploring}, Fictitious Class Removal~\citep{hakemi2025post},  Fisher Forgetting~\citep{golatkar2020eternal}, finetuning the base model, and retraining the base model to represent a range of approaches in unlearning and model editing that allow for a fair comparison.

\textbf{Datasets.} 
Adopting the procedure from recent research~\citep{foster2024fast}, we evaluate all approaches on the CIFAR10~\citep{krizhevsky2009learning}, CIFAR100~\citep{krizhevsky2009learning}, Oxford Flowers~\citep{Nilsback08}, ImageNet~\citep{deng2009imagenet},
FishNet~\citep{khan2023fishnet},
MSL~\citep{wagstaff2018deep},
UTKFace~\citep{zhifei2017cvpr}, Bike Sharing~\citep{fanaee2014event}, and ADE20K~\citep{zhou2017scene} datasets, selected to cover a diverse set of domains and tasks.

Implementation details, theoretical analysis, experimental setup, additional experiments, and extended discussion are provided in Appendices~\ref{sec:appendix-Implementation Details},~\ref{sec:appendix-Theoretical Analysis},~\ref{sec:appendix-Additional Experiments},~\ref{sec:appendix-Additional Discussion}, and~\ref{sec:appendix-Experimental Setup}. Appendix~\ref{sec:appendix-case-studies} provides operational case studies on Mars surface classification, deep-sea fish classification, and semantic-region regression, illustrating deployment-driven obsolescence from mission-phase changes, unreliable categories, and application-defined semantic regions.

\begin{figure*}[tb]
    \centering
    \includegraphics[width=\linewidth]{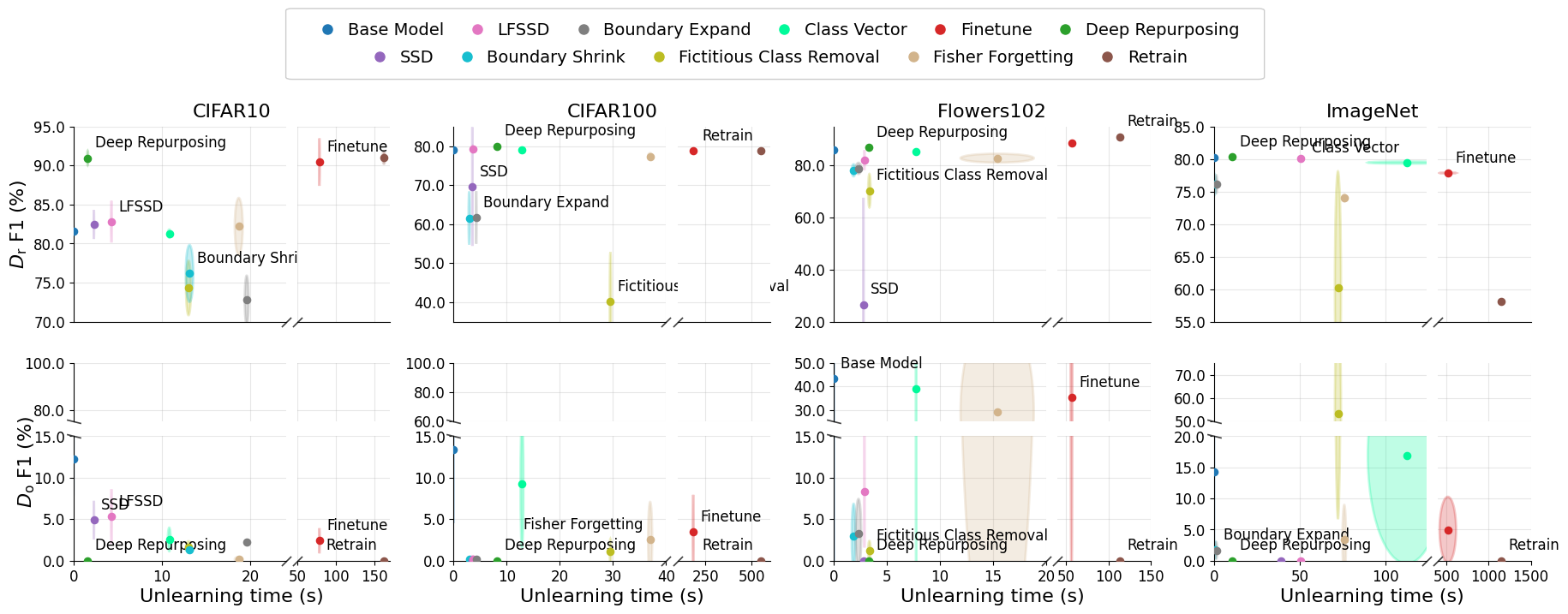}
    \caption{Classification repair trade-off across all datasets. Each panel compares retained-set F1 on $D_{\mathrm{r}}$ and obsolete-set F1 on $D_{\mathrm{o}}$ against repair time.}
    \label{fig:classification_do_dr_acc}
\end{figure*}

\subsection{Core Results}
\label{sec:Core Results}

In the core experiments, we evaluate whether Deep Repurposing achieves our main objective: removing obsolete behaviour while preserving/improving retained task utility. For each dataset, we partition the test set into retained $D_{\mathrm{r}}$ and obsolete samples $D_{\mathrm{o}}$. We then report upon metrics covering retained performance, obsolete-output suppression, predictive reliability, representation-level repair, and computational cost. This allows us to distinguish methods that merely reduce obsolete predictions from those that preserve the retained task while also repairing obsolete behaviour in the model’s output and latent spaces.

Results across the classification datasets are summarised in Figure~\ref{fig:classification_do_dr_acc}. DR consistently occupies the desired trade-off region, preserving or improving retained performance by up to $\approx10$ F1 over the base model, while suppressing obsolete behaviour at low repair cost. Competing methods either sacrifice retained utility or require substantially greater optimisation cost.

On CIFAR10, DR matches retraining while requiring $1.62 \pm 0.04$ seconds versus $162.17 \pm 1.12$ seconds. On CIFAR100, it preserves base-model accuracy, improves F1 from $0.79$ to $0.80$, and completes repair in $8.25 \pm 0.24$ seconds; on Flowers102, it similarly improves retained accuracy and F1 in $3.33 \pm 0.28$ seconds. At ImageNet scale, retraining requires $1167.15 \pm 48.04$ seconds ($\approx20$ minutes) yet reaches only $\approx58\%$ accuracy, compared with $\approx81\%$ for both the base model and DR.

\begin{figure*}[tb]
    \centering
    \includegraphics[width=\linewidth]{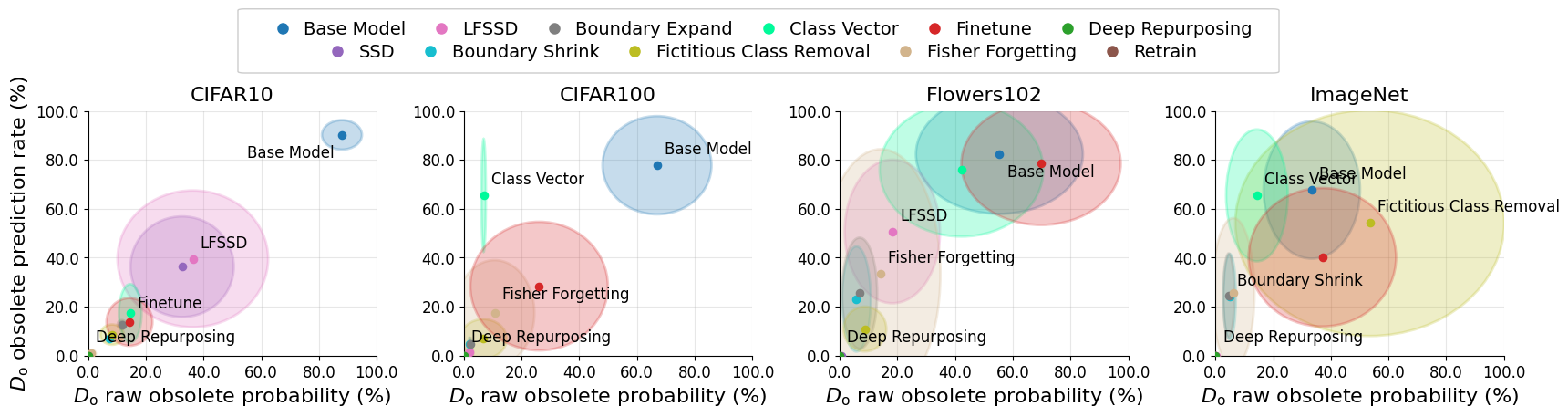}
    \caption{Obsolete-output suppression across classification datasets. Each panel compares the raw probability assigned to obsolete classes on $D_{\mathrm{o}}$ against the obsolete prediction rate on $D_{\mathrm{o}}$.}
    \label{fig:classification_obs_rate_prob}
\end{figure*}

Figure~\ref{fig:classification_obs_rate_prob} examines obsolete-output behaviour on $D_{\mathrm{o}}$, where the desired outcome is low obsolete prediction rate and raw obsolete probability. DR consistently lies near the origin, showing that it suppresses both obsolete predictions and residual probability mass rather than merely changing the final label. In contrast, boundary-based methods often trade suppression for retained accuracy, while class vector editing and fine-tuning leave non-trivial obsolete behaviour in several settings. Retraining provides strong repair at substantially greater computational cost. Overall, DR suppresses obsolete behaviour while preserving retained utility, without gradient-based optimisation of the original model.

We next evaluate whether the same repurposing principle extends beyond discrete classification labels. Table~\ref{tab:regression-main-results} reports results on UTKFace, where obsolescence is defined as a target-associated output region rather than a class label. The base model assigns a large fraction of obsolete-region samples back to the obsolete output region, with an obsolete prediction rate of $0.61 \pm 0.01$. Fine-tuning improves retained MAE but only partially suppresses the obsolete region, reducing the obsolete rate to $0.38 \pm 0.03$. Retraining removes obsolete predictions entirely and achieves the lowest retained MAE, but requires substantially more computation. In contrast, DR reduces the obsolete prediction rate to $0.00 \pm 0.00$ while preserving the retained-set error of the base model, with $D_{\mathrm{r}}$ MAE changing only from $5.69 \pm 0.04$ to $5.68 \pm 0.08$. This repair is also lightweight, requiring $16.00 \pm 2.81$ seconds compared with $1208.13 \pm 31.28$ seconds for retraining.

\begin{table}[tb]
\centering
\caption{Regression trade-off results on the UTKFace dataset where the obsolete region is tied to the target, comparing retained set error, obsolete-set prediction rate and repair time.}
\label{tab:regression-main-results}
\setlength{\tabcolsep}{3pt}
\begin{tabular}{lccc}
\toprule
Method & $D_{\mathrm r}$ MAE $\downarrow$ & $D_{\mathrm o}$ Obs. Rate $\downarrow$ & Time (s) $\downarrow$ \\
\midrule
Base Model & $5.69 \pm 0.04$ & $0.61 \pm 0.01$ & -- \\
Finetune & $5.28 \pm 0.07$ & $0.38 \pm 0.03$ & $282.05 \pm 100.70$ \\
Retrain & $4.46 \pm 0.03$ & $0.00 \pm 0.00$ & $1208.13 \pm 31.28$ \\
DR & $5.68 \pm 0.08$ & $0.00 \pm 0.00$ & $16.00 \pm 2.81$ \\
\bottomrule
\end{tabular}
\end{table}

Table~\ref{tab:bikesharing-semantic-main} considers semantic-region regression obsolescence, treating \texttt{light\_precip} as obsolete and \texttt{clear}, \texttt{mist}, and \texttt{heavy\_precip} as retained. DR preserves predictive quality, reducing MAE from $16.34$ to $16.27$ on $D_{\mathrm r}$ and from $14.53$ to $13.72$ on $D_{\mathrm o}$, with RMSE on $D_{\mathrm o}$ decreasing from $35.92$ to $31.94$. It also reduces reliance on the obsolete region: responsibility/winner rate fall from $0.430/0.447$ to $0.283/0.237$ on $D_{\mathrm r}$ and from $0.562/0.589$ to $0.376/0.304$ on $D_{\mathrm o}$. Thus, DR suppresses obsolete semantic structure without sacrificing the underlying regression task.

\begin{table}[tb]
\centering
\setlength{\tabcolsep}{3.5pt}
\caption{Semantic-region repurposing results on the Bike Sharing dataset. 
The obsolete region is \texttt{light\_precip}; retained regions are 
\texttt{clear}, \texttt{mist}, and \texttt{heavy\_precip}. 
$q_{\mathrm{obs}}$ denotes obsolete-region responsibility, and Obs. win 
denotes the fraction of samples assigned to the obsolete region.}
\label{tab:bikesharing-semantic-main}

\begin{tabular}{@{}llccccc@{}}
\toprule
& &
\multicolumn{3}{c}{\textbf{Predictive performance}} &
\multicolumn{2}{c}{\textbf{Semantic-region performance}} \\
\cmidrule(lr){3-5}
\cmidrule(l){6-7}
Method & Split
& MAE $\downarrow$
& RMSE $\downarrow$
& $R^2$ $\uparrow$
& $q_{\mathrm{obs}}$ $\downarrow$
& Obs. win $\downarrow$ \\
\midrule

Base & $D_{\mathrm r}$
& $16.34 \pm 0.36$
& $32.12 \pm 1.12$
& $0.956 \pm 0.002$
& $0.430 \pm 0.120$
& $0.447 \pm 0.131$ \\

Base & $D_{\mathrm o}$
& $14.53 \pm 1.50$
& $35.92 \pm 3.99$
& $0.897 \pm 0.019$
& $0.562 \pm 0.154$
& $0.589 \pm 0.177$ \\

DR & $D_{\mathrm r}$
& $16.27 \pm 0.48$
& $31.35 \pm 1.62$
& $0.958 \pm 0.003$
& $0.283 \pm 0.206$
& $0.237 \pm 0.287$ \\

DR & $D_{\mathrm o}$
& $13.72 \pm 1.74$
& $31.94 \pm 4.68$
& $0.921 \pm 0.019$
& $0.376 \pm 0.255$
& $0.304 \pm 0.375$ \\

\bottomrule
\end{tabular}
\end{table}

\begin{figure*}[tb]
    \centering
    \includegraphics[width=\linewidth]{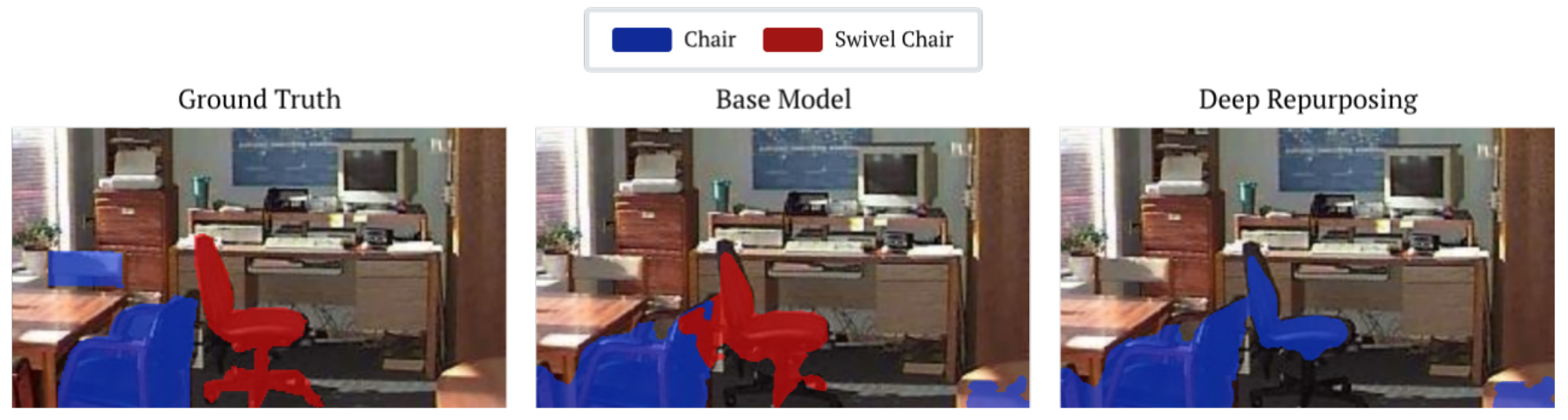}
    \caption{ADE20K segmentation example showing taxonomy-level repurposing from obsolete chair subtypes \{\texttt{armchair}, \texttt{seat}, \texttt{swivel chair}, \texttt{stool}\} to the retained \texttt{chair} class.}
    \label{fig:segmentation_example}
\end{figure*}

Figure~\ref{fig:segmentation_example} provides a qualitative example of taxonomy-level repurposing in semantic segmentation. The ground truth contains both the retained \texttt{chair} class and the obsolete \texttt{swivel chair} subtype. The base model preserves this fine-grained distinction and predicts the obsolete subtype in red. After applying DR, the same region is reassigned to the retained \texttt{chair} class in blue, while the surrounding retained segmentation structure is largely preserved. This illustrates the intended behaviour under task obsolescence: obsolete semantic labels are no longer treated as valid outputs, but their spatial evidence is redirected to the retained taxonomy.

We further validate DR across three real deployment settings (Appendix~\ref{sec:appendix-case-studies}). On a Mars-rover classifier, DR eliminates 100\% of obsolete predictions under both hardware-induced (Table~\ref{tab:case-study1-drt_results}) and taxonomy-induced (Table~\ref{tab:case-study1-horizon_results}) obsolescence, improving retained macro-F1 by +0.097 and +0.095, respectively, in under $\approx15$ seconds. On a deep-sea fish classifier, an uncertainty-triggered variant (Table~\ref{tab:case-study2-uncertainty_results}) similarly removes all obsolete predictions in $\approx7.53$ seconds, with a smaller but consistent retained-accuracy gain (+0.009) but where closely related classes (to the obsolete class) see an accuracy gain of $\approx0.03$ on this harder 83-class task.

\section{Conclusions}
\label{sec:Conclusions}

We introduced Deep Repurposing, a post-hoc framework for adapting models when part of the task space becomes obsolete. Across classification, regression, and segmentation, DR suppresses obsolete behaviour while preserving retained utility, demonstrating that obsolete structure can be redirected rather than simply masked, forgotten, or retrained away. DR relies on a well-specified obsolete region and representative calibration data, with performance depending on latent separability. Future work will extend DR to foundation models, evaluate on multi-modal models, and enable automatic detection of obsolete behaviours.


\bibliography{iclr2027_conference}

@inproceedings{wagstaff2018deep,
  title={Deep mars: Cnn classification of mars imagery for the pds imaging atlas},
  author={Wagstaff, Kiri and Lu, You and Stanboli, Alice and Grimes, Kevin and Gowda, Thamme and Padams, Jordan},
  booktitle={Proceedings of the AAAI Conference on Artificial Intelligence},
  volume={32},
  year={2018}
}

@inproceedings{NIPS2017_3f5ee243,
 author = {Vaswani, Ashish and Shazeer, Noam and Parmar, Niki and Uszkoreit, Jakob and Jones, Llion and Gomez, Aidan N and Kaiser, \L ukasz and Polosukhin, Illia},
 booktitle = {Advances in Neural Information Processing Systems},
 editor = {I. Guyon and U. Von Luxburg and S. Bengio and H. Wallach and R. Fergus and S. Vishwanathan and R. Garnett},
 publisher = {Curran Associates, Inc.},
 title = {Attention is All you Need},
 url = {https://proceedings.neurips.cc/paper_files/paper/2017/file/3f5ee243547dee91fbd053c1c4a845aa-Paper.pdf},
 volume = {30},
 year = {2017}
}

@inproceedings{mcmahan2017communication,
  title={Communication-efficient learning of deep networks from decentralized data},
  author={McMahan, Brendan and Moore, Eider and Ramage, Daniel and Hampson, Seth and y Arcas, Blaise Aguera},
  booktitle={Artificial intelligence and statistics},
  pages={1273--1282},
  year={2017},
  organization={Pmlr}
}

@article{kairouz2021advances,
  title={Advances and open problems in federated learning},
  author={Kairouz, Peter and McMahan, H Brendan},
  journal={Foundations and trends in machine learning},
  volume={14},
  number={1-2},
  pages={1--210},
  year={2021},
  publisher={Emerald Publishing Limited}
}

@article{bonawitz2019towards,
  title={Towards federated learning at scale: System design},
  author={Bonawitz, Keith and Eichner, Hubert and Grieskamp, Wolfgang and Huba, Dzmitry and Ingerman, Alex and Ivanov, Vladimir and Kiddon, Chloe and Kone{\v{c}}n{\`y}, Jakub and Mazzocchi, Stefano and McMahan, Brendan and others},
  journal={Proceedings of machine learning and systems},
  volume={1},
  pages={374--388},
  year={2019}
}

@book{joint2008handbook,
  title={Handbook on constructing composite indicators: methodology and user guide},
  author={Joint Research Centre},
  year={2008},
  publisher={OECD publishing}
}

@article{sculley2015hidden,
  title={Hidden technical debt in machine learning systems},
  author={Sculley, David and Holt, Gary and Golovin, Daniel and Davydov, Eugene and Phillips, Todd and Ebner, Dietmar and Chaudhary, Vinay and Young, Michael and Crespo, Jean-Francois and Dennison, Dan},
  journal={Advances in neural information processing systems},
  volume={28},
  year={2015}
}

@inproceedings{chen2023boundary,
  title={Boundary unlearning: Rapid forgetting of deep networks via shifting the decision boundary},
  author={Chen, Min and Gao, Weizhuo and Liu, Gaoyang and Peng, Kai and Wang, Chen},
  booktitle={Proceedings of the IEEE/CVF Conference on Computer Vision and Pattern Recognition},
  pages={7766--7775},
  year={2023}
}

@inproceedings{foster2024fast,
  title={Fast machine unlearning without retraining through selective synaptic dampening},
  author={Foster, Jack and Schoepf, Stefan and Brintrup, Alexandra},
  booktitle={Proceedings of the AAAI conference on artificial intelligence},
  volume={38},
  pages={12043--12051},
  year={2024}
}

@inproceedings{foster2024loss,
  title={Loss-Free Machine Unlearning},
  author={Foster, Jack and Schoepf, Stefan and Brintrup, Alexandra},
  booktitle={The Second Tiny Papers Track at ICLR 2024},
  year={2024}
}

@article{kim2026exploring,
  title={Exploring and Leveraging Class Vectors for Classifier Editing},
  author={Kim, Jaeik and Do, Jaeyoung},
  journal={Advances in Neural Information Processing Systems},
  volume={38},
  pages={54894--54927},
  year={2026}
}

@article{hakemi2025post,
  title={Post-hoc spurious correlation neutralization with single-weight fictitious class unlearning},
  author={Hakemi, Shahin and Akhtar, Naveed and Mubashar Hassan, Ghulam and Mian, Ajmal},
  journal={arXiv e-prints},
  pages={arXiv--2501},
  year={2025}
}

@inproceedings{golatkar2020eternal,
  title={Eternal sunshine of the spotless net: Selective forgetting in deep networks},
  author={Golatkar, Aditya and Achille, Alessandro and Soatto, Stefano},
  booktitle={Proceedings of the IEEE/CVF conference on computer vision and pattern recognition},
  pages={9304--9312},
  year={2020}
}

@inproceedings{kristiadi2020being,
  title={Being Bayesian, Even Just a Bit, Fixes Overconfidence in {ReLU} Networks},
  author={Kristiadi, Agustinus and Hein, Matthias and Hennig, Philipp},
  booktitle={Proceedings of the 37th International Conference on Machine Learning},
  pages={5436--5446},
  year={2020},
  volume={119},
  publisher={PMLR}
}

@article{krizhevsky2009learning,
  title={Learning multiple layers of features from tiny images},
  author={Krizhevsky, Alex and Hinton, Geoffrey and others},
  year={2009},
  publisher={Toronto, ON, Canada}
}

@inproceedings{deng2009imagenet,
  title={Imagenet: A large-scale hierarchical image database},
  author={Deng, Jia and Dong, Wei and Socher, Richard and Li, Li-Jia and Li, Kai and Fei-Fei, Li},
  booktitle={2009 IEEE conference on computer vision and pattern recognition},
  pages={248--255},
  year={2009},
  organization={Ieee}
}

@inproceedings{zhifei2017cvpr,
  title={Age Progression/Regression by Conditional Adversarial Autoencoder},
  author={Zhang, Zhifei and Song, Yang and Qi, Hairong},
  booktitle={IEEE Conference on Computer Vision and Pattern Recognition (CVPR)},
  year={2017},
  organization={IEEE}
}

@article{fanaee2014event,
  title={Event labeling combining ensemble detectors and background knowledge},
  author={Fanaee-T, Hadi and Gama, Joao},
  journal={Progress in Artificial Intelligence},
  volume={2},
  number={2},
  pages={113--127},
  year={2014},
  publisher={Springer}
}

@inproceedings{zhou2017scene,
  title={Scene parsing through ade20k dataset},
  author={Zhou, Bolei and Zhao, Hang and Puig, Xavier and Fidler, Sanja and Barriuso, Adela and Torralba, Antonio},
  booktitle={Proceedings of the IEEE conference on computer vision and pattern recognition},
  pages={633--641},
  year={2017}
}

@article{zhang2023review,
  title={A review on machine unlearning},
  author={Zhang, Haibo and Nakamura, Toru and Isohara, Takamasa and Sakurai, Kouichi},
  journal={SN Computer Science},
  volume={4},
  number={4},
  pages={337},
  year={2023},
  publisher={Springer}
}

@article{ilharco2022editing,
  title={Editing models with task arithmetic},
  author={Ilharco, Gabriel and Ribeiro, Marco Tulio and Wortsman, Mitchell and Gururangan, Suchin and Schmidt, Ludwig and Hajishirzi, Hannaneh and Farhadi, Ali},
  journal={arXiv preprint arXiv:2212.04089},
  year={2022}
}

@article{mensink2013distance,
  title={Distance-based image classification: Generalizing to new classes at near-zero cost},
  author={Mensink, Thomas and Verbeek, Jakob and Perronnin, Florent and Csurka, Gabriela},
  journal={IEEE transactions on pattern analysis and machine intelligence},
  volume={35},
  number={11},
  pages={2624--2637},
  year={2013},
  publisher={IEEE}
}

@article{snell2017prototypical,
  title={Prototypical networks for few-shot learning},
  author={Snell, Jake and Swersky, Kevin and Zemel, Richard},
  journal={Advances in neural information processing systems},
  volume={30},
  year={2017}
}

@inproceedings{kim2018interpretability,
  title={Interpretability beyond feature attribution: Quantitative testing with concept activation vectors (tcav)},
  author={Kim, Been and Wattenberg, Martin and Gilmer, Justin and Cai, Carrie and Wexler, James and Viegas, Fernanda and others},
  booktitle={International conference on machine learning},
  pages={2668--2677},
  year={2018},
  organization={PMLR}
}

@article{khan2022transformers,
  title={Transformers in vision: A survey},
  author={Khan, Salman and Naseer, Muzammal and Hayat, Munawar and Zamir, Syed Waqas and Khan, Fahad Shahbaz and Shah, Mubarak},
  journal={ACM computing surveys (CSUR)},
  volume={54},
  number={10s},
  pages={1--41},
  year={2022},
  publisher={ACM New York, NY}
}

@inproceedings{rudin2022learning,
  title={Learning to walk in minutes using massively parallel deep reinforcement learning},
  author={Rudin, Nikita and Hoeller, David and Reist, Philipp and Hutter, Marco},
  booktitle={Conference on robot learning},
  pages={91--100},
  year={2022},
  organization={PMLR}
}

@article{cimurs2021goal,
  title={Goal-driven autonomous exploration through deep reinforcement learning},
  author={Cimurs, Reinis and Suh, Il Hong and Lee, Jin Han},
  journal={IEEE Robotics and Automation Letters},
  volume={7},
  number={2},
  pages={730--737},
  year={2021},
  publisher={IEEE}
}

@inproceedings{guo2020certified,
  title={Certified data removal from machine learning models},
  author={Guo, Chuan and Goldstein, Tom and Hannun, Awni and Van Der Maaten, Laurens},
  booktitle={Proceedings of the 37th International Conference on Machine Learning},
  pages={3832--3842},
  year={2020}
}

@article{parisi2019continual,
  title={Continual lifelong learning with neural networks: A review},
  author={Parisi, German I and Kemker, Ronald and Part, Jose L and Kanan, Christopher and Wermter, Stefan},
  journal={Neural networks},
  volume={113},
  pages={54--71},
  year={2019},
  publisher={Elsevier}
}

@inproceedings{khan2023fishnet,
  title={Fishnet: A large-scale dataset and benchmark for fish recognition, detection, and functional trait prediction},
  author={Khan, Faizan Farooq and Li, Xiang and Temple, Andrew J and Elhoseiny, Mohamed},
  booktitle={Proceedings of the IEEE/CVF international conference on computer vision},
  pages={20496--20506},
  year={2023}
}

@inproceedings{tan2019efficientnet,
  title={Efficientnet: Rethinking model scaling for convolutional neural networks},
  author={Tan, Mingxing and Le, Quoc},
  booktitle={International conference on machine learning},
  pages={6105--6114},
  year={2019},
  organization={PMLR}
}

@inproceedings{tan2021efficientnetv2,
  title={Efficientnetv2: Smaller models and faster training},
  author={Tan, Mingxing and Le, Quoc},
  booktitle={International conference on machine learning},
  pages={10096--10106},
  year={2021},
  organization={PMLR}
}

@inproceedings{he2016deep,
  title={Deep residual learning for image recognition},
  author={He, Kaiming and Zhang, Xiangyu and Ren, Shaoqing and Sun, Jian},
  booktitle={Proceedings of the IEEE conference on computer vision and pattern recognition},
  pages={770--778},
  year={2016}
}

@article{dosovitskiy2020image,
  title={An image is worth 16x16 words: Transformers for image recognition at scale},
  author={Dosovitskiy, Alexey and Beyer, Lucas and Kolesnikov, Alexander and Weissenborn, Dirk and Zhai, Xiaohua and Unterthiner, Thomas and Dehghani, Mostafa and Minderer, Matthias and Heigold, Georg and Gelly, Sylvain and others},
  journal={arXiv preprint arXiv:2010.11929},
  year={2020}
}

@InProceedings{Nilsback08,
  author       = "Maria-Elena Nilsback and Andrew Zisserman",
  title        = "Automated Flower Classification over a Large Number of Classes",
  booktitle    = "Indian Conference on Computer Vision, Graphics and Image Processing",
  month        = "Dec",
  year         = "2008",
}

@inproceedings{howard2019searching,
  title={Searching for mobilenetv3},
  author={Howard, Andrew and Sandler, Mark and Chu, Grace and Chen, Liang-Chieh and Chen, Bo and Tan, Mingxing and Wang, Weijun and Zhu, Yukun and Pang, Ruoming and Vasudevan, Vijay and others},
  booktitle={Proceedings of the IEEE/CVF international conference on computer vision},
  pages={1314--1324},
  year={2019}
}

@article{xie2021segformer,
  title={SegFormer: Simple and efficient design for semantic segmentation with transformers},
  author={Xie, Enze and Wang, Wenhai and Yu, Zhiding and Anandkumar, Anima and Alvarez, Jose M and Luo, Ping},
  journal={Advances in neural information processing systems},
  volume={34},
  pages={12077--12090},
  year={2021}
}
\bibliographystyle{iclr2027_conference}


\setlength{\parindent}{0pt}
\setlength{\parskip}{0.5em}
\newpage
\onecolumn
\appendix

\section*{Appendix Contents}
\begin{center}
\begin{minipage}{0.75\textwidth}
Implementation Details \dotfill \pageref{sec:appendix-Implementation Details}\\
Theoretical Analysis \dotfill \pageref{sec:appendix-Theoretical Analysis}\\
Case Studies \dotfill \pageref{sec:appendix-case-studies}\\
\hspace*{1.5em}Case Study 1: Mars Surface Classification \dotfill \pageref{sec:appendix-case-study1}\\
\hspace*{1.5em}Case Study 2: Deep-sea Fish Classification \dotfill \pageref{sec:appendix-case-study2}\\
Additional Experiments \dotfill \pageref{sec:appendix-Additional Experiments}\\
Additional Discussion \dotfill \pageref{sec:appendix-Additional Discussion}\\
Experimental Setup \dotfill \pageref{sec:appendix-Experimental Setup}\\
\end{minipage}
\end{center}
\setcounter{secnumdepth}{1}


\section{Implementation Details}
\label{sec:appendix-Implementation Details}

Algorithm~\ref{alg:deep-repurpose-calibrate} summarises the calibration stage of Deep Repurposing. This stage is performed once before deployment and does not update the parameters of the pretrained model. Instead, the model is used to extract latent representations and original predictions for the calibration set. These latents are then grouped according to the task-region map, which separates obsolete regions from retained regions. For each region, we estimate a diagonal Gaussian prototype, giving a lightweight description of where that region lies in the model's latent space.

\begin{algorithm}[htb]
\caption{\textsc{DeepRepurposing - Calibration}}
\label{alg:deep-repurpose-calibrate}
\begin{algorithmic}[1]
\REQUIRE Model $f_\theta=h_\theta\circ\phi_\theta$; calibration data $\mathcal{D}_{\mathrm{cal}}$; obsolete regions $\mathcal{O}$; retained regions $\mathcal{R}$; task-region map $c$; target map $\psi$
\ENSURE Prototypes $\{\mu_c,\sigma_c^2\}_{c\in\mathcal{O}\cup\mathcal{R}}$; directions $\{v_{o,r},t_{o,r}\}_{o\in\mathcal{O},r\in\mathcal{R}}$; residual prototypes $\{\widetilde{\delta}_a\}_{a\in\mathcal{R}}$; calibration scale $\bar{q}^{\mathrm{cal}}_{\mathcal{O},\mathcal{R}}$; decoder coefficient $\kappa$

\STATE Extract latents $z_i=\phi_\theta(x_i)$ for all $(x_i,y_i)\in\mathcal{D}_{\mathrm{cal}}$.
\STATE Partition calibration samples into regions using the task-region map $c(y_i)$.
\STATE Fit latent prototypes $\{\mu_c,\sigma_c^2\}_{c\in\mathcal{O}\cup\mathcal{R}}$ from the calibration latents.
\STATE Construct obsolete-to-retained removal directions $v_{o,r}$ and transport directions $t_{o,r}$ for all $(o,r)\in\mathcal{O}\times\mathcal{R}$.

\FOR{each retained calibration sample $(x_i,y_i)\in\mathcal{D}^{\mathcal{R}}_{\mathrm{cal}}$}
    \STATE Apply the pre-refinement map $\mathcal{T}_{\mathrm{core}}$ to obtain $q^{\mathrm{core}}_{\mathcal{R}}(x_i)$.
    \STATE Compute the obsolete responsibility $\widetilde{q}_{\mathcal{O}}(x_i)$.
    \STATE Assign the sample to its repaired retained destination
    $a_i=\arg\max_{r\in\mathcal{R}}q^{\mathrm{core}}_r(x_i)$
    \STATE Compute the residual
    $\delta_i^{\mathrm{res}}=\psi(y_i)-q^{\mathrm{core}}_{\mathcal{R}}(x_i)$.
\ENDFOR

\STATE Estimate one reliability-shrunk residual prototype $\widetilde{\delta}_a$ for each retained destination $a\in\mathcal{R}$.
\STATE Compute the retained-background obsolete-responsibility scale $\bar{q}^{\mathrm{cal}}_{\mathcal{O},\mathcal{R}}$ over retained calibration samples.
\STATE If required for regression, select the decoder coefficient $\kappa$ on calibration data.

\RETURN $\{\mu_c,\sigma_c^2\}_{c\in\mathcal{O}\cup\mathcal{R}}$, $\{v_{o,r},t_{o,r}\}_{o\in\mathcal{O},r\in\mathcal{R}}$, $\{\widetilde{\delta}_a\}_{a\in\mathcal{R}}$, $\bar{q}^{\mathrm{cal}}_{\mathcal{O},\mathcal{R}}$, $\kappa$
\end{algorithmic}
\end{algorithm}

Using these fitted prototypes and the score head when available, we construct pairwise obsolete-to-retained directions. The removal direction identifies the component of a representation that supports an obsolete region over a retained alternative, while the repair direction defines how removed evidence may be redirected toward retained regions. The calibration stage also estimates residual refinement vectors on retained calibration samples. These residuals measure the systematic difference between the core repaired retained responsibility and the desired retained target representation. The resulting correction vectors are reliability-shrunk and stored for use at inference. Finally, the calibration stage computes the retained-background obsolete-responsibility scale, which is later used to gate residual refinement, and selects any scalar deployment coefficients required for regression decoding.

Algorithm~\ref{alg:deep-repurpose-infer} gives the corresponding inference procedure. For a new input, DR first computes the latent representation and estimates evidence for each obsolete and retained region. The repair mask is then determined: DR intervenes only when the strongest region evidence corresponds to an obsolete region. This keeps the method targeted, so inputs already assigned to retained regions are passed through with minimal modification.

\begin{algorithm}[htb]
\caption{\textsc{DeepRepurposing - Inference}}
\label{alg:deep-repurpose-infer}
\begin{algorithmic}[1]
\REQUIRE Input $x$; model $f_\theta=h_\theta\circ\phi_\theta$; calibration object $\Lambda_{\mathrm{DR}}$; obsolete regions $\mathcal{O}$; retained regions $\mathcal{R}$
\ENSURE Repaired output $\widehat{y}^{\star}(x)$ and repaired latent representation $z^\star(x)$

\STATE Compute the original latent representation $z=\phi_\theta(x)$ and original output $\widehat{y}_\theta(x)$.
\STATE Estimate region evidence $e_c(z)$ for each $c\in\mathcal{O}\cup\mathcal{R}$ using either the score head or prototype evidence.
\STATE Compute the obsolete responsibility $\widetilde{q}_{\mathcal{O}}(x)$.
\STATE Set the repair mask
\[
\mathcal{M}(x)
=
\mathbf{1}
\left[
\arg\max_{c\in\mathcal{O}\cup\mathcal{R}} e_c(z)\in\mathcal{O}
\right]
\]

\IF{$\mathcal{M}(x)=1$}
    \STATE Select
    $o^\star=\arg\max_{o\in\mathcal{O}}e_o(z)$
    and
    $r^\star=\arg\max_{r\in\mathcal{R}}e_r(z)$.
    \STATE Remove the obsolete-supporting component along $v_{o^\star,r^\star}$ to obtain $z^{-}$.
    \STATE Compute repaired retained evidence $e^{\mathrm{core}}_{\mathcal{R}}(x)$ by combining original retained evidence, retained prototype evidence at $z^{-}$, and contrastive repair evidence.
\ELSE
    \STATE Set $z^{-}=z$ and $e^{\mathrm{core}}_{\mathcal{R}}(x)=e_{\mathcal{R}}(z)$.
\ENDIF

\STATE Normalise over retained regions:
\[
q^{\mathrm{core}}_{\mathcal{R}}(x)
=
\operatorname{softmax}_{\mathcal{R}}
\left(e^{\mathrm{core}}_{\mathcal{R}}(x)\right)
\]

\STATE Apply obsolete-gated residual refinement:
\[
q^\star_{\mathcal{R}}(x)
=
\Pi_{\Delta_{\mathcal{R}}}
\left[
q^{\mathrm{core}}_{\mathcal{R}}(x)
+
g_{\mathrm{obs}}(x)
\sum_{a\in\mathcal{R}}
q^{\mathrm{core}}_a(x)\widetilde{\delta}_a
\right]
\]

\IF{score-based classification}
    \STATE Convert $q^\star_{\mathcal{R}}(x)$ to retained evidence, assign a finite evidence floor to obsolete regions, and compute the repaired prediction $\widehat{y}^{\star}(x)$.
    \STATE Construct $z^\star(x)$ using retained-head consistency and retained-prototype nullspace reconstruction.
\ELSE
    \STATE Decode $q^\star_{\mathcal{R}}(x)$ through the retained value support and combine it with the projected original prediction to obtain $\widehat{y}^{\star}(x)$.
    \STATE Construct $z^\star(x)$ from retained-prototype geometry.
\ENDIF

\RETURN $\widehat{y}^{\star}(x)$ and $z^\star(x)$.
\end{algorithmic}
\end{algorithm}

When the condition is active, DR selects the strongest obsolete region and the strongest retained alternative, removes the obsolete-supporting component from the latent representation, and computes a repaired retained evidence vector. This repaired evidence combines the original retained evidence, retained prototype evidence after stripping, and contrastive repair evidence. The result is normalised over the retained regions to form the core retained responsibility vector. A residual refinement is then applied, gated by the relative obsolete responsibility of the input, and projected back onto the retained simplex.

The final output is constructed according to the task type. For classification, the repaired prediction is normalised over retained classes, and obsolete evidence is either assigned a finite floor or numerically suppressed. For target-region scalar regression, the retained responsibilities are decoded through the retained value support. For semantic-region scalar regression, where regions do not correspond to numeric output intervals, the repaired latent is decoded through the original regression head. The method also constructs a repaired latent representation, either by enforcing retained-head consistency when a score head is available, or by using retained-prototype geometry otherwise. In implementation, $z^{*}_{\mathrm{row}} = W_{\mathcal{R}}^{\top} \left(W_{\mathcal{R}}W_{\mathcal{R}}^{\top}\right)^{-1}\left(e^{*}_{\mathcal{R}}(x)-b_{\mathcal{R}}\right)$ is computed using an SVD-based Moore-Penrose pseudoinverse. This avoids assuming full row rank and allows the reconstruction to be interpreted as an exact minimum-norm score-consistent solution when the retained system is well posed, or as a minimum-norm least-squares projection when it is rank-deficient or overdetermined. Thus, DR repairs both the exposed prediction and the internal representation while leaving the original model parameters fixed.


\section{Theoretical Analysis}
\label{sec:appendix-Theoretical Analysis}

This appendix analyses the structural properties of the DR correction mechanism. In particular, we show that its main geometric components are locally coherent: obsolete-region evidence can be suppressed relative to retained evidence, and the internal transport step used to score retained candidates can be constrained so that it does not undo this correction.

\begin{prop}
For the internal DR transport candidate:
\[
z_r^+ = z-\alpha_{o,r}(x)v_{o,r}+\tau_r(x)t_{o,r},
\qquad \tau_r(x)=\rho(x)\geq 0
\]
used only to score the contrastive gain $G_r(x)$, not the final repaired latent $z^*(x)$, the retained-side margin is preserved whenever prototype transport does not increase the obsolete-retained margin. When this compatibility condition fails, the same margin is preserved by sufficiently capping the transport strength $\tau_r(x)$.
\end{prop}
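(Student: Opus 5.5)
We work in the score-head setting, where the obsolete--retained margin is affine in the latent. Define
\[
\Delta_{o,r}(z) = e_o(z)-e_r(z) = \langle W_o-W_r, z\rangle + (b_o-b_r).
\]
The phrase ``retained-side margin is preserved'' will be read as $\Delta_{o,r}(z_r^+)\le -m$. This is exactly the guarantee that the stripping step of \eqref{eq:dr-score-removal-coefficient} provides for $z^- = z-\alpha_{o,r}(x)v_{o,r}$. The plan is to expand $\Delta_{o,r}(z_r^+)$ by linearity into a stripping term and a transport term. We then bound each term separately.

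\textbf{Step 1 (stripping term).} Since $v_{o,r}=(W_o-W_r)/\|W_o-W_r\|_2$, we have $\langle W_o-W_r, v_{o,r}\rangle = \|W_o-W_r\|_2$. This gives
\[
\Delta_{o,r}(z^-) = \Delta_{o,r}(z) - \alpha_{o,r}(x)\|W_o-W_r\|_2 .
\]
By the definition of $\alpha$ we get $\Delta_{o,r}(z^-)\le -m$. In the positive branch this holds with equality; in the clipped branch $\alpha=0$, so $\Delta_{o,r}(z)\le -m$ already holds.

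\textbf{Step 2 (transport term).} Again by linearity,
\[
\Delta_{o,r}(z_r^+) = \Delta_{o,r}(z^-) + \tau_r(x)\,\gamma_{o,r},
\qquad \gamma_{o,r}=\langle W_o-W_r, t_{o,r}\rangle .
\]
The \emph{compatibility condition} in the proposition is $\gamma_{o,r}\le 0$: moving along the prototype direction $\mu_r-\mu_o$ does not increase the obsolete score relative to the retained one. If $\gamma_{o,r}\le 0$, then $\tau_r\ge 0$ gives $\Delta_{o,r}(z_r^+)\le \Delta_{o,r}(z^-)\le -m$, which is the first claim.

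\textbf{Step 3 (capping).} Suppose instead $\gamma_{o,r}>0$. We replace $\tau_r$ by the capped strength
\[
\tau_r^{\mathrm{cap}}(x)=\min\Bigl\{\rho(x),\ \bigl(-m-\Delta_{o,r}(z^-)\bigr)/\gamma_{o,r}\Bigr\}.
\]
This quantity is nonnegative by Step 1, and plugging it into Step 2 yields $\Delta_{o,r}(z_r^+)\le -m$. When $m=0$ and the stripping was tight, the cap is $0$. Transport is then disallowed, and we note this as the degenerate case. We also remark that $z_r^+$ enters only $G_r(x)$, never $z^*(x)$, so capping changes no other quantity.

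\textbf{Main obstacle.} The main difficulty is the formulation, not the algebra. The statement writes $\alpha_{o,r}$ for a generic pair $(o,r)$, whereas the algorithm strips only along $(o^*,r^*)$ and then transports toward every $r$. For $r\neq r^*$, the stripping term no longer gives margin $-m$ for the pair $(o^*,r)$. So the proof must either adopt the pairwise stripping the statement suggests, or state that margin is measured for the active pair, giving $\gamma=\langle W_{o^*}-W_{r^*},t_{o^*,r}\rangle$. The same linear argument works in either case.

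For the prototype-only variant, $t=-v$ and $\gamma<0$ automatically once the margin is taken as the projection onto $v$. In that case compatibility always holds, and we would add this as a remark.
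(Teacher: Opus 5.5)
Your proof is correct and is essentially the paper's argument: the same affine expansion $M_{o,r}(z_r^+) = M_{o,r}(z) - \alpha_{o,r}(x)\|w_{o,r}\|_2 + \tau_r(x)\, w_{o,r}^\top t_{o,r}$ with $w_{o,r}=W_o-W_r$, the same compatibility condition $w_{o,r}^\top t_{o,r}\le 0$, and the same cap on $\tau_r(x)$; the paper resolves your $(o^*,r^*)$ concern by simply working with pairwise stripping for an arbitrary pair $(o,r)$. One small slip: in the positive branch the stripping is tight, so $\Delta_{o,r}(z^-)=-m$ and your cap equals $0$ for every $m\ge 0$, not only when $m=0$; a nonzero cap requires genuine surplus, which comes either from the clipped branch or from over-stripping (the paper allows the latter by assuming only that $\alpha_{o,r}(x)$ is at least the defined coefficient).
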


\begin{proof}
We consider an arbitrary obsolete-retained pair $(o,r)\in O\times R$ and analyse the internal transported representation used by DR to evaluate the contrastive gain $G_r(x)$. Let $z=\phi_\theta(x)$ be the frozen latent representation of an input $x$. In the notation of the DR transport step, the stripped representation and transported candidate are:
\begin{equation}
z^- = z-\alpha_{o,r}(x)v_{o,r}
\end{equation}
and:
\begin{equation}
z_r^+ = z^-+\tau_r(x)t_{o,r},
\qquad
\tau_r(x)=\rho(x) \geq 0
\end{equation}
Equivalently:
\begin{equation}
z_r^+ = z-\alpha_{o,r}(x)v_{o,r}+\tau_r(x)t_{o,r}
\end{equation}
This is the internal candidate used to score transport toward retained region $r$; it is not the final reconstructed latent $z^\star(x)$ returned by the full DR pipeline.

For a score-based region head, the evidence assigned to region $c\in C$ is:
\begin{equation}
e_c(z)=W_c^\top z+b_c
\end{equation}
The quantity that determines whether the obsolete region $o$ is preferred to the retained region $r$ is the obsolete-retained margin:
\begin{equation}
\begin{aligned}
M_{o,r}(z)
&= e_o(z)-e_r(z) \\
&= (W_o-W_r)^\top z + b_o - b_r
\end{aligned}
\end{equation}
Under this convention, $M_{o,r}(z)>0$ means that the obsolete region has larger evidence than the retained region, whereas $M_{o,r}(z)<0$ means that the retained region is preferred. Moreover:
\begin{equation}
\nabla_z M_{o,r}(z)=W_o-W_r
\end{equation}
Thus, the classifier-induced direction $v_{o,r}$ in Equation~\ref{eq:dr-prototype-directions} is the normalised direction of steepest increase of the obsolete-retained margin. Moving in the direction $-v_{o,r}$ therefore decreases obsolete evidence relative to retained evidence.

Let:
\begin{equation}
w_{o,r}=W_o-W_r
\end{equation}
Assuming $w_{o,r} \neq 0$, the classifier-induced stripping direction used by DR is:
\begin{equation}
v_{o,r}
=
\frac{w_{o,r}}{\|w_{o,r}\|_2}
\end{equation}
This is the normal direction of the obsolete-retained decision boundary in latent space. Therefore, moving in the direction $-v_{o,r}$ directly decreases the margin $M_{o,r}$ and suppresses obsolete evidence relative to retained evidence.

Let $\mu_o$ and $\mu_r$ denote the calibrated latent prototypes for regions $o$ and $r$. Assuming $\mu_o\neq \mu_r$, the prototype transport direction is:
\begin{equation}
t_{o,r}
=
\frac{\mu_r-\mu_o}{\|\mu_r-\mu_o\|_2}
\end{equation}
as defined in Equation~\ref{eq:dr-transport-direction}. This direction points from the obsolete prototype toward the retained prototype.

We now examine how the internal transport candidate changes the obsolete-retained margin. Using the linear form of $M_{o,r}$:
\begin{equation}
M_{o,r}(z_r^+)
=
M_{o,r}(z)
-\alpha_{o,r}(x)w_{o,r}^\top v_{o,r}
+\tau_r(x)w_{o,r}^\top t_{o,r}
\end{equation}
Since $v_{o,r}=w_{o,r}/\|w_{o,r}\|_2$, we have:
\begin{equation}
w_{o,r}^\top v_{o,r}
=
\|w_{o,r}\|_2
\end{equation}
Hence:
\begin{equation}
M_{o,r}(z_r^+)
=
M_{o,r}(z)
-\alpha_{o,r}(x)\|w_{o,r}\|_2
+\tau_r(x)w_{o,r}^\top t_{o,r}
\end{equation}
This identity separates the two geometric components of the internal DR transport step. The stripping term $-\alpha_{o,r}(x)\|w_{o,r}\|_2$ always decreases the obsolete-retained margin. The transport term $\tau_r(x)w_{o,r}^\top t_{o,r}$ determines whether movement toward the retained prototype is compatible with the retained side of the classifier boundary.

Suppose the stripping strength is chosen so that:
\begin{equation}
\alpha_{o,r}(x)
\geq
\max\left\{
\frac{M_{o,r}(z)+m}{\|w_{o,r}\|_2},
0
\right\}
\end{equation}
for some desired margin $m\geq 0$. Equivalently, the stripping step alone ensures:
\begin{equation}
M_{o,r}(z)
-
\alpha_{o,r}(x)\|w_{o,r}\|_2
\leq
-m
\end{equation}
Now assume that prototype transport does not increase the obsolete-retained margin, meaning:
\begin{equation}
w_{o,r}^\top t_{o,r}
\leq 0
\end{equation}
Since $\tau_r(x)\geq 0$, it follows that:
\begin{equation}
\tau_r(x)w_{o,r}^\top t_{o,r}
\leq 0
\end{equation}
Substituting this into the exact margin identity gives:
\begin{equation}
M_{o,r}(z_r^+)
\leq
M_{o,r}(z)
-
\alpha_{o,r}(x)\|w_{o,r}\|_2
\leq
-m
\end{equation}
Thus, for the transported candidate $z_r^+$, the retained region $r$ still has evidence at least $m$ larger than the obsolete region $o$. In this case, prototype transport may move the representation toward retained latent structure, but it cannot undo the obsolete-margin removal achieved by the stripping step.

It remains to consider the case where the prototype direction is not margin-compatible, namely when:
\begin{equation}
w_{o,r}^\top t_{o,r} > 0
\end{equation}
Here, moving along $t_{o,r}$ can increase the obsolete-retained margin, so the transport strength must be controlled. From the exact identity:
\begin{equation}
M_{o,r}(z_r^+)
=
M_{o,r}(z)
-
\alpha_{o,r}(x)\|w_{o,r}\|_2
+
\tau_r(x)w_{o,r}^\top t_{o,r}
\end{equation}
the desired retained-side margin $M_{o,r}(z_r^+)\leq -m$ is guaranteed whenever:
\begin{equation}
M_{o,r}(z)
-
\alpha_{o,r}(x)\|w_{o,r}\|_2
+
\tau_r(x)w_{o,r}^\top t_{o,r}
\leq
-m
\end{equation}
Rearranging yields the sufficient cap:
\begin{equation}
\tau_r(x)
\leq
\frac{
\alpha_{o,r}(x)\|w_{o,r}\|_2-M_{o,r}(z)-m
}{
w_{o,r}^\top t_{o,r}
}
\end{equation}
Because the denominator is positive in the present case, this condition limits the amount of margin increase caused by prototype transport. Under this cap, any obsolete-margin increase from the prototype step is no larger than the margin surplus created by the stripping step. Therefore:
\begin{equation}
M_{o,r}(z_r^+)
\leq
-m
\end{equation}
The internal transport step therefore does not require the classifier-induced direction $v_{o,r}$ and the prototype direction $t_{o,r}$ to be collinear. The classifier-induced direction controls the obsolete-retained decision margin, while the prototype direction restores retained latent structure subject to a margin-preservation condition. This proves the claimed margin compatibility of the transport candidate used in the contrastive-gain computation.
\end{proof}

\begin{prop}
Let $|\mathcal{R}|\geq 2$,
$q^{*}_{\mathcal{R}}(x)\in\Delta_{\mathcal{R}}$, and
$\epsilon>0$, with retained scores
$e^{*}_r(x)=\log(q^{*}_r(x)+\epsilon)$. The above margin construction yields
\begin{equation}
e^{*}_o(x)
= 2\min_{r\in\mathcal{R}}e^{*}_r(x) - \log\!\left(1+|\mathcal{R}|\epsilon\right),
\qquad o\in\mathcal{O}
\end{equation}
All scores are finite, and $e^{*}_o(x)<e^{*}_r(x)$ for every $o\in\mathcal{O}$ and $r\in\mathcal{R}$. Consequently, the repaired all-region softmax assigns each obsolete region lower probability than every retained region, so an argmax over $\mathcal{O}\cup\mathcal{R}$ selects a retained region.
\end{prop}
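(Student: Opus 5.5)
The plan is to reduce everything to one scalar inequality involving the minimum retained score. Write $m(x)=\min_{r\in\mathcal{R}} e^{*}_r(x)$ and $c=\log(1+|\mathcal{R}|\epsilon)$, so that $e^{*}_o(x)=2m(x)-c$ for every $o\in\mathcal{O}$.

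\textbf{Finiteness.} Since $q^{*}_{\mathcal{R}}(x)\in\Delta_{\mathcal{R}}$, every coordinate satisfies $0\le q^{*}_r(x)\le 1$. Hence $\epsilon\le q^{*}_r(x)+\epsilon\le 1+\epsilon$, so each $e^{*}_r(x)$ lies in $[\log\epsilon,\log(1+\epsilon)]$ and is finite. It follows that $m(x)$ is finite. Because $c$ is finite for $\epsilon>0$, the obsolete score $e^{*}_o(x)$ is finite as well.

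\textbf{Strict ordering.} The inequality $2m-c<m$ is equivalent to $m<c$. This equivalence is the step I expect to need care. Doubling the minimum lowers it only when $m$ is small enough, since for positive $m$ one has $2m>m$, so a sign argument alone will not do. I would instead bound $m(x)$ using the simplex constraint. The coordinates of $q^{*}_{\mathcal{R}}(x)$ sum to one over $|\mathcal{R}|$ entries, so $\min_r q^{*}_r(x)\le 1/|\mathcal{R}|$. By monotonicity of $\log$,
\[
m(x)\le \log\!\left(\tfrac{1}{|\mathcal{R}|}+\epsilon\right).
\]
Now use $|\mathcal{R}|\ge 2$, which gives $1/|\mathcal{R}|<1$, together with $\epsilon\le |\mathcal{R}|\epsilon$. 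These yield $\tfrac{1}{|\mathcal{R}|}+\epsilon<1+|\mathcal{R}|\epsilon$, and therefore $m(x)<c$. Consequently
\[
e^{*}_o(x)=2m(x)-c<m(x)\le e^{*}_r(x)\quad\text{for every } o\in\mathcal{O},\ r\in\mathcal{R}.
\]
The bound $\min_r q^{*}_r\le 1/|\mathcal{R}|$ could be replaced by the cruder $q^{*}_r\le 1$, which gives $m\le\log(1+\epsilon)\le c$. That version, however, only secures strictness through $|\mathcal{R}|\ge 2$ and $\epsilon>0$, so I would use the sharper bound.

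\textbf{Softmax consequence.} The repaired distribution $p^{*}_c(x)$ in Equation~\ref{eq:dr-final-classification-distribution} is $\exp(e^{*}_c(x))$ divided by a common, finite, positive normaliser. Since $\exp$ is strictly increasing, $p^{*}_o(x)<p^{*}_r(x)$ for every $o\in\mathcal{O}$ and $r\in\mathcal{R}$. Hence no obsolete region can attain the maximum of $p^{*}$ over $\mathcal{O}\cup\mathcal{R}$, because any retained region beats it. The argmax over all regions is therefore retained, and it coincides with the restricted argmax defining $\widehat{y}^{*}(x)$.
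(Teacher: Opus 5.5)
Your proof is correct and is essentially the paper's argument. The paper writes $e^{*}_o(x)=a-m_{\mathrm{floor}}(x)$ with $a=\min_{r}e^{*}_r(x)$, $m_{\mathrm{floor}}(x)=\log Z-a$ and $Z=\sum_{r}\exp(e^{*}_r(x))=1+|\mathcal{R}|\epsilon$, then uses $Z\geq|\mathcal{R}|\exp(a)$ to get $m_{\mathrm{floor}}(x)\geq\log|\mathcal{R}|>0$; this is exactly your bound $m(x)\leq\log(1/|\mathcal{R}|+\epsilon)=c-\log|\mathcal{R}|$ from $\min_r q^{*}_r(x)\leq 1/|\mathcal{R}|$, the only difference being that you stop at strictness instead of recording the exact gap $\log|\mathcal{R}|$ (each obsolete probability is at most $1/|\mathcal{R}|$ times the weakest retained one).
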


\begin{proof}
For a fixed input $x$, let $a=\min_{r\in\mathcal{R}}e^{*}_r(x)$. The total retained weight is
\begin{equation}
\begin{aligned}
Z &= \sum_{r\in\mathcal{R}}\exp(e^{*}_r(x)) \\ 
&= \sum_{r\in\mathcal{R}}(q^{*}_r(x)+\epsilon) \\ 
&= 1+|\mathcal{R}|\epsilon
\end{aligned}
\end{equation}
The weakest retained region has normalised weight $\exp(a)/Z$. We use this region as the reference so that obsolete outputs remain below every retained alternative. To determine the separation without an additional margin hyperparameter, we choose the obsolete-to-weakest-retained weight ratio to equal $\exp(a)/Z$. This applies an additional suppression factor determined by the retained distribution. Since a score gap $m$ corresponds to a weight ratio $\exp(-m)$, this choice gives the suppression margin:
\begin{equation}
\begin{aligned}
m_{\mathrm{floor}}(x) &= -\log\!\left(\frac{\exp(a)}{Z}\right) \\ &= \log Z-a
\end{aligned}
\end{equation}
Placing each obsolete score this margin below $a$ gives:
\begin{equation}
\begin{aligned}
e^{*}_o(x) &= a-m_{\mathrm{floor}}(x) \\
&= 2a-\log Z
\end{aligned}
\end{equation}
This yields the stated expression and gives
\begin{equation}
\frac{\exp(e^{*}_o(x))}{\exp(a)} = \frac{\exp(a)}{Z} \leq \frac{1}{|\mathcal{R}|}
\end{equation}
The same ratio holds between the corresponding probabilities
after the all-region softmax. Thus, each obsolete output
receives at most $1/|\mathcal{R}|$ of the probability assigned
to the weakest retained output, with stronger suppression
when that retained output has less normalised support. Since $Z\geq|\mathcal{R}|\exp(a)$:
\begin{equation}
m_{\mathrm{floor}}(x) \geq \log|\mathcal{R}|>0
\end{equation}
Therefore, $e^{*}_o(x)<a\leq e^{*}_r(x)$ for every retained region.
The stabiliser $\epsilon>0$ ensures that all scores are
finite. Finally, softmax preserves score ordering, establishing
the claimed probability ordering and argmax guarantee.

Table~\ref{tab:ls_dr_diagnostic} shows that both methods eliminate obsolete predictions, but logit suppression leaves raw obsolete probability at $0.88\pm0.05$. Its effective confidence of $0.69\pm0.05$ therefore accompanies an unrepaired raw output with confidence $0.93\pm0.02$. DR reduces raw obsolete probability to zero at the reported precision, with matching raw and effective confidence of $0.60\pm0.03$ and entropy of $1.12\pm0.08$. This distinction matters when downstream components consume raw scores: logit suppression leaves substantial obsolete support in those scores, whereas the full DR procedure suppresses it within the repaired output itself. This shows that suppression techniques alone can make retained predictions seem more confident than they originally were, whereas DR's derivation preserves the correct confidence.
\end{proof}


\section{Case Studies}
\label{sec:appendix-case-studies}

While the experiments presented in the main body of this paper employ the datasets standard to this area of research, these do not offer insight into whether Deep Repurposing is viable in real operational contexts. The following case studies extend this, examining Deep Repurposing on datasets and scenarios drawn from real-world scenarios. The goal is to assess whether the method holds up under the constraints and ambiguities of genuine deployment conditions, where class relevance shifts organically, data is limited, and retraining from scratch is often not an option.

\subsection{Case Study 1: Mars Surface Classification}
\label{sec:appendix-case-study1}

Autonomous planetary missions provide a natural setting for Deep Repurposing, since the operational relevance of a deployed perception model may change after launch. A rover classifier may originally recognise a broad set of terrain, instrument, and scene categories, but some classes can later become unnecessary, unreliable, or harmful to retained performance. This may occur because a mission phase has ended, a hardware component has degraded, or post-deployment analysis reveals that a class is poorly specified.

Full retraining is not always desirable in this setting due to limited compute, restricted communication, and the need to preserve previously validated behaviour. A lightweight adaptation method is therefore preferable: obsolete output behaviour should be removed while maintaining, or ideally improving, recognition of the remaining mission-relevant classes.

This case study evaluates Deep Repurposing on a Mars rover image classification task based on the MSL dataset~\citep{wagstaff2018deep}. The original classifier is a ViT-B/16 model~\citep{NIPS2017_3f5ee243} trained on the full class taxonomy, which includes rover hardware, surface material, and broader scene categories. The dataset is highly imbalanced, with several visually frequent classes, such as wheel, ground, and horizon, accounting for a large proportion of the samples. This makes macro-F1 an important evaluation metric, since accuracy alone may obscure poor performance on less frequent mission-relevant classes. Example images from the dataset are shown in Figure~\ref{fig:msl_examples}. After training, we simulate mission-phase changes in which one class is no longer desirable as an explicit prediction target, and the remaining classes form the retained mission taxonomy.

\begin{figure}[htb]
    \centering
    \includegraphics[width=0.95\linewidth]{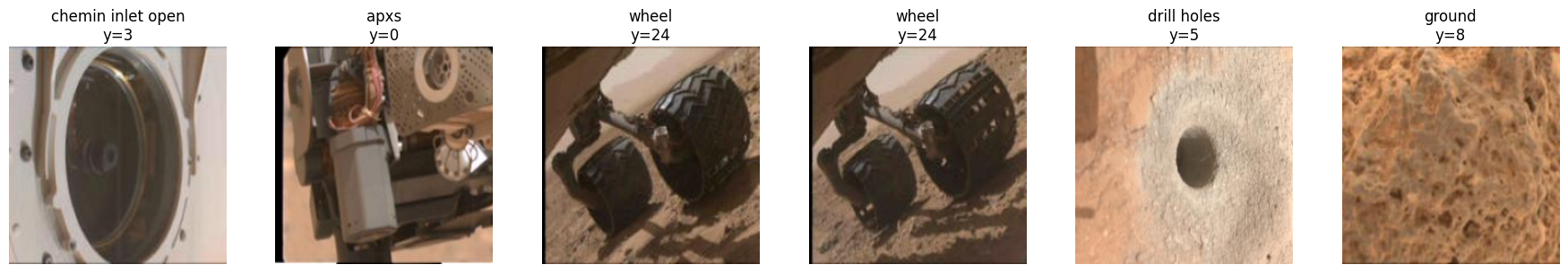}
    \caption{Example images from the MSL rover classification dataset that contains visually diverse mission-relevant categories.}
    \label{fig:msl_examples}
\end{figure}

Figure~\ref{fig:CaseStudy1_overview} summarises the case study. We consider two forms of class obsolescence. First, in the hardware-induced setting, DRT-related views are treated as obsolete, modelling a scenario in which the dust removal tool is damaged, unavailable, or no longer required after a drilling phase. Second, in the taxonomy-induced setting, the horizon class is treated as obsolete because it is weakly learned and harmful to retained macro-F1 performance.

\begin{figure*}[htb]
    \centering
    \includegraphics[width=0.95\linewidth]{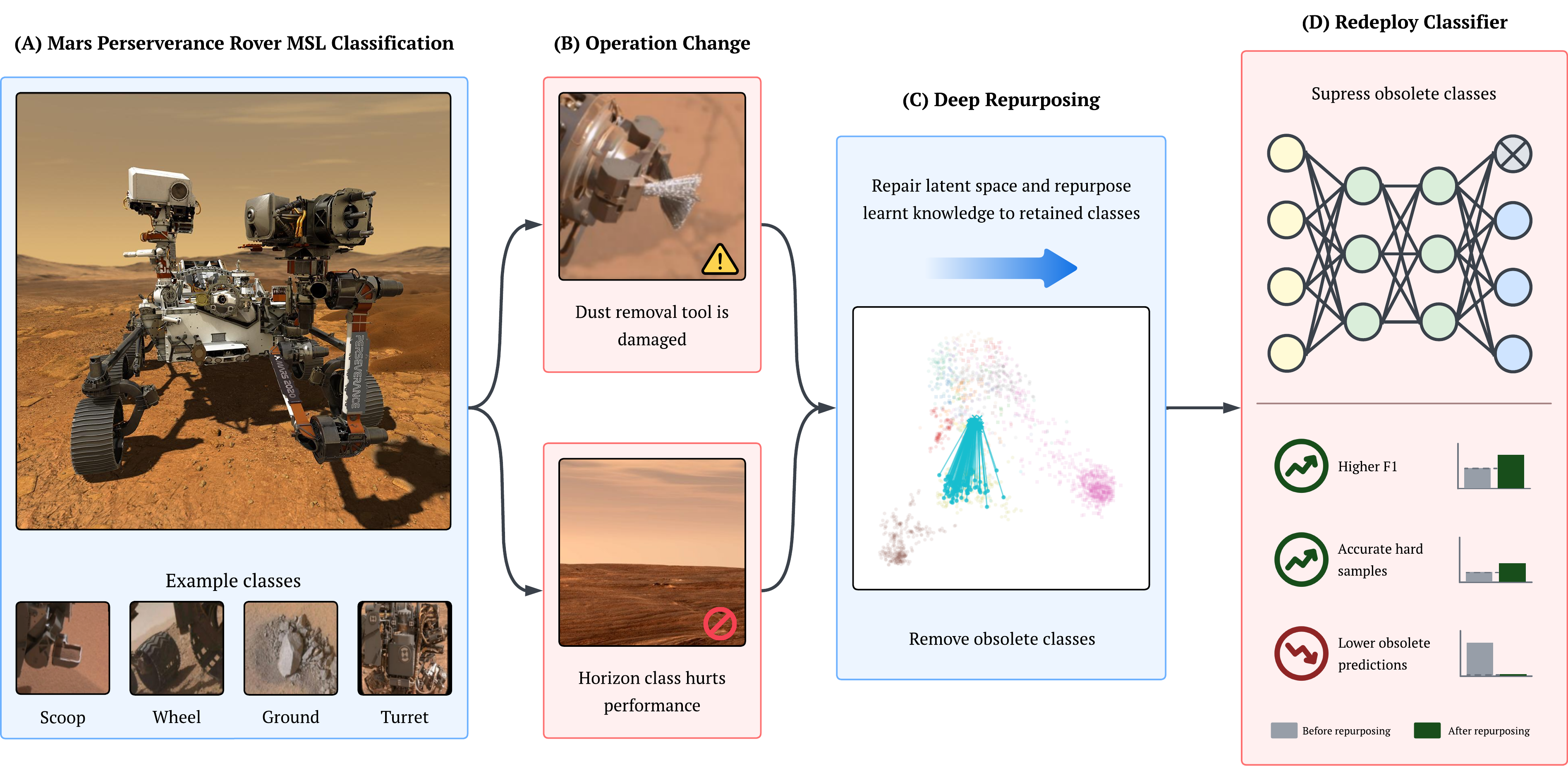}
    \caption{Overview of the Mars rover Deep Repurposing case study. A deployed MSL image classifier is adapted under two mission-phase changes: hardware-induced obsolescence, where a DRT-related class becomes irrelevant after tool damage, retirement, or mission-phase completion; and taxonomy-induced obsolescence, where the horizon class is found to harm retained macro-F1 performance. Deep Repurposing suppresses obsolete output behaviour and reallocates learned structure toward the retained mission classes.}
    \label{fig:CaseStudy1_overview}
\end{figure*}

These variants test two distinct uses of DR. The DRT setting captures mission-driven hardware or objective changes, while the horizon setting captures post-hoc taxonomy repair. Together, they evaluate whether DR can suppress obsolete output behaviour and repurpose learned structure toward the remaining mission-relevant classes.

\subsubsection{Hardware-induced obsolescence}
\label{sec:appendix-case-study1-Hardware-induced-obsolescence}
\hfill\\

In the first variant, we consider a hardware-induced form of class obsolescence. The dust removal tool (DRT) is treated as no longer operationally relevant, modelling a mission setting in which the tool has become damaged, unavailable, detached, or no longer required after a drilling campaign. We therefore retire the DRT-related classes ("DRT front" and "DRT side") from the output taxonomy and apply Deep Repurposing to suppress this obsolete behaviour while preserving the remaining mission-relevant classes.

Table~\ref{tab:case-study1-drt_results} reports the 10-run average performance before and after Deep Repurposing. DR completely removes obsolete DRT predictions on the obsolete distribution, reducing the obsolete prediction rate from $0.672$ to $0.000$. The raw probability assigned to obsolete DRT classes is similarly reduced from $0.398$ to approximately zero. This indicates that the retired DRT output behaviour is not merely weakened, but effectively removed from the classifier. Importantly, this obsolete-class suppression does not degrade retained task performance; Macro-F1 improves from $0.759$ to $0.856$, a gain of $+0.097$. This is particularly important in the Mars rover setting, where retained classes are visually diverse and severely imbalanced: the improvement in macro-F1 suggests that DR improves class-balanced recognition rather than simply preserving majority-class accuracy.

\begin{table}[htb]
\centering
\caption{Results across 10 runs (10 random seeds) for the hardware-induced obsolescence experiment, where DRT-related classes are treated as obsolete.}
\label{tab:case-study1-drt_results}
\begin{tabular}{lccc}
\toprule
Metric & Base Model & DR & $\Delta$ \\
\midrule
$D_r$ accuracy & 0.803 $\pm$ 0.001 & 0.884 $\pm$ 0.001 & +0.081 \\
$D_r$ macro-F1 & 0.759 $\pm$ 0.002 & 0.856 $\pm$ 0.002 & +0.097 \\
$D_r$ NLL & 0.887 $\pm$ 0.002 & 0.579 $\pm$ 0.002 & -0.308 \\
$D_r$ ECE & 0.157 $\pm$ 0.001 & 0.186 $\pm$ 0.001 & +0.029 \\
$D_r$ obs prediction rate & 0.001 $\pm$ 0.000 & 0.000 $\pm$ 0.000 & -0.001 \\
$D_r$ raw obs probability & 0.021 $\pm$ 0.000 & 0.000 $\pm$ 0.000 & -0.021 \\
$D_o$ prediction rate & 0.672 $\pm$ 0.001 & 0.000 $\pm$ 0.000 & -0.672 \\
$D_o$ raw obs probability & 0.398 $\pm$ 0.003 & 0.000 $\pm$ 0.000 & -0.398 \\
\midrule
DR unlearning time & \multicolumn{3}{c}{14.38 $\pm$ 0.40s} \\
\bottomrule
\end{tabular}
\end{table}

The retained predictive quality also improves modestly after repurposing. NLL decreases from $0.887$ to $0.579$, however, ECE increases from $0.157$ to $0.186$.

These results show that DR achieves the desired behaviour in the hardware-induced obsolescence setting: it removes the obsolete DRT output, preserves retained accuracy,  and improves retained macro-F1, all with an average unlearning time of only $14.38 \pm 0.40$ seconds on a T4 GPU.

Figure~\ref{fig:drt_latent_repair} visualises the effect of repurposing on DRT-related samples. Before repair, obsolete DRT samples occupy regions associated primarily with a small number of nearby retained classes, especially portion box, ground, and wheel. After repair, obsolete samples are redistributed toward retained alternatives while their direct obsolete-class probability is suppressed. This supports the intended behaviour of DR: the obsolete class is retired as an output target, but the learned structure associated with it is redirected toward the retained taxonomy rather than discarded.

\begin{figure*}[htb]
    \centering
    \includegraphics[width=\linewidth]{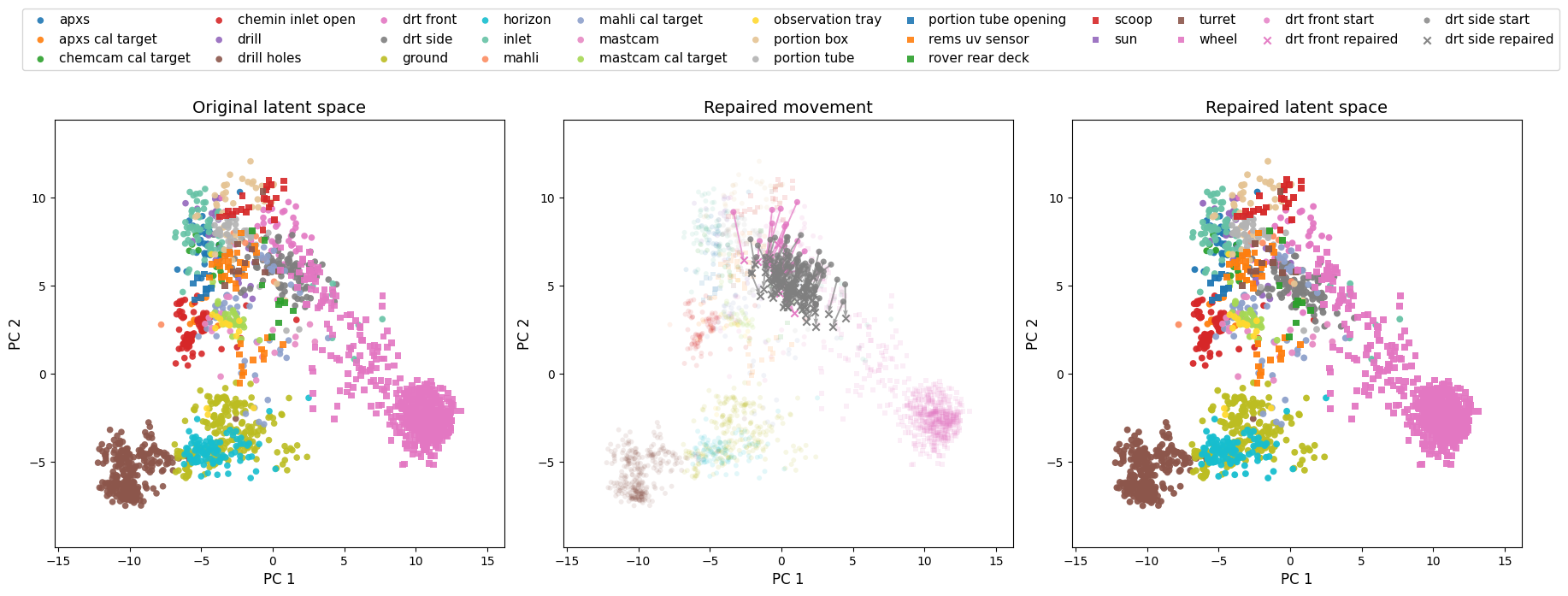}
    \caption{Qualitative visualisation of DRT repurposing from an example random seed. Left: The original representation of all MSL classes. Middle: The repurposing movement from obsolete DRT outputs toward retained alternatives (represented by arrows). Right: The new/repaired representation of all MSL classes (obsolete class still visible for qualitative purposes).}
    \label{fig:drt_latent_repair}
\end{figure*}

To better understand how obsolete DRT samples are redirected after repurposing, we additionally compute the nearest retained-class centroid for each obsolete sample before and after DR. As shown in Table~\ref{tab:drt_nearest_retained}, obsolete DRT samples are initially closest mainly to portion box, ground, and wheel centroids. After repurposing, the distribution becomes less concentrated on heavily saturated ground class (which is one of the most frequent classes) and shifts toward visually related retained hardware or tool classes, including wheel, scoop, and drill. This suggests that DR does not suppress the DRT output arbitrarily; instead, obsolete samples are reassigned toward retained regions that are already close in the model representation. Combined with the reduction in obsolete probability, this supports the interpretation of DR as a repurposing mechanism rather than simple output masking.

\begin{table}[htb]
\centering
\caption{Nearest retained-class distribution for obsolete DRT samples before and after Deep Repurposing from an example random seed.}
\label{tab:drt_nearest_retained}
\begin{tabular}{lcc}
\toprule
Nearest retained class & Before DR & After DR \\
\midrule
Portion box & 71 / 143 (49.7\%) & 54 / 143 (37.8\%) \\
Ground & 42 / 143 (29.4\%) & 20 / 143 (14.0\%) \\
Wheel & 27 / 143 (18.9\%) & 34 / 143 (23.8\%) \\
Scoop & 1 / 143 (0.7\%) & 27 / 143 (18.9\%) \\
Drill & 0 / 143 (0.0\%) & 6 / 143 (4.2\%) \\
MAHLI calibration target & 2 / 143 (1.4\%) & 2 / 143 (1.4\%) \\
\bottomrule
\end{tabular}
\end{table}

Overall, the DRT setting demonstrates the practical value of Deep Repurposing in a mission-adaptation scenario. When a hardware component becomes irrelevant or unavailable, DR can remove the corresponding output behaviour in under $15$ seconds on average while preserving retained accuracy and improving retained macro-F1.

\subsubsection{Taxonomy-induced obsolescence}
\label{sec:appendix-case-study1-Taxonomy-induced-obsolescence}
\hfill\\

In the second variant, we consider a taxonomy-induced form of class obsolescence. Here, the horizon class is treated as obsolete, not because the visual concept is physically unavailable, but because it is weakly learned and harmful to retained classification performance. This setting models a post-deployment revision of the class taxonomy, where mission scientists determine that a class is too ambiguous, over-represented, or insufficiently useful as a stand-alone prediction target. DR is therefore evaluated as a taxonomy repair mechanism: the aim is to remove the horizon output while improving recognition of the remaining mission-relevant classes.

Table~\ref{tab:case-study1-horizon_results} reports the 10-run average performance before and after DR. The base model assigns substantial probability mass to the horizon class, with an obsolete prediction rate of $0.964$ on $D_o$ and a raw obsolete probability of $0.576$. After repurposing, both are reduced to approximately zero, showing that DR effectively removes the horizon output behaviour. Unlike the DRT setting, horizon also leaks into the retained distribution: the base model predicts the obsolete horizon class on $6.3\%$ of retained samples. DR eliminates this leakage entirely, reducing the retained obsolete prediction rate from $0.063$ to $0.000$.

\begin{table}[htb]
\centering
\caption{Results across 10 runs (10 random seeds) for the taxonomy-induced obsolescence experiment, where the horizon class is obsolete.}
\label{tab:case-study1-horizon_results}
\begin{tabular}{lccc}
\toprule
Metric & Base Model & DR & $\Delta$ \\
\midrule
$D_r$ accuracy & 0.784 $\pm$ 0.001 & 0.884 $\pm$ 0.001 & +0.100 \\
$D_r$ macro-F1 & 0.739 $\pm$ 0.002 & 0.834 $\pm$ 0.002 & +0.095 \\
$D_r$ NLL & 0.933 $\pm$ 0.002 & 0.658 $\pm$ 0.002 & -0.274 \\
$D_r$ ECE & 0.151 $\pm$ 0.000 & 0.211 $\pm$ 0.001 & +0.060 \\
$D_r$ obs prediction rate & 0.063 $\pm$ 0.001 & 0.000 $\pm$ 0.000 & -0.063 \\
$D_r$ raw obs probability & 0.029 $\pm$ 0.000 & 0.000 $\pm$ 0.000 & -0.029 \\
$D_o$ obs prediction rate & 0.964 $\pm$ 0.004 & 0.000 $\pm$ 0.000 & -0.964 \\
$D_o$ raw obs probability & 0.576 $\pm$ 0.004 & 0.000 $\pm$ 0.000 & -0.576 \\
\midrule
DR unlearning time & \multicolumn{3}{c}{14.81 $\pm$ 0.37s} \\
\bottomrule
\end{tabular}
\end{table}

This removal leads to a clear improvement in retained performance. Accuracy on $D_r$ increases from $0.784$ to $0.884$, while macro-F1 improves from $0.739$ to $0.834$, corresponding to gains of $+0.100$ and $+0.095$, respectively. The larger macro-F1 gain is particularly important given the imbalance of the MSL dataset, as it indicates that removing the horizon class improves class-balanced recognition rather than only increasing performance on dominant classes. NLL also decreases from $0.933$ to $0.658$.

Figure~\ref{fig:horizon_latent_repair} provides a qualitative view of this behaviour. Horizon samples are concentrated near the ground region of the projected representation, which is consistent with the visual ambiguity between horizon and terrain/background imagery. After repurposing, the obsolete horizon output is suppressed and the affected samples are redirected toward the retained ground-centred region. This supports the interpretation of the horizon experiment as taxonomy repair: DR removes a harmful or ambiguous class and reallocates its behaviour toward a more useful retained category.

\begin{figure*}[htb]
    \centering
    \includegraphics[width=\linewidth]{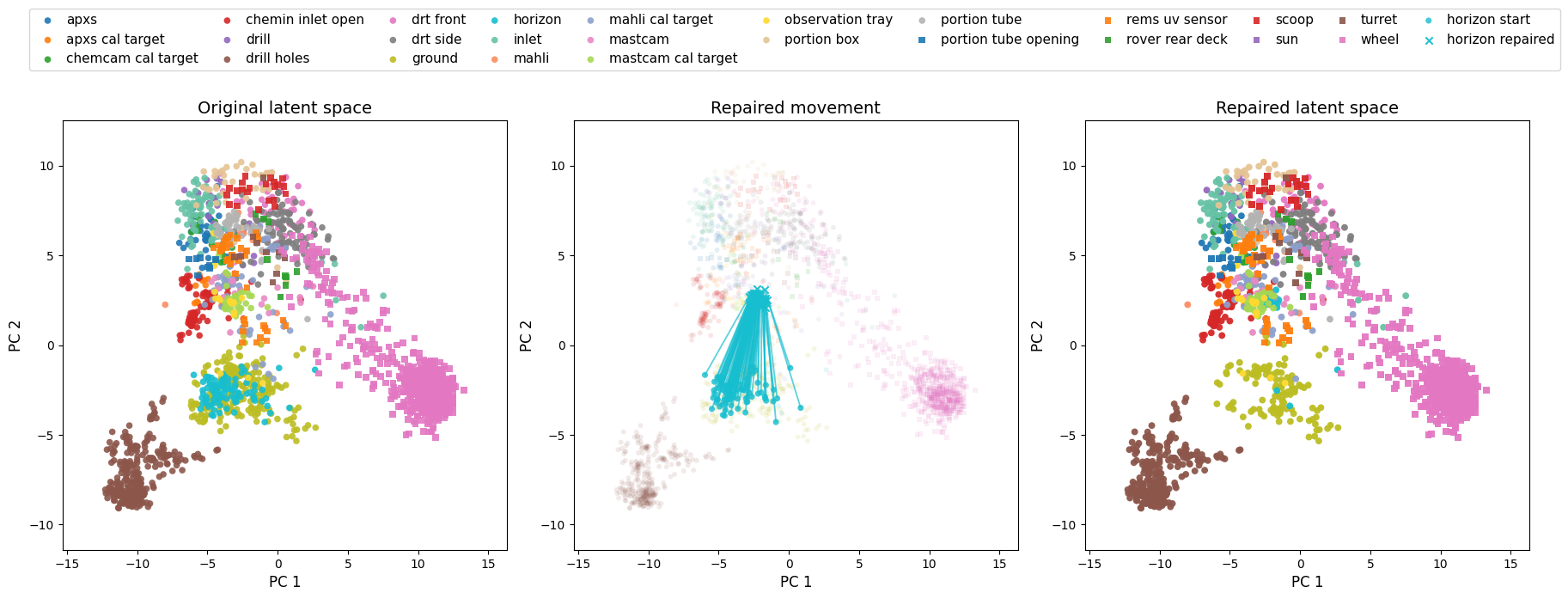}
    \caption{Qualitative visualisation of horizon repurposing from an example random seed. Left: The original representation of all MSL classes. Middle: The repurposing movement from obsolete horizon outputs toward retained alternatives (represented by arrows). Right: The new/repaired representation of all MSL classes (obsolete class still visible for qualitative purposes).}
    \label{fig:horizon_latent_repair}
\end{figure*}

Overall, the horizon setting demonstrates that DR is not limited to hardware-driven class retirement. It can also improve a deployed classifier when the original taxonomy contains a weak or harmful class. By removing the horizon output, DR improves retained accuracy, macro-F1, and NLL while eliminating obsolete predictions in approximately $14.81 \pm 0.37$ seconds.

\subsubsection{Computational efficiency}
\label{sec:appendix-case-study1-Computational-efficiency}
\hfill\\

Across both obsolescence variants, DR provides a lightweight alternative to full fine-tuning of the ViT-B/16 base model. Fine-tuning the ViT-B/16 on the MSL dataset requires substantially greater resources, consuming up to $14.5$GB VRAM on a T4 GPU and exhausting the available memory budget in our budget-hardware setting. In contrast, DR completes the repurposing step within approximately $4.1$GB VRAM, requiring only $14$-$15$ seconds on a T4 GPU across the DRT and horizon experiments. On an A100 GPU, DR requires $6.6$ seconds, compared with $310.7$ seconds for fine-tuning, corresponding to approximately a $47\times$ speedup. This efficiency is important for the intended mission-adaptation setting: DR can suppress obsolete outputs and improve retained performance without the cost of retraining or fine-tuning a large ViT model.

Taken together, the two variants show that DR can adapt a ViT-B/16 rover classifier under both hardware-induced and taxonomy-induced obsolescence. In both cases, DR eliminates obsolete predictions, improves retained macro-F1, and completes adaptation in seconds rather than minutes, supporting its use as a practical post-deployment model adaptation mechanism.

\subsection{Case Study 2: Deep-sea Fish Classification}
\label{sec:appendix-case-study2}

Autonomous underwater vehicles provide a second operational setting in which deployed perception models may need to be adapted after initial training. In this case study, a classifier may be trained to recognise a broad biological taxonomy, but the usefulness of this taxonomy can change once the system is deployed. Some fish orders may be rare, visually ambiguous, poorly represented in the training data, or consistently unreliable under deployment conditions. In such cases, retaining the original output taxonomy may be undesirable: the model may continue to allocate probability mass to a class that is not trusted by domain scientists, while also degrading performance on the remaining operationally useful classes.

\begin{figure}[htb]
    \centering
    \includegraphics[width=0.95\linewidth]{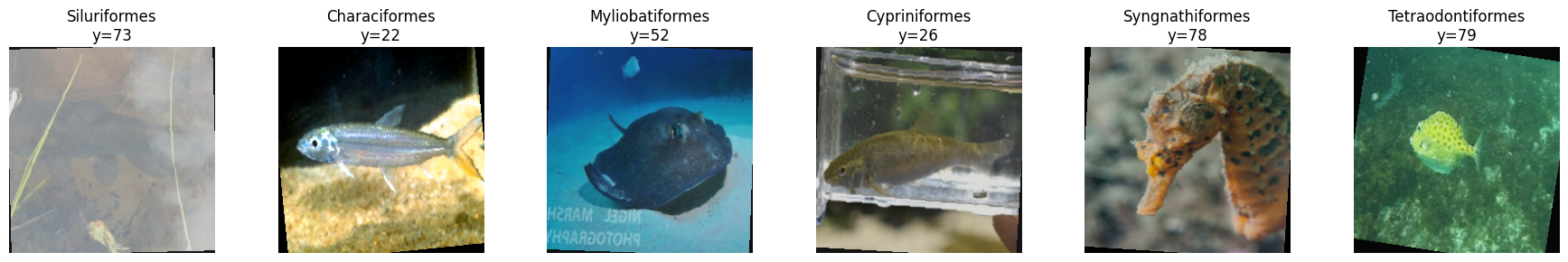}
    \caption{Example images from the FishNet classification dataset that contains visually diverse orders.}
    \label{fig:fishnet_examples}
\end{figure}

This case study evaluates Deep Repurposing on the FishNet classification dataset~\citep{khan2023fishnet}, which contains images from a visually diverse and highly imbalanced marine taxonomy. Example images are shown in Figure~\ref{fig:fishnet_examples}. The original classifier is a MobileNetV3-Large model~\citep{howard2019searching} trained on the full 83-class fish-order taxonomy. As in the Mars case study, the imbalance of the dataset makes macro-F1 an important metric, since accuracy alone may obscure poor recognition of less frequent orders.

\begin{figure*}[htb]
    \centering
    \includegraphics[width=0.95\linewidth]{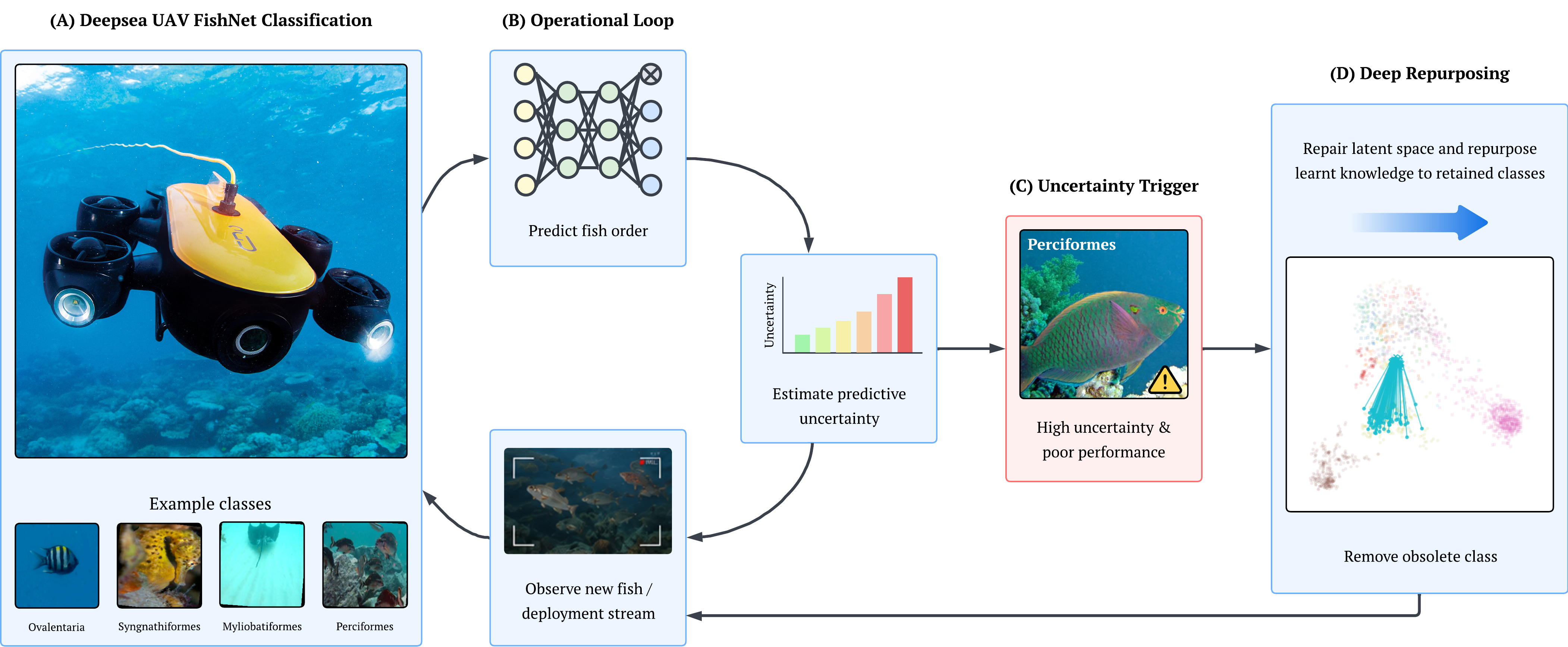}
    \caption{Overview of the uncertainty-triggered FishNet Deep Repurposing case study. During deployment, an autonomous underwater vehicle repeatedly observes new fish imagery, predicts fish orders, and estimates predictive uncertainty. Persistently high uncertainty and poor performance for the 'Perciformes' class triggers Deep Repurposing, which suppresses the obsolete class output and reallocates learned structure toward retained fish orders while preserving retained performance and reducing obsolete predictions.}
    \label{fig:CaseStudy2_overview}
\end{figure*}

\begin{figure}[htb]
    \centering
    \includegraphics[width=0.66\linewidth]{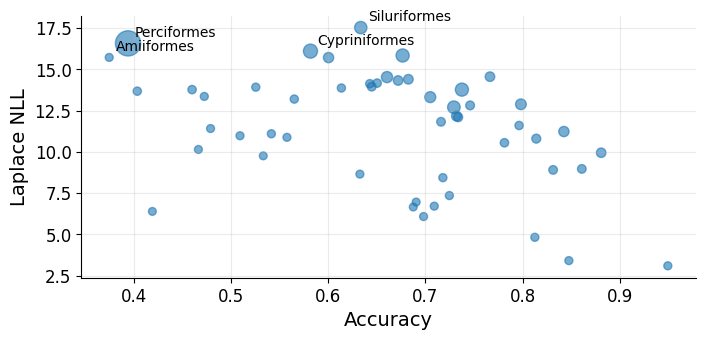}
    \caption{Uncertainty estimates and accuracy for FishNet orders using a last-layer Laplace approximation; larger bubbles indicate orders with more samples.}
    \label{fig:fishnet_uncert_acc_plot}
\end{figure}

Unlike the Mars rover setting, where obsolescence is triggered by an explicit mission-phase or hardware change, this case study considers an uncertainty-triggered adaptation loop. This can be visualised in Figure~\ref{fig:CaseStudy2_overview}. The deployed classifier is paired with a last-layer Laplace uncertainty estimator~\citep{kristiadi2020being}, which provides deployment-time uncertainty estimates for each prediction. During operation, and visualised in Figure~\ref{fig:fishnet_uncert_acc_plot}, scientists observe that the Perciformes class exhibits both consistently poor predictive performance and high uncertainty. Other classes, such as Amiiformes or Siluriformes, also perform poorly, but either exhibit higher accuracy or lower uncertainty than the Perciformes class. Rather than treating this only as a failure of the classifier, the system uses this uncertainty signal to identify Perciformes as a candidate obsolete class. Deep Repurposing is then applied to suppress the unreliable output behaviour associated with this class while preserving the remaining fish-order taxonomy.

Table~\ref{tab:case-study2-uncertainty_results} reports the average performance before and after Deep Repurposing, where Perciformes is treated as obsolete (as identified from Figure~\ref{fig:fishnet_uncert_acc_plot}). DR reduces the obsolete prediction rate on the obsolete distribution from 0.394 to 0.000, while the raw probability assigned to the obsolete class decreases from 0.334 to 0.000. This indicates that DR successfully removes the unreliable Perciformes output rather than merely reducing its confidence. The obsolete class also leaks into the retained distribution before repair: the base model predicts Perciformes on 2.6\% of retained samples, with an average raw obsolete probability of 0.039. After repurposing, both quantities are reduced to zero, showing that DR removes the unreliable class from both obsolete and retained inputs.

\begin{table}[htb]
\centering
\caption{Results across 10 runs (10 random seeds) for the uncertainty-induced obsolescence experiment, where the class with the highest uncertainty and lowest performance (Perciformes) is obsolete.}
\label{tab:case-study2-uncertainty_results}
\begin{tabular}{lccc}
\toprule
Metric & Base Model & DR & $\Delta$ \\
\midrule
$D_r$ accuracy & 0.687 $\pm$ 0.000 & 0.696 $\pm$ 0.000 & +0.009 \\
$D_r$ macro-F1 & 0.579 $\pm$ 0.000 & 0.586 $\pm$ 0.002 & +0.007 \\
$D_r$ NLL & 1.169 $\pm$ 0.0020 & 1.121 $\pm$ 0.000 & -0.048 \\
$D_r$ ECE & 0.074 $\pm$ 0.000 & 0.082 $\pm$ 0.000 & +0.008 \\
$D_r$ obs prediction rate & 0.026 $\pm$ 0.000 & 0.000 $\pm$ 0.000 & -0.026 \\
$D_r$ raw obs probability & 0.039 $\pm$ 0.000 & 0.000 $\pm$ 0.000 & -0.039 \\
$D_o$ obs prediction rate & 0.394 $\pm$ 0.000 & 0.000 $\pm$ 0.000 & -0.394 \\
$D_o$ raw obs probability & 0.334 $\pm$ 0.000 & 0.000 $\pm$ 0.000 & -0.334 \\
\midrule
DR unlearning time & \multicolumn{3}{c}{7.53 $\pm$ 0.20s} \\
\bottomrule
\end{tabular}
\end{table}

The retained task is not degraded by this suppression. Accuracy on $D_r$ increases from 0.687 to 0.696, while macro-F1 improves from 0.579 to 0.586. Although these gains are smaller than those observed in the Mars rover case study, the purpose of this experiment is different. The FishNet setting is intended to demonstrate how uncertainty estimation can be used as a practical trigger for model adaptation. In this setting, the key requirement is that DR removes an unreliable, high-uncertainty class without harming the remaining taxonomy. The reduction in NLL from 1.169 to 1.121 further suggests that repurposing improves the retained classifier. Figure~\ref{fig:fishnet_delta_acc_nll} provides a per-class view of the effect of repurposing. The highlighted obsolete class, Perciformes, is separated from the retained classes because DR intentionally removes its output behaviour. For the remaining fish orders, most classes either maintain similar accuracy or improve modestly, while many also exhibit reductions in NLL. Orders that are visually similar to the obsolete class, Gasterosteiformes and Ovalentaria, exhibit the highest NLL drop and accuracy gains. This supports the intended interpretation of the case study: DR acts as a targeted adaptation mechanism, removing a class identified through deployment-time uncertainty while leaving the broader classifier largely intact. 

\begin{figure}[htb]
    \centering
    \includegraphics[width=0.95\linewidth]{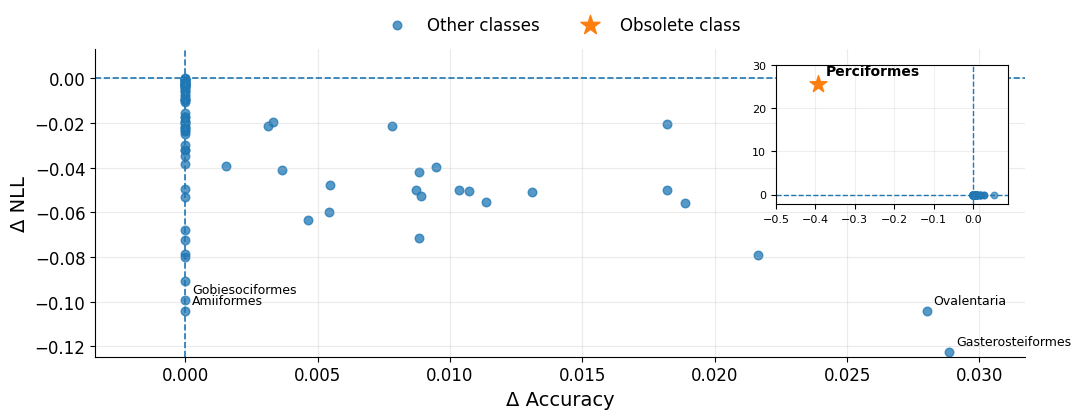}
    \caption{Per-class effect of Deep Repurposing on FishNet after removing the uncertainty-triggered obsolete class (Perciformes). Each point shows the change in accuracy and Laplace NLL for a retained fish order after repurposing, with the dashed lines indicating no change. The inset shows the full-scale view including Perciformes, whose accuracy drops sharply and whose NLL increases substantially after removal, reflecting the intended suppression of the obsolete class.}
    \label{fig:fishnet_delta_acc_nll}
\end{figure}

Overall, the FishNet experiment demonstrates a complementary use case for Deep Repurposing. The Mars rover case study shows that DR can be triggered by mission changes, hardware degradation, or post-hoc taxonomy repair. In contrast, the FishNet case study shows that uncertainty estimation can form part of a closed operational loop: a deployed model identifies a class that is persistently uncertain and poorly performing, scientists mark it as unreliable, and DR rapidly suppresses that output while preserving the remaining task. On an A40 GPU, DR completes this adaptation in $7.53 \pm 0.20$ seconds, supporting its use as a lightweight post-deployment repair mechanism for imbalanced real-world classification systems.


\section{Additional Experiments}
\label{sec:appendix-Additional Experiments}

In this section we provide additional experimental results supporting the main evaluation. We report the full results, calibration and confidence diagnostics, representation-level behaviour, retained-set performance across difficulty levels, and extended regression and segmentation results. We further compare DR against simple logit suppression and provide ablations of its main components, calibration-set size, and residual refinement.

\subsection{Full Results}
\label{sec:appendix-Full Results}

Table~\ref{tab:appendix_core_results} reports the full classification results seen in Figures~\ref{fig:classification_do_dr_acc} and~\ref{fig:classification_obs_rate_prob}.

\begin{table*}[tb]
\centering
\caption{Full classification results across all datasets reporting retained performance, obsolete-output suppression, and repair time.}
\label{tab:appendix_core_results}
\scriptsize
\setlength{\tabcolsep}{5pt}
\resizebox{\textwidth}{!}{%
\begin{tabular}{lcccc|cccc|c}
\toprule
& \multicolumn{4}{c|}{$\mathbf{D_{\mathrm{r}}}$}
& \multicolumn{4}{c|}{$\mathbf{D_{\mathrm{o}}}$}
& \textbf{Time} \\
\cmidrule(lr){2-5}\cmidrule(lr){6-9}\cmidrule(lr){10-10}
\textbf{Method}
& \textbf{Acc.} $\uparrow$
& \textbf{F1} $\uparrow$
& \textbf{Obs. Rate} $\downarrow$
& \textbf{Obs. Prob.} $\downarrow$
& \textbf{Acc.} $\downarrow$
& \textbf{F1} $\downarrow$
& \textbf{Obs. Rate} $\downarrow$
& \textbf{Obs. Prob.} $\downarrow$
& \textbf{Unlearn (s)} $\downarrow$ \\
\midrule

\multicolumn{10}{c}{\textbf{CIFAR10}} \\
\midrule
Base Model                 & 0.90 $\pm$ 0.00 & 0.82 $\pm$ 0.01 & 0.01 $\pm$ 0.00 & 0.01 $\pm$ 0.00 & 0.90 $\pm$ 0.04 & 0.12 $\pm$ 0.03 & 0.90 $\pm$ 0.04 & 0.88 $\pm$ 0.05 & -- \\
SSD                        & 0.88 $\pm$ 0.01 & 0.82 $\pm$ 0.01 & 0.00 $\pm$ 0.00 & 0.01 $\pm$ 0.00 & 0.36 $\pm$ 0.15 & 0.05 $\pm$ 0.02 & 0.36 $\pm$ 0.15 & 0.33 $\pm$ 0.13 & 2.32 $\pm$ 0.49 \\
LFSSD                      & 0.88 $\pm$ 0.01 & 0.83 $\pm$ 0.02 & 0.00 $\pm$ 0.00 & 0.01 $\pm$ 0.00 & 0.40 $\pm$ 0.20 & 0.05 $\pm$ 0.02 & 0.40 $\pm$ 0.20 & 0.36 $\pm$ 0.19 & 4.30 $\pm$ 1.34 \\
Boundary Shrink            & 0.79 $\pm$ 0.03 & 0.76 $\pm$ 0.03 & 0.00 $\pm$ 0.00 & 0.00 $\pm$ 0.00 & 0.07 $\pm$ 0.01 & 0.01 $\pm$ 0.00 & 0.07 $\pm$ 0.01 & 0.07 $\pm$ 0.01 & 13.12 $\pm$ 0.31 \\
Boundary Expand            & 0.78 $\pm$ 0.02 & 0.73 $\pm$ 0.02 & 0.00 $\pm$ 0.00 & 0.00 $\pm$ 0.00 & 0.13 $\pm$ 0.01 & 0.02 $\pm$ 0.00 & 0.13 $\pm$ 0.01 & 0.12 $\pm$ 0.01 & 19.58 $\pm$ 0.16 \\
Fictitious Class Removal   & 0.75 $\pm$ 0.05 & 0.74 $\pm$ 0.03 & 0.19 $\pm$ 0.05 & 0.19 $\pm$ 0.05 & 0.09 $\pm$ 0.03 & 0.02 $\pm$ 0.00 & 0.09 $\pm$ 0.03 & 0.08 $\pm$ 0.03 & 13.01 $\pm$ 0.20 \\
Class Vector               & 0.90 $\pm$ 0.00 & 0.81 $\pm$ 0.00 & 0.00 $\pm$ 0.00 & 0.01 $\pm$ 0.00 & 0.17 $\pm$ 0.09 & 0.03 $\pm$ 0.01 & 0.17 $\pm$ 0.09 & 0.15 $\pm$ 0.03 & 10.83 $\pm$ 0.12 \\
Fisher Forgetting          & 0.82 $\pm$ 0.03 & 0.82 $\pm$ 0.03 & 0.00 $\pm$ 0.00 & 0.00 $\pm$ 0.00 & 0.01 $\pm$ 0.01 & 0.00 $\pm$ 0.00 & 0.01 $\pm$ 0.01 & 0.01 $\pm$ 0.01 & 18.68 $\pm$ 0.32 \\
Finetune                   & 0.92 $\pm$ 0.01 & 0.90 $\pm$ 0.02 & 0.00 $\pm$ 0.00 & 0.00 $\pm$ 0.00 & 0.14 $\pm$ 0.07 & 0.02 $\pm$ 0.01 & 0.14 $\pm$ 0.07 & 0.14 $\pm$ 0.06 & 78.70 $\pm$ 0.36 \\
Retrain                    & 0.91 $\pm$ 0.01 & 0.91 $\pm$ 0.01 & 0.00 $\pm$ 0.00 & 0.00 $\pm$ 0.00 & 0.00 $\pm$ 0.00 & 0.00 $\pm$ 0.00 & 0.00 $\pm$ 0.00 & 0.00 $\pm$ 0.00 & 162.17 $\pm$ 1.12 \\
Deep Repurposing           & 0.91 $\pm$ 0.01 & 0.91 $\pm$ 0.01 & 0.00 $\pm$ 0.00 & 0.00 $\pm$ 0.00 & 0.00 $\pm$ 0.00 & 0.00 $\pm$ 0.00 & 0.00 $\pm$ 0.00 & 0.00 $\pm$ 0.00 & 1.62 $\pm$ 0.04 \\

\midrule
\multicolumn{10}{c}{\textbf{CIFAR100}} \\
\midrule
Base Model                 & 0.80 $\pm$ 0.00 & 0.79 $\pm$ 0.00 & 0.00 $\pm$ 0.00 & 0.01 $\pm$ 0.00 & 0.78 $\pm$ 0.21 & 0.13 $\pm$ 0.09 & 0.78 $\pm$ 0.21 & 0.67 $\pm$ 0.20 & -- \\
SSD                        & 0.70 $\pm$ 0.16 & 0.70 $\pm$ 0.16 & 0.00 $\pm$ 0.00 & 0.00 $\pm$ 0.00 & 0.00 $\pm$ 0.00 & 0.00 $\pm$ 0.00 & 0.00 $\pm$ 0.00 & 0.01 $\pm$ 0.00 & 3.53 $\pm$ 0.67 \\
LFSSD                      & 0.79 $\pm$ 0.01 & 0.79 $\pm$ 0.01 & 0.00 $\pm$ 0.00 & 0.00 $\pm$ 0.00 & 0.01 $\pm$ 0.03 & 0.00 $\pm$ 0.01 & 0.01 $\pm$ 0.03 & 0.02 $\pm$ 0.02 & 3.59 $\pm$ 1.34 \\
Boundary Shrink            & 0.62 $\pm$ 0.07 & 0.62 $\pm$ 0.07 & 0.00 $\pm$ 0.00 & 0.01 $\pm$ 0.00 & 0.05 $\pm$ 0.03 & 0.00 $\pm$ 0.00 & 0.05 $\pm$ 0.03 & 0.02 $\pm$ 0.00 & 2.94 $\pm$ 0.13 \\
Boundary Expand            & 0.62 $\pm$ 0.07 & 0.62 $\pm$ 0.07 & 0.00 $\pm$ 0.00 & 0.00 $\pm$ 0.00 & 0.05 $\pm$ 0.03 & 0.00 $\pm$ 0.00 & 0.05 $\pm$ 0.03 & 0.02 $\pm$ 0.01 & 4.27 $\pm$ 0.07 \\
Fictitious Class Removal   & 0.32 $\pm$ 0.12 & 0.40 $\pm$ 0.13 & 0.59 $\pm$ 0.15 & 0.57 $\pm$ 0.16 & 0.07 $\pm$ 0.08 & 0.01 $\pm$ 0.02 & 0.07 $\pm$ 0.08 & 0.07 $\pm$ 0.08 & 29.53 $\pm$ 0.15 \\
Class Vector               & 0.80 $\pm$ 0.00 & 0.79 $\pm$ 0.00 & 0.00 $\pm$ 0.00 & 0.01 $\pm$ 0.00 & 0.66 $\pm$ 0.24 & 0.09 $\pm$ 0.08 & 0.66 $\pm$ 0.24 & 0.07 $\pm$ 0.01 & 12.89 $\pm$ 0.30 \\
Fisher Forgetting          & 0.77 $\pm$ 0.01 & 0.77 $\pm$ 0.01 & 0.00 $\pm$ 0.00 & 0.00 $\pm$ 0.00 & 0.17 $\pm$ 0.23 & 0.03 $\pm$ 0.05 & 0.17 $\pm$ 0.23 & 0.11 $\pm$ 0.14 & 37.01 $\pm$ 0.41 \\
Finetune                   & 0.79 $\pm$ 0.00 & 0.79 $\pm$ 0.01 & 0.00 $\pm$ 0.00 & 0.00 $\pm$ 0.00 & 0.28 $\pm$ 0.28 & 0.03 $\pm$ 0.05 & 0.28 $\pm$ 0.28 & 0.26 $\pm$ 0.25 & 184.58 $\pm$ 2.63 \\
Retrain                    & 0.79 $\pm$ 0.00 & 0.79 $\pm$ 0.00 & 0.00 $\pm$ 0.00 & 0.00 $\pm$ 0.00 & 0.00 $\pm$ 0.00 & 0.00 $\pm$ 0.00 & 0.00 $\pm$ 0.00 & 0.00 $\pm$ 0.00 & 546.37 $\pm$ 9.79 \\
Deep Repurposing           & 0.80 $\pm$ 0.00 & 0.80 $\pm$ 0.00 & 0.00 $\pm$ 0.00 & 0.00 $\pm$ 0.00 & 0.00 $\pm$ 0.00 & 0.00 $\pm$ 0.00 & 0.00 $\pm$ 0.00 & 0.00 $\pm$ 0.00 & 8.25 $\pm$ 0.24 \\

\midrule
\multicolumn{10}{c}{\textbf{Flowers102}} \\
\midrule
Base Model                 & 0.87 $\pm$ 0.00 & 0.86 $\pm$ 0.00 & 0.00 $\pm$ 0.00 & 0.01 $\pm$ 0.00 & 0.82 $\pm$ 0.26 & 0.43 $\pm$ 0.47 & 0.82 $\pm$ 0.26 & 0.55 $\pm$ 0.30 & -- \\
SSD                        & 0.28 $\pm$ 0.43 & 0.27 $\pm$ 0.43 & 0.00 $\pm$ 0.00 & 0.01 $\pm$ 0.00 & 0.00 $\pm$ 0.00 & 0.00 $\pm$ 0.00 & 0.00 $\pm$ 0.00 & 0.01 $\pm$ 0.01 & 97.80 $\pm$ 0.96 \\
LFSSD                      & 0.83 $\pm$ 0.04 & 0.82 $\pm$ 0.04 & 0.00 $\pm$ 0.00 & 0.00 $\pm$ 0.01 & 0.51 $\pm$ 0.31 & 0.08 $\pm$ 0.07 & 0.51 $\pm$ 0.31 & 0.18 $\pm$ 0.17 & 101.10 $\pm$ 1.04 \\
Boundary Shrink            & 0.79 $\pm$ 0.02 & 0.78 $\pm$ 0.02 & 0.00 $\pm$ 0.00 & 0.00 $\pm$ 0.00 & 0.23 $\pm$ 0.23 & 0.03 $\pm$ 0.04 & 0.23 $\pm$ 0.23 & 0.06 $\pm$ 0.05 & 1.86 $\pm$ 0.19 \\
Boundary Expand            & 0.80 $\pm$ 0.02 & 0.79 $\pm$ 0.02 & 0.00 $\pm$ 0.00 & 0.00 $\pm$ 0.00 & 0.26 $\pm$ 0.24 & 0.03 $\pm$ 0.04 & 0.26 $\pm$ 0.24 & 0.07 $\pm$ 0.06 & 2.33 $\pm$ 0.28 \\
Fictitious Class Removal   & 0.62 $\pm$ 0.08 & 0.70 $\pm$ 0.07 & 0.30 $\pm$ 0.08 & 0.26 $\pm$ 0.09 & 0.11 $\pm$ 0.10 & 0.01 $\pm$ 0.01 & 0.11 $\pm$ 0.10 & 0.09 $\pm$ 0.08 & 3.35 $\pm$ 0.12 \\
Class Vector               & 0.87 $\pm$ 0.00 & 0.85 $\pm$ 0.00 & 0.00 $\pm$ 0.00 & 0.00 $\pm$ 0.00 & 0.76 $\pm$ 0.29 & 0.39 $\pm$ 0.50 & 0.76 $\pm$ 0.29 & 0.42 $\pm$ 0.30 & 7.75 $\pm$ 0.04 \\
Fisher Forgetting          & 0.84 $\pm$ 0.02 & 0.83 $\pm$ 0.02 & 0.00 $\pm$ 0.00 & 0.00 $\pm$ 0.00 & 0.33 $\pm$ 0.53 & 0.29 $\pm$ 0.47 & 0.33 $\pm$ 0.53 & 0.14 $\pm$ 0.22 & 15.38 $\pm$ 3.62 \\
Finetune                   & 0.89 $\pm$ 0.00 & 0.89 $\pm$ 0.00 & 0.00 $\pm$ 0.00 & 0.00 $\pm$ 0.00 & 0.78 $\pm$ 0.26 & 0.35 $\pm$ 0.42 & 0.78 $\pm$ 0.26 & 0.70 $\pm$ 0.29 & 56.36 $\pm$ 1.16 \\
Retrain                    & 0.91 $\pm$ 0.00 & 0.91 $\pm$ 0.00 & 0.00 $\pm$ 0.00 & 0.00 $\pm$ 0.00 & 0.00 $\pm$ 0.00 & 0.00 $\pm$ 0.00 & 0.00 $\pm$ 0.00 & 0.00 $\pm$ 0.00 & 113.74 $\pm$ 2.04 \\
Deep Repurposing           & 0.88 $\pm$ 0.00 & 0.87 $\pm$ 0.00 & 0.00 $\pm$ 0.00 & 0.00 $\pm$ 0.00 & 0.00 $\pm$ 0.00 & 0.00 $\pm$ 0.00 & 0.00 $\pm$ 0.00 & 0.00 $\pm$ 0.00 & 3.33 $\pm$ 0.28 \\

\midrule
\multicolumn{10}{c}{\textbf{ImageNet}} \\
\midrule
Base Model                 & 0.81 $\pm$ 0.00 & 0.80 $\pm$ 0.00 & 0.00 $\pm$ 0.00 & 0.00 $\pm$ 0.00 & 0.68 $\pm$ 0.29 & 0.14 $\pm$ 0.14 & 0.68 $\pm$ 0.29 & 0.33 $\pm$ 0.18 & -- \\
SSD                        & 0.02 $\pm$ 0.06 & 0.03 $\pm$ 0.06 & 0.00 $\pm$ 0.00 & 0.00 $\pm$ 0.00 & 0.00 $\pm$ 0.00 & 0.00 $\pm$ 0.00 & 0.00 $\pm$ 0.00 & 0.00 $\pm$ 0.00 & 1367.01 $\pm$ 66.71 \\
LFSSD                      & 0.80 $\pm$ 0.00 & 0.80 $\pm$ 0.00 & 0.00 $\pm$ 0.00 & 0.00 $\pm$ 0.00 & 0.00 $\pm$ 0.00 & 0.00 $\pm$ 0.00 & 0.00 $\pm$ 0.00 & 0.00 $\pm$ 0.00 & 1804.29 $\pm$ 75.08 \\
Boundary Shrink            & 0.76 $\pm$ 0.01 & 0.76 $\pm$ 0.01 & 0.00 $\pm$ 0.00 & 0.00 $\pm$ 0.00 & 0.24 $\pm$ 0.18 & 0.02 $\pm$ 0.02 & 0.24 $\pm$ 0.18 & 0.05 $\pm$ 0.02 & 0.81 $\pm$ 0.28 \\
Boundary Expand            & 0.76 $\pm$ 0.02 & 0.76 $\pm$ 0.01 & 0.00 $\pm$ 0.00 & 0.00 $\pm$ 0.00 & 0.25 $\pm$ 0.18 & 0.02 $\pm$ 0.01 & 0.25 $\pm$ 0.18 & 0.05 $\pm$ 0.02 & 1.41 $\pm$ 0.52 \\
Fictitious Class Removal   & 0.56 $\pm$ 0.22 & 0.60 $\pm$ 0.19 & 0.31 $\pm$ 0.28 & 0.30 $\pm$ 0.27 & 0.54 $\pm$ 0.48 & 0.53 $\pm$ 0.49 & 0.54 $\pm$ 0.48 & 0.53 $\pm$ 0.49 & 104.07 $\pm$ 81.55 \\
Class Vector               & 0.80 $\pm$ 0.00 & 0.79 $\pm$ 0.00 & 0.00 $\pm$ 0.00 & 0.00 $\pm$ 0.00 & 0.66 $\pm$ 0.28 & 0.17 $\pm$ 0.18 & 0.66 $\pm$ 0.28 & 0.14 $\pm$ 0.11 & 136.25 $\pm$ 64.34 \\
Fisher Forgetting          & 0.74 $\pm$ 0.01 & 0.74 $\pm$ 0.01 & 0.00 $\pm$ 0.00 & 0.00 $\pm$ 0.00 & 0.26 $\pm$ 0.32 & 0.03 $\pm$ 0.06 & 0.26 $\pm$ 0.32 & 0.06 $\pm$ 0.08 & 121.92 $\pm$ 117.96 \\
Finetune                   & 0.78 $\pm$ 0.00 & 0.78 $\pm$ 0.00 & 0.00 $\pm$ 0.00 & 0.00 $\pm$ 0.00 & 0.40 $\pm$ 0.30 & 0.05 $\pm$ 0.06 & 0.40 $\pm$ 0.30 & 0.37 $\pm$ 0.27 & 670.90 $\pm$ 404.28 \\
Retrain                    & 0.58 $\pm$ 0.00 & 0.58 $\pm$ 0.00 & 0.00 $\pm$ 0.00 & 0.00 $\pm$ 0.00 & 0.00 $\pm$ 0.00 & 0.00 $\pm$ 0.00 & 0.00 $\pm$ 0.00 & 0.00 $\pm$ 0.00 & 1167.15 $\pm$ 48.04 \\
Deep Repurposing           & 0.81 $\pm$ 0.00 & 0.80 $\pm$ 0.00 & 0.00 $\pm$ 0.00 & 0.00 $\pm$ 0.00 & 0.00 $\pm$ 0.00 & 0.00 $\pm$ 0.00 & 0.00 $\pm$ 0.00 & 0.00 $\pm$ 0.00 & 28.82 $\pm$ 44.07 \\

\bottomrule
\end{tabular}%
}
\end{table*}

Across datasets, DR achieves the most consistent repair results across all state-of-the-art comparative methods. It preserves retained utility at or above the base model level while reducing both obsolete prediction rate and obsolete probability to $0.00$ on $D_{\mathrm{o}}$. For example, on CIFAR10, DR matches retraining the full model in retained accuracy and F1, removes obsolete behaviour completely, and requires only $1.62$ seconds on average compared with $162.17$ seconds on average for retraining the model. The same pattern holds on CIFAR100, where DR preserves the base model's retained accuracy, slightly improves F1, and removes obsolete outputs in $8.25$ seconds, while retraining requires $546.37$ seconds on average. We also see that SSD and LFSSD results vary highly between datasets, showing how unstable they may be when unlearning a specific class.

Table~\ref{tab:appendix_calibration_results} reports calibration diagnostics for the classification experiments. On the retained split, DR can improve the calibration profile of the base model while removing obsolete behaviour. For CIFAR10, DR matches the base model's retained ECE at $0.03$ and improves retained NLL by $-0.03$ on average. On CIFAR100, DR again matches the base model's retained NLL and ECE, while methods such as SSD, boundary shifting, fictitious class removal, and class-vector editing introduce noticeably larger calibration errors.

\begin{table*}[htb]
\centering
\caption{Classification calibration results across all datasets reporting negative log-likelihood (NLL) and expected calibration error (ECE) on retained and obsolete test splits.}
\label{tab:appendix_calibration_results}
\scriptsize
\setlength{\tabcolsep}{3.0pt}
\resizebox{\textwidth}{!}{%
\begin{tabular}{lcccc|cccc}
\toprule
& \multicolumn{4}{c|}{\textbf{CIFAR10}}
& \multicolumn{4}{c}{\textbf{CIFAR100}} \\
\cmidrule(lr){2-5}\cmidrule(lr){6-9}
\textbf{Method}
& $\mathbf{D_r}$ NLL $\downarrow$
& $\mathbf{D_r}$ ECE $\downarrow$
& $\mathbf{D_o}$ NLL $\downarrow$
& $\mathbf{D_o}$ ECE $\downarrow$
& $\mathbf{D_r}$ NLL $\downarrow$
& $\mathbf{D_r}$ ECE $\downarrow$
& $\mathbf{D_o}$ NLL $\downarrow$
& $\mathbf{D_o}$ ECE $\downarrow$ \\
\midrule
Base Model               & 0.30 $\pm$ 0.01 & 0.03 $\pm$ 0.00 & 0.30 $\pm$ 0.13 & 0.04 $\pm$ 0.02 & 0.80 $\pm$ 0.01 & 0.03 $\pm$ 0.01 & 0.83 $\pm$ 0.64 & 0.14 $\pm$ 0.06 \\
SSD                      & 0.36 $\pm$ 0.04 & 0.02 $\pm$ 0.00 & 2.07 $\pm$ 0.62 & 0.32 $\pm$ 0.10 & 1.51 $\pm$ 0.82 & 0.21 $\pm$ 0.07 & 5.73 $\pm$ 0.62 & 0.16 $\pm$ 0.05 \\
LFSSD                    & 0.36 $\pm$ 0.04 & 0.03 $\pm$ 0.00 & 2.25 $\pm$ 0.92 & 0.32 $\pm$ 0.14 & 0.86 $\pm$ 0.04 & 0.07 $\pm$ 0.03 & 4.61 $\pm$ 0.39 & 0.35 $\pm$ 0.10 \\
Boundary Shrink          & 0.72 $\pm$ 0.13 & 0.08 $\pm$ 0.03 & 4.82 $\pm$ 0.23 & 0.60 $\pm$ 0.02 & 2.12 $\pm$ 0.25 & 0.35 $\pm$ 0.05 & 4.78 $\pm$ 0.07 & 0.10 $\pm$ 0.04 \\
Boundary Expand          & 0.67 $\pm$ 0.06 & 0.02 $\pm$ 0.00 & 3.59 $\pm$ 0.16 & 0.43 $\pm$ 0.01 & 1.81 $\pm$ 0.29 & 0.19 $\pm$ 0.03 & 4.98 $\pm$ 0.05 & 0.16 $\pm$ 0.03 \\
Fictitious Class Removal & 0.93 $\pm$ 0.27 & 0.15 $\pm$ 0.05 & 120.38 $\pm$ 16.31 & 0.62 $\pm$ 0.04 & 6.87 $\pm$ 6.68 & 0.53 $\pm$ 0.20 & 123.66 $\pm$ 117.85 & 0.25 $\pm$ 0.15 \\
Class Vector             & 0.35 $\pm$ 0.01 & 0.04 $\pm$ 0.01 & 2.09 $\pm$ 0.17 & 0.24 $\pm$ 0.04 & 1.14 $\pm$ 0.14 & 0.28 $\pm$ 0.09 & 2.87 $\pm$ 0.13 & 0.60 $\pm$ 0.23 \\
Fisher Forgetting        & 0.57 $\pm$ 0.11 & 0.07 $\pm$ 0.02 & 8.48 $\pm$ 0.76 & 0.77 $\pm$ 0.02 & 0.92 $\pm$ 0.04 & 0.05 $\pm$ 0.01 & 4.98 $\pm$ 2.14 & 0.40 $\pm$ 0.17 \\
Finetune                 & 0.26 $\pm$ 0.02 & 0.03 $\pm$ 0.00 & 3.94 $\pm$ 0.83 & 0.62 $\pm$ 0.10 & 0.89 $\pm$ 0.01 & 0.11 $\pm$ 0.00 & 4.37 $\pm$ 2.60 & 0.50 $\pm$ 0.26 \\
Retrain                  & 0.27 $\pm$ 0.02 & 0.03 $\pm$ 0.00 & $\infty$ & 0.81 $\pm$ 0.03 & 0.76 $\pm$ 0.01 & 0.48 $\pm$ 0.01 & $\infty$ & 0.70 $\pm$ 0.05 \\
Deep Repurposing         & 0.27 $\pm$ 0.02 & 0.03 $\pm$ 0.00 & 17.50 $\pm$ 2.65 & 0.60 $\pm$ 0.03 & 0.80 $\pm$ 0.01 & 0.03 $\pm$ 0.00 & 22.05 $\pm$ 6.43 & 0.29 $\pm$ 0.20 \\

\midrule
& \multicolumn{4}{c|}{\textbf{Flowers102}}
& \multicolumn{4}{c}{\textbf{ImageNet}} \\
\cmidrule(lr){2-5}\cmidrule(lr){6-9}
\textbf{Method}
& $\mathbf{D_r}$ NLL $\downarrow$
& $\mathbf{D_r}$ ECE $\downarrow$
& $\mathbf{D_o}$ NLL $\downarrow$
& $\mathbf{D_o}$ ECE $\downarrow$
& $\mathbf{D_r}$ NLL $\downarrow$
& $\mathbf{D_r}$ ECE $\downarrow$
& $\mathbf{D_o}$ NLL $\downarrow$
& $\mathbf{D_o}$ ECE $\downarrow$ \\
\midrule
Base Model               & 0.86 $\pm$ 0.00 & 0.30 $\pm$ 0.00 & 0.93 $\pm$ 0.85 & 0.25 $\pm$ 0.13 & 1.40 $\pm$ 0.00 & 0.41 $\pm$ 0.00 & 1.90 $\pm$ 1.12 & 0.33 $\pm$ 0.16 \\
SSD                      & 10.04 $\pm$ 11.36 & 0.47 $\pm$ 0.43 & 5.55 $\pm$ 1.53 & 0.36 $\pm$ 0.48 & 12.06 $\pm$ 8.44 & 0.40 $\pm$ 0.17 & 12.76 $\pm$ 7.90 & 0.42 $\pm$ 0.15 \\
LFSSD                    & 1.16 $\pm$ 0.28 & 0.37 $\pm$ 0.05 & 2.30 $\pm$ 0.90 & 0.33 $\pm$ 0.15 & 1.41 $\pm$ 0.00 & 0.41 $\pm$ 0.00 & 7.62 $\pm$ 0.39 & 0.19 $\pm$ 0.03 \\
Boundary Shrink          & 1.60 $\pm$ 0.13 & 0.47 $\pm$ 0.03 & 3.82 $\pm$ 1.34 & 0.19 $\pm$ 0.13 & 1.77 $\pm$ 0.21 & 0.44 $\pm$ 0.04 & 4.57 $\pm$ 0.84 & 0.20 $\pm$ 0.13 \\
Boundary Expand          & 1.46 $\pm$ 0.10 & 0.44 $\pm$ 0.03 & 3.74 $\pm$ 1.39 & 0.20 $\pm$ 0.13 & 1.82 $\pm$ 0.20 & 0.45 $\pm$ 0.04 & 4.56 $\pm$ 0.80 & 0.20 $\pm$ 0.14 \\
Fictitious Class Removal & 1.80 $\pm$ 0.79 & 0.28 $\pm$ 0.13 & 42.48 $\pm$ 28.54 & 0.14 $\pm$ 0.02 & 13.38 $\pm$ 9.79 & 0.56 $\pm$ 0.13 & 8.05 $\pm$ 8.56 & 0.05 $\pm$ 0.06 \\
Class Vector             & 1.03 $\pm$ 0.01 & 0.37 $\pm$ 0.01 & 1.32 $\pm$ 1.02 & 0.32 $\pm$ 0.13 & 2.62 $\pm$ 0.14 & 0.66 $\pm$ 0.02 & 3.13 $\pm$ 1.31 & 0.50 $\pm$ 0.19 \\
Fisher Forgetting        & 1.24 $\pm$ 0.12 & 0.41 $\pm$ 0.02 & 3.91 $\pm$ 2.46 & 0.27 $\pm$ 0.26 & 1.77 $\pm$ 0.01 & 0.41 $\pm$ 0.01 & 4.57 $\pm$ 1.84 & 0.24 $\pm$ 0.19 \\
Finetune                 & 0.40 $\pm$ 0.00 & 0.02 $\pm$ 0.00 & 0.76 $\pm$ 0.86 & 0.13 $\pm$ 0.11 & 0.95 $\pm$ 0.00 & 0.10 $\pm$ 0.00 & 3.56 $\pm$ 2.47 & 0.36 $\pm$ 0.20 \\
Retrain                  & 0.37 $\pm$ 0.01 & 0.05 $\pm$ 0.00 & $\infty$ & 0.49 $\pm$ 0.11 & 1.85 $\pm$ 0.00 & 0.04 $\pm$ 0.00 & $\infty$ & 0.32 $\pm$ 0.06 \\
Deep Repurposing         & 0.76 $\pm$ 0.01 & 0.26 $\pm$ 0.01 & 35.50 $\pm$ 0.02 & 0.19 $\pm$ 0.02 & 1.40 $\pm$ 0.00 & 0.41 $\pm$ 0.00 & 29.48 $\pm$ 7.41 & 0.66 $\pm$ 0.20 \\
\bottomrule
\end{tabular}%
}
\end{table*}

The comparative methods show that obsolete suppression often comes with calibration instability. SSD and LFSSD can produce large retained NLL or ECE in some settings, particularly on CIFAR100 and Flowers102, while boundary-based methods tend to increase retained uncertainty as a by-product of moving the decision boundary.

It should be noted that the obsolete split should be interpreted differently from the retained split. Since $D_{\mathrm{o}}$ is evaluated using the original labels, a method that correctly removes the obsolete class can receive a high NLL because the original target is no longer treated as valid. This explains the large $D_{\mathrm{o}}$ NLL values for DR and the infinite NLL values for retraining (since the model is retrained without the obsolete output nodes). Therefore, $D_{\mathrm{o}}$ calibration is most informative when read alongside the obsolete prediction rates from previous results.

Table~\ref{tab:appendix_conf_entr_results} reports confidence and entropy results for the classification experiments. These results complement Table~\ref{tab:appendix_core_results} by showing whether obsolete suppression is achieved with sensible predictive estimates, rather than only by changing the final class decision. On retained data, DR generally keeps the prediction distribution close to the base model. For example, on CIFAR10, DR has retained confidence/entropy of $0.94/0.18$, compared with $0.93/0.19$ for the base model, and on CIFAR100, it remains close at $0.81/0.90$ compared with $0.79/1.11$ on average. This suggests that DR does not suppress obsolete classes by globally flattening the classifier.

\begin{table*}[htb]
\centering
\caption{Confidence and predictive entropy results across all classification datasets on retained and obsolete test splits.}
\label{tab:appendix_conf_entr_results}
\scriptsize
\setlength{\tabcolsep}{3.0pt}

\begin{tabular}{lcccc|cccc}
\toprule
& \multicolumn{4}{c|}{\textbf{CIFAR10}}
& \multicolumn{4}{c}{\textbf{CIFAR100}} \\
\cmidrule(lr){2-5}\cmidrule(lr){6-9}
\textbf{Method}
& $\mathbf{D_r}$ Conf. $\downarrow$
& $\mathbf{D_r}$ Ent. $\downarrow$
& $\mathbf{D_o}$ Conf. $\downarrow$
& $\mathbf{D_o}$ Ent. $\downarrow$
& $\mathbf{D_r}$ Conf. $\downarrow$
& $\mathbf{D_r}$ Ent. $\downarrow$
& $\mathbf{D_o}$ Conf. $\downarrow$
& $\mathbf{D_o}$ Ent. $\downarrow$ \\
\midrule
Base Model               & 0.93 $\pm$ 0.00 & 0.19 $\pm$ 0.01 & 0.93 $\pm$ 0.02 & 0.19 $\pm$ 0.07 & 0.79 $\pm$ 0.01 & 1.11 $\pm$ 0.00 & 0.78 $\pm$ 0.09 & 1.09 $\pm$ 0.22 \\
SSD                      & 0.90 $\pm$ 0.02 & 0.31 $\pm$ 0.05 & 0.64 $\pm$ 0.07 & 1.05 $\pm$ 0.22 & 0.48 $\pm$ 0.21 & 2.60 $\pm$ 0.85 & 0.16 $\pm$ 0.04 & 3.92 $\pm$ 0.19 \\
LFSSD                    & 0.91 $\pm$ 0.01 & 0.26 $\pm$ 0.03 & 0.70 $\pm$ 0.08 & 0.86 $\pm$ 0.22 & 0.73 $\pm$ 0.03 & 1.42 $\pm$ 0.17 & 0.36 $\pm$ 0.10 & 2.90 $\pm$ 0.55 \\
Boundary Shrink          & 0.86 $\pm$ 0.01 & 0.40 $\pm$ 0.03 & 0.67 $\pm$ 0.01 & 0.97 $\pm$ 0.04 & 0.27 $\pm$ 0.03 & 3.65 $\pm$ 0.11 & 0.14 $\pm$ 0.02 & 4.05 $\pm$ 0.06 \\
Boundary Expand          & 0.80 $\pm$ 0.02 & 0.63 $\pm$ 0.05 & 0.55 $\pm$ 0.01 & 1.31 $\pm$ 0.02 & 0.43 $\pm$ 0.04 & 2.87 $\pm$ 0.16 & 0.20 $\pm$ 0.01 & 3.70 $\pm$ 0.06 \\
Fictitious Class Removal & 0.90 $\pm$ 0.01 & 0.26 $\pm$ 0.02 & 0.71 $\pm$ 0.05 & 0.86 $\pm$ 0.14 & 0.85 $\pm$ 0.08 & 0.62 $\pm$ 0.28 & 0.32 $\pm$ 0.23 & 3.18 $\pm$ 1.08 \\
Class Vector             & 0.88 $\pm$ 0.01 & 0.39 $\pm$ 0.02 & 0.28 $\pm$ 0.04 & 2.03 $\pm$ 0.08 & 0.52 $\pm$ 0.09 & 2.60 $\pm$ 0.50 & 0.11 $\pm$ 0.03 & 4.24 $\pm$ 0.14 \\
Fisher Forgetting        & 0.89 $\pm$ 0.01 & 0.31 $\pm$ 0.03 & 0.78 $\pm$ 0.03 & 0.61 $\pm$ 0.07 & 0.73 $\pm$ 0.02 & 1.40 $\pm$ 0.07 & 0.52 $\pm$ 0.08 & 2.22 $\pm$ 0.42 \\
Finetune                 & 0.95 $\pm$ 0.00 & 0.13 $\pm$ 0.01 & 0.76 $\pm$ 0.04 & 0.66 $\pm$ 0.10 & 0.90 $\pm$ 0.00 & 0.31 $\pm$ 0.01 & 0.77 $\pm$ 0.06 & 0.75 $\pm$ 0.21 \\
Retrain                  & 0.94 $\pm$ 0.00 & 0.18 $\pm$ 0.01 & 0.81 $\pm$ 0.03 & 0.53 $\pm$ 0.09 & 0.86 $\pm$ 0.00 & 0.48 $\pm$ 0.01 & 0.70 $\pm$ 0.05 & 0.99 $\pm$ 0.20 \\
Deep Repurposing         & 0.94 $\pm$ 0.00 & 0.18 $\pm$ 0.01 & 0.59 $\pm$ 0.05 & 1.12 $\pm$ 0.08 & 0.81 $\pm$ 0.01 & 0.90 $\pm$ 0.05 & 0.29 $\pm$ 0.20 & 3.25 $\pm$ 0.97 \\

\midrule
& \multicolumn{4}{c|}{\textbf{Flowers102}}
& \multicolumn{4}{c}{\textbf{ImageNet}} \\
\cmidrule(lr){2-5}\cmidrule(lr){6-9}
\textbf{Method}
& $\mathbf{D_r}$ Conf. $\downarrow$
& $\mathbf{D_r}$ Ent. $\downarrow$
& $\mathbf{D_o}$ Conf. $\downarrow$
& $\mathbf{D_o}$ Ent. $\downarrow$
& $\mathbf{D_r}$ Conf. $\downarrow$
& $\mathbf{D_r}$ Ent. $\downarrow$
& $\mathbf{D_o}$ Conf. $\downarrow$
& $\mathbf{D_o}$ Ent. $\downarrow$ \\
\midrule
Base Model               & 0.57 $\pm$ 0.00 & 2.13 $\pm$ 0.01 & 0.57 $\pm$ 0.28 & 2.06 $\pm$ 1.10 & 0.40 $\pm$ 0.00 & 4.39 $\pm$ 0.00 & 0.39 $\pm$ 0.13 & 4.21 $\pm$ 0.71 \\
SSD                      & 0.44 $\pm$ 0.43 & 2.65 $\pm$ 2.02 & 0.36 $\pm$ 0.48 & 2.95 $\pm$ 2.17 & 0.41 $\pm$ 0.16 & 3.35 $\pm$ 1.30 & 0.42 $\pm$ 0.15 & 3.25 $\pm$ 1.20 \\
LFSSD                    & 0.47 $\pm$ 0.09 & 2.60 $\pm$ 0.40 & 0.23 $\pm$ 0.16 & 3.47 $\pm$ 0.65 & 0.40 $\pm$ 0.00 & 4.37 $\pm$ 0.01 & 0.19 $\pm$ 0.03 & 5.17 $\pm$ 0.23 \\
Boundary Shrink          & 0.31 $\pm$ 0.04 & 3.30 $\pm$ 0.16 & 0.12 $\pm$ 0.03 & 3.95 $\pm$ 0.13 & 0.33 $\pm$ 0.05 & 4.86 $\pm$ 0.37 & 0.11 $\pm$ 0.02 & 5.98 $\pm$ 0.20 \\
Boundary Expand          & 0.36 $\pm$ 0.03 & 3.07 $\pm$ 0.13 & 0.13 $\pm$ 0.04 & 3.86 $\pm$ 0.16 & 0.31 $\pm$ 0.05 & 4.97 $\pm$ 0.34 & 0.10 $\pm$ 0.02 & 6.13 $\pm$ 0.18 \\
Fictitious Class Removal & 0.63 $\pm$ 0.05 & 1.72 $\pm$ 0.23 & 0.23 $\pm$ 0.07 & 3.39 $\pm$ 0.33 & 0.57 $\pm$ 0.16 & 3.12 $\pm$ 1.16 & 0.59 $\pm$ 0.44 & 2.79 $\pm$ 2.98 \\
Class Vector             & 0.49 $\pm$ 0.01 & 2.52 $\pm$ 0.03 & 0.44 $\pm$ 0.28 & 2.67 $\pm$ 1.10 & 0.13 $\pm$ 0.01 & 6.21 $\pm$ 0.10 & 0.15 $\pm$ 0.11 & 6.00 $\pm$ 0.58 \\
Fisher Forgetting        & 0.43 $\pm$ 0.04 & 2.80 $\pm$ 0.16 & 0.21 $\pm$ 0.16 & 3.66 $\pm$ 0.52 & 0.33 $\pm$ 0.01 & 4.73 $\pm$ 0.07 & 0.16 $\pm$ 0.04 & 5.64 $\pm$ 0.22 \\
Finetune                 & 0.88 $\pm$ 0.00 & 0.46 $\pm$ 0.00 & 0.78 $\pm$ 0.19 & 0.81 $\pm$ 0.65 & 0.87 $\pm$ 0.00 & 0.40 $\pm$ 0.00 & 0.72 $\pm$ 0.07 & 0.97 $\pm$ 0.26 \\
Retrain                  & 0.86 $\pm$ 0.00 & 0.68 $\pm$ 0.01 & 0.49 $\pm$ 0.11 & 2.06 $\pm$ 0.56 & 0.54 $\pm$ 0.00 & 2.06 $\pm$ 0.01 & 0.32 $\pm$ 0.06 & 2.97 $\pm$ 0.36 \\
Deep Repurposing         & 0.62 $\pm$ 0.01 & 1.77 $\pm$ 0.04 & 0.19 $\pm$ 0.02 & 3.28 $\pm$ 0.15 & 0.40 $\pm$ 0.00 & 4.39 $\pm$ 0.00 & 0.66 $\pm$ 0.20 & 2.18 $\pm$ 1.33 \\
\bottomrule
\end{tabular}
\end{table*}

On obsolete data, DR usually converts highly confident obsolete predictions into lower-confidence retained alternatives. On CIFAR10, the base model remains highly confident on $D_{\mathrm{o}}$ achieving $0.93$ confidence and $0.19$ entropy on average, while DR reduces confidence to $0.59$ and increases entropy to $1.12$. The same pattern is stronger on CIFAR100, where DR changes $D_{\mathrm{o}}$ confidence/entropy from $0.78/1.09$ to $0.29/3.25$. This indicates that obsolete samples are no longer treated as confident members of the retired class, but are redistributed across retained alternatives.

The comparative methods show why this diagnostic is useful as Fine-tuning often preserves sharp predictions on obsolete samples, for example, $0.76/0.66$ on CIFAR10 and $0.78/0.81$ on Flowers102, consistent with its incomplete obsolete suppression in Table~\ref{tab:appendix_core_results}. The boundary-based methods can reduce obsolete confidence, such as $0.14/4.05$ on CIFAR100 for Boundary Shrink, but this coincides with much lower retained accuracy in the full predictive results. Overall, DR provides the most balanced behaviour: retained predictions remain usable, while obsolete samples lose the high-confidence profile of the base model.

Table~\ref{tab:appendix_representation_results} reports representation-level results, testing whether obsolete behaviour remains visible in the latent space after repair. This is important as a model may have obsolete predictions suppressed, but the system in which the model is being used utilises some latent space metrics for its decision-making. We report four diagnostics. Membership inference attack (MIA) denotes the area under the receiver operating characteristic curve (AUC) of a membership-inference classifier trained to distinguish retained from obsolete samples from the repaired representation; values closer to $0.5$ indicate weaker separability. Linear probe denotes the AUC of a linear classifier trained to recover obsolete-region membership, measuring how easily obsolete information remains linearly accessible. Near-obsolete centroid rate measures the fraction of samples whose nearest region prototype is obsolete, testing whether samples remain geometrically closest to the retired region. Obsolete direction score measures the alignment of representations with the obsolete-retained direction, where larger values indicate stronger movement toward obsolete geometry.

Overall, DR substantially reduces obsolete-region geometry on $D_{\mathrm{o}}$ while maintaining retained utility. On CIFAR10, DR reduces the near-obsolete-centroid rate to $0.04$ and the obsolete-direction score to $0.30$, compared with $0.90/0.60$ for Fictitious Class Removal and $0.68/0.42$ for fine-tuning. On CIFAR100, DR similarly gives low obsolete-centroid and direction scores of $0.05/0.15$, whereas class-vector editing and fine-tuning leave much stronger obsolete structure at $0.49/0.32$ and $0.54/0.38$, respectively.

\begin{table*}[tb]
\centering
\caption{Representation-level classification results across all datasets on retained and obsolete test splits. Metrics report membership-inference attack AUROC, linear-probe AUROC, proximity to obsolete centroids, and alignment with obsolete-retained directions.}
\label{tab:appendix_representation_results}
\scriptsize
\setlength{\tabcolsep}{5pt}
\resizebox{\textwidth}{!}{%
\begin{tabular}{lcc|cc|cc}
\toprule
& \multicolumn{2}{c|}{AUC}
& \multicolumn{2}{c|}{$\mathbf{D_{\mathrm{r}}}$}
& \multicolumn{2}{c}{$\mathbf{D_{\mathrm{o}}}$} \\
\cmidrule(lr){2-3}\cmidrule(lr){4-5}\cmidrule(lr){6-7}
\textbf{Method}
& \textbf{MIA} $\downarrow$
& \textbf{Probe} $\downarrow$
& \textbf{Near Obs. Centroid Rate} $\downarrow$
& \textbf{Obs. Direction} $\downarrow$
& \textbf{Near Obs. Centroid Rate} $\downarrow$
& \textbf{Obs. Direction} $\downarrow$ \\
\midrule

\multicolumn{7}{c}{\textbf{CIFAR10}} \\
\midrule
Base Model                 & -- & -- & -- & -- & -- & -- \\
SSD                        & 0.52 $\pm$ 0.02 & 0.98 $\pm$ 0.01 & 0.00 $\pm$ 0.00 & -0.01 $\pm$ 0.01 & 0.40 $\pm$ 0.15 & 0.26 $\pm$ 0.05 \\
LFSSD                      & 0.51 $\pm$ 0.02 & 0.98 $\pm$ 0.01 & 0.00 $\pm$ 0.01 & -0.03 $\pm$ 0.02 & 0.44 $\pm$ 0.22 & 0.28 $\pm$ 0.11 \\
Boundary Shrink            & 0.52 $\pm$ 0.02 & 0.96 $\pm$ 0.01 & 0.00 $\pm$ 0.00 & -0.07 $\pm$ 0.01 & 0.10 $\pm$ 0.03 & 0.08 $\pm$ 0.02 \\
Boundary Expand            & 0.53 $\pm$ 0.02 & 0.96 $\pm$ 0.01 & 0.00 $\pm$ 0.00 & -0.05 $\pm$ 0.02 & 0.13 $\pm$ 0.04 & 0.09 $\pm$ 0.02 \\
Fictitious Class Removal   & 0.49 $\pm$ 0.01  & 0.98 $\pm$ 0.01  & 0.01 $\pm$ 0.00 & 0.00 $\pm$ 0.00 & 0.90 $\pm$ 0.04 & 0.60 $\pm$ 0.06 \\
Class Vector               & 0.54 $\pm$ 0.01 & 0.99 $\pm$ 0.01 & 0.01 $\pm$ 0.01 & -0.01 $\pm$ 0.01 & 0.13 $\pm$ 0.17 & 0.13 $\pm$ 0.03 \\
Fisher Forgetting          & 0.51 $\pm$ 0.01 & 0.97 $\pm$ 0.02 & 0.00 $\pm$ 0.00 & -0.08 $\pm$ 0.01 & 0.01 $\pm$ 0.01 & -0.00 $\pm$ 0.04 \\
Finetune                   & 0.52 $\pm$ 0.02 & 0.98 $\pm$ 0.01 & 0.00 $\pm$ 0.00 & 0.06 $\pm$ 0.01 & 0.68 $\pm$ 0.06 & 0.42 $\pm$ 0.05 \\
Retrain                    & 0.50 $\pm$ 0.00 & 0.96 $\pm$ 0.02 & 0.04 $\pm$ 0.05 & 0.07 $\pm$ 0.01 & 0.30 $\pm$ 0.16 & 0.16 $\pm$ 0.02 \\
Deep Repurposing           & 0.52 $\pm$ 0.02 & 0.97 $\pm$ 0.01 & 0.00 $\pm$ 0.00 & -0.00 $\pm$ 0.00 & 0.04 $\pm$ 0.00 & 0.30 $\pm$ 0.04 \\

\midrule
\multicolumn{7}{c}{\textbf{CIFAR100}} \\
\midrule
Base Model                 & -- & -- & -- & -- & -- & -- \\
SSD                        & 0.52 $\pm$ 0.04 & 0.98 $\pm$ 0.02 & 0.00 $\pm$ 0.00 & 0.04 $\pm$ 0.02 & 0.36 $\pm$ 0.33 & 0.25 $\pm$ 0.10 \\
LFSSD                      & 0.58 $\pm$ 0.09 & 0.96 $\pm$ 0.05 & 0.00 $\pm$ 0.00 & 0.02 $\pm$ 0.02 & 0.49 $\pm$ 0.29 & 0.38 $\pm$ 0.10 \\
Boundary Shrink            & 0.50 $\pm$ 0.05 & 0.92 $\pm$ 0.05 & 0.00 $\pm$ 0.00 & -0.06 $\pm$ 0.04 & 0.05 $\pm$ 0.03 & 0.04 $\pm$ 0.01 \\
Boundary Expand            & 0.51 $\pm$ 0.04 & 0.92 $\pm$ 0.06 & 0.00 $\pm$ 0.00 & -0.06 $\pm$ 0.01 & 0.05 $\pm$ 0.02 & 0.03 $\pm$ 0.02 \\
Fictitious Class Removal   & 0.42 $\pm$ 0.05 & 0.95 $\pm$ 0.05 & 0.00 $\pm$ 0.00 & 0.00 $\pm$ 0.00 & 0.79 $\pm$ 0.20 & 0.64 $\pm$ 0.13 \\
Class Vector               & 0.65 $\pm$ 0.07 & 0.95 $\pm$ 0.05 & 0.00 $\pm$ 0.00 & -0.05 $\pm$ 0.02 & 0.49 $\pm$ 0.34 & 0.32 $\pm$ 0.08 \\
Fisher Forgetting          & 0.56 $\pm$ 0.09 & 0.97 $\pm$ 0.02 & 0.00 $\pm$ 0.00 & -0.08 $\pm$ 0.02 & 0.19 $\pm$ 0.21 & 0.20 $\pm$ 0.19 \\
Finetune                   & 0.58 $\pm$ 0.12 & 0.97 $\pm$ 0.03 & 0.00 $\pm$ 0.00 & 0.00 $\pm$ 0.04 & 0.54 $\pm$ 0.28 & 0.38 $\pm$ 0.07 \\
Retrain                    & 0.50 $\pm$ 0.00 & 0.97 $\pm$ 0.02 & 0.00 $\pm$ 0.00 & 0.00 $\pm$ 0.04 & 0.27 $\pm$ 0.14 & 0.21 $\pm$ 0.05 \\
Deep Repurposing           & 0.56 $\pm$ 0.09 & 0.91 $\pm$ 0.08 & 0.00 $\pm$ 0.00 & 0.00 $\pm$ 0.00 & 0.05 $\pm$ 0.09 & 0.15 $\pm$ 0.15 \\

\midrule
\multicolumn{7}{c}{\textbf{Flowers102}} \\
\midrule
Base Model                 & -- & -- & -- & -- & -- & -- \\
SSD                        & 0.53 $\pm$ 0.12 & 0.81 $\pm$ 0.28 & 0.00 $\pm$ 0.00 & -0.03 $\pm$ 0.05 & 0.23 $\pm$ 0.39 & 0.08 $\pm$ 0.22 \\
LFSSD                      & 0.87 $\pm$ 0.11 & 1.00 $\pm$ 0.00 & 0.00 $\pm$ 0.00 & 0.00 $\pm$ 0.01 & 0.78 $\pm$ 0.18 & 0.36 $\pm$ 0.11 \\
Boundary Shrink            & 0.86 $\pm$ 0.12 & 0.99 $\pm$ 0.01 & 0.00 $\pm$ 0.00 & -0.12 $\pm$ 0.03 & 0.42 $\pm$ 0.20 & 0.23 $\pm$ 0.13 \\
Boundary Expand            & 0.86 $\pm$ 0.13 & 0.99 $\pm$ 0.01 & 0.00 $\pm$ 0.00 & -0.13 $\pm$ 0.01 & 0.41 $\pm$ 0.20 & 0.23 $\pm$ 0.11 \\
Fictitious Class Removal   & 0.38 $\pm$ 0.10 & 1.00 $\pm$ 0.00 & 0.00 $\pm$ 0.00 & 0.00 $\pm$ 0.00 & 0.88 $\pm$ 0.14 & 0.49 $\pm$ 0.22 \\
Class Vector               & 0.94 $\pm$ 0.07 & 1.00 $\pm$ 0.00 & 0.00 $\pm$ 0.00 & -0.02 $\pm$ 0.03 & 0.78 $\pm$ 0.21 & 0.43 $\pm$ 0.23 \\
Fisher Forgetting          & 0.85 $\pm$ 0.15 & 1.00 $\pm$ 0.00 & 0.00 $\pm$ 0.00 & -0.09 $\pm$ 0.04 & 0.46 $\pm$ 0.44 & 0.29 $\pm$ 0.26 \\
Finetune                   & 0.91 $\pm$ 0.08 & 1.00 $\pm$ 0.00 & 0.00 $\pm$ 0.00 & -0.01 $\pm$ 0.05 & 0.83 $\pm$ 0.25 & 0.47 $\pm$ 0.17 \\
Retrain                    & 0.50 $\pm$ 0.00 & 1.00 $\pm$ 0.00 & 0.00 $\pm$ 0.01 & -0.01 $\pm$ 0.04 & 0.47 $\pm$ 0.31 & 0.24 $\pm$ 0.06 \\
Deep Repurposing           & 0.53 $\pm$ 0.10 & 0.99 $\pm$ 0.02 & 0.00 $\pm$ 0.00 & 0.00 $\pm$ 0.00 & 0.63 $\pm$ 0.23 & 0.21 $\pm$ 0.04 \\

\midrule
\multicolumn{7}{c}{\textbf{ImageNet}} \\
\midrule
Base Model                 & -- & -- & -- & -- & -- & -- \\
SSD                        & 0.49 $\pm$ 0.06 & 0.62 $\pm$ 0.11 & 0.00 $\pm$ 0.00 & 0.05 $\pm$ 0.03 & 0.00 $\pm$ 0.00 & 0.06 $\pm$ 0.04 \\
LFSSD                      & 0.43 $\pm$ 0.05 & 0.86 $\pm$ 0.14 & 0.00 $\pm$ 0.00 & 0.01 $\pm$ 0.02 & 0.20 $\pm$ 0.17 & 0.33 $\pm$ 0.06 \\
Boundary Shrink            & 0.55 $\pm$ 0.06 & 0.81 $\pm$ 0.13 & 0.00 $\pm$ 0.00 & -0.04 $\pm$ 0.02 & 0.13 $\pm$ 0.12 & 0.24 $\pm$ 0.07 \\
Boundary Expand            & 0.53 $\pm$ 0.06 & 0.83 $\pm$ 0.11 & 0.00 $\pm$ 0.00 & -0.04 $\pm$ 0.02 & 0.15 $\pm$ 0.13 & 0.24 $\pm$ 0.08 \\
Fictitious Class Removal   & 0.48 $\pm$ 0.05 & 0.92 $\pm$ 0.08 & 0.00 $\pm$ 0.00 & -0.01 $\pm$ 0.01 & 0.58 $\pm$ 0.23 & 0.50 $\pm$ 0.15 \\
Class Vector               & 0.47 $\pm$ 0.03 & 0.91 $\pm$ 0.09 & 0.00 $\pm$ 0.00 & -0.03 $\pm$ 0.03 & 0.45 $\pm$ 0.31 & 0.41 $\pm$ 0.21 \\
Fisher Forgetting          & 0.38 $\pm$ 0.03 & 0.81 $\pm$ 0.19 & 0.00 $\pm$ 0.00 & -0.00 $\pm$ 0.01 & 0.32 $\pm$ 0.30 & 0.39 $\pm$ 0.15 \\
Finetune                   & 0.52 $\pm$ 0.04 & 0.90 $\pm$ 0.10 & 0.00 $\pm$ 0.00 & 0.03 $\pm$ 0.04 & 0.51 $\pm$ 0.18 & 0.46 $\pm$ 0.08 \\
Retrain                    & 0.50 $\pm$ 0.00 & 0.87 $\pm$ 0.10 & 0.00 $\pm$ 0.00 & 0.05 $\pm$ 0.06 & 0.39 $\pm$ 0.19 & 0.42 $\pm$ 0.05 \\
Deep Repurposing           & 0.53 $\pm$ 0.03 & 0.87 $\pm$ 0.12 & 0.00 $\pm$ 0.00 & -0.01 $\pm$ 0.01 & 0.11 $\pm$ 0.10 & 0.23 $\pm$ 0.08 \\

\bottomrule
\end{tabular}%
}
\end{table*}

The same pattern is visible on ImageNet, where DR obtains $0.11$ near-obsolete-centroid rate and $0.23$ obsolete-direction score. This is substantially lower than fine-tuning ($0.51/0.46$), class-vector editing ($0.45/0.41$), and retraining ($0.39/0.42$), while preserving the base retained accuracy. Boundary methods can also reduce latent obsolete structure, but this often coincides with retained-performance loss; for example, on CIFAR100, Boundary Shrink gives a similar centroid rate to DR ($0.05$), but its retained accuracy is only $0.62$ compared with $0.80$ for DR.

Table~\ref{tab:appendix_quartile_results} reports retained-set performance by difficulty quartile, testing whether a method preserves retained utility uniformly or only on easy retained samples. DR is stable across the easier quartiles and remains competitive on the hardest quartile, consistently improving accuracy and F1 on the most difficult quartile (Q4). On CIFAR10, DR obtains perfect accuracy and F1 on Q1 to Q3, and improves the hardest quartile over the base model from $0.61/0.56$ to $0.64/0.64$ in accuracy/F1. A similar pattern holds on CIFAR100, where DR preserves Q1 to Q3 at $1.00$ and slightly improves Q4 from $0.19/0.19$ to $0.20/0.20$.

\begin{table*}[tb]
\centering
\caption{Retained-set classification results by difficulty quartile across all datasets. Results report accuracy and F1 from the easiest retained examples (Q1) to the hardest retained examples (Q4).}
\label{tab:appendix_quartile_results}
\scriptsize
\setlength{\tabcolsep}{5pt}
\resizebox{\textwidth}{!}{%
\begin{tabular}{lcccc|cccc}
\toprule
& \multicolumn{4}{c|}{$\mathbf{D_{\mathrm{r}}}$ Accuracy}
& \multicolumn{4}{c}{$\mathbf{D_{\mathrm{r}}}$ F1}\\
\cmidrule(lr){2-5}\cmidrule(lr){6-9}
\textbf{Method}
& \textbf{Q1} $\uparrow$
& \textbf{Q2} $\uparrow$
& \textbf{Q3} $\uparrow$
& \textbf{Q4} $\uparrow$
& \textbf{Q1} $\uparrow$
& \textbf{Q2} $\uparrow$
& \textbf{Q3} $\uparrow$
& \textbf{Q4} $\uparrow$ \\
\midrule

\multicolumn{9}{c}{\textbf{CIFAR10}} \\
\midrule
Base Model                 & 1.00 $\pm$ 0.00 & 1.00 $\pm$ 0.00 & 1.00 $\pm$ 0.00 & 0.61 $\pm$ 0.02 & 1.00 $\pm$ 0.00 & 1.00 $\pm$ 0.00 & 1.00 $\pm$ 0.00 & 0.56 $\pm$ 0.02 \\
SSD                        & 0.99 $\pm$ 0.00 & 0.99 $\pm$ 0.01 & 0.96 $\pm$ 0.02 & 0.58 $\pm$ 0.03 & 0.99 $\pm$ 0.00 & 0.98 $\pm$ 0.01 & 0.95 $\pm$ 0.02 & 0.53 $\pm$ 0.03 \\
LFSSD                      & 0.99 $\pm$ 0.00 & 0.99 $\pm$ 0.01 & 0.96 $\pm$ 0.02 & 0.57 $\pm$ 0.03 & 0.99 $\pm$ 0.00 & 0.99 $\pm$ 0.01 & 0.95 $\pm$ 0.01 & 0.26 $\pm$ 0.22 \\
Boundary Shrink            & 0.98 $\pm$ 0.01 & 0.93 $\pm$ 0.03 & 0.81 $\pm$ 0.06 & 0.44 $\pm$ 0.05 & 0.97 $\pm$ 0.01 & 0.92 $\pm$ 0.03 & 0.80 $\pm$ 0.05 & 0.43 $\pm$ 0.04 \\
Boundary Expand            & 0.97 $\pm$ 0.01 & 0.92 $\pm$ 0.02 & 0.80 $\pm$ 0.04 & 0.44 $\pm$ 0.03 & 0.97 $\pm$ 0.01 & 0.92 $\pm$ 0.02 & 0.78 $\pm$ 0.03 & 0.40 $\pm$ 0.02 \\
Fictitious Class Removal   & 0.94 $\pm$ 0.04 & 0.86 $\pm$ 0.06 & 0.78 $\pm$ 0.06 & 0.43 $\pm$ 0.03 & 0.87 $\pm$ 0.02 & 0.83 $\pm$ 0.03 & 0.78 $\pm$ 0.04 & 0.46 $\pm$ 0.02 \\
Class Vector               & 1.00 $\pm$ 0.00 & 1.00 $\pm$ 0.00 & 1.00 $\pm$ 0.00 & 0.61 $\pm$ 0.01 & 1.00 $\pm$ 0.00 & 1.00 $\pm$ 0.00 & 0.998 $\pm$ 0.00 & 0.54 $\pm$ 0.01 \\
Fisher Forgetting          & 0.99 $\pm$ 0.01 & 0.95 $\pm$ 0.02 & 0.85 $\pm$ 0.05 & 0.49 $\pm$ 0.04 & 0.99 $\pm$ 0.01 & 0.95 $\pm$ 0.02 & 0.85 $\pm$ 0.04 & 0.47 $\pm$ 0.04 \\
Finetune                   & 1.00 $\pm$ 0.00 & 0.99 $\pm$ 0.00 & 0.99 $\pm$ 0.00 & 0.69 $\pm$ 0.02 & 1.00 $\pm$ 0.00 & 1.00 $\pm$ 0.00 & 0.99 $\pm$ 0.00 & 0.67 $\pm$ 0.03 \\
Retrain                    & 1.00 $\pm$ 0.00 & 1.00 $\pm$ 0.00 & 0.98 $\pm$ 0.00 & 0.66 $\pm$ 0.02 & 1.00 $\pm$ 0.00 & 1.00 $\pm$ 0.00 & 0.98 $\pm$ 0.00 & 0.66 $\pm$ 0.02 \\
Deep Repurposing           & 1.00 $\pm$ 0.00 & 1.00 $\pm$ 0.00 & 1.00 $\pm$ 0.00 & 0.64 $\pm$ 0.03 & 1.00 $\pm$ 0.00 & 1.00 $\pm$ 0.00 & 1.00 $\pm$ 0.00 & 0.64 $\pm$ 0.02 \\

\midrule
\multicolumn{9}{c}{\textbf{CIFAR100}} \\
\midrule
Base Model                 & 1.00 $\pm$ 0.00 & 1.00 $\pm$ 0.00 & 1.00 $\pm$ 0.00 & 0.19 $\pm$ 0.01 & 1.00 $\pm$ 0.00 & 1.00 $\pm$ 0.00 & 1.00 $\pm$ 0.00 & 0.19 $\pm$ 0.01 \\
SSD                        & 0.89 $\pm$ 0.18 & 0.89 $\pm$ 0.17 & 0.82 $\pm$ 0.27 & 0.18 $\pm$ 0.02 & 0.88 $\pm$ 0.19 & 0.88 $\pm$ 0.19 & 0.81 $\pm$ 0.28 & 0.17 $\pm$ 0.04 \\
LFSSD                      & 0.99 $\pm$ 0.00 & 0.99 $\pm$ 0.00 & 0.97 $\pm$ 0.02 & 0.21 $\pm$ 0.01 & 0.99 $\pm$ 0.00 & 0.99 $\pm$ 0.00 & 0.97 $\pm$ 0.02 & 0.20 $\pm$ 0.01 \\
Boundary Shrink            & 0.84 $\pm$ 0.10 & 0.85 $\pm$ 0.08 & 0.61 $\pm$ 0.09 & 0.19 $\pm$ 0.02 & 0.82 $\pm$ 0.10 & 0.81 $\pm$ 0.09 & 0.59 $\pm$ 0.09 & 0.16 $\pm$ 0.02 \\
Boundary Expand            & 0.84 $\pm$ 0.10 & 0.85 $\pm$ 0.08 & 0.61 $\pm$ 0.08 & 0.19 $\pm$ 0.02 & 0.82 $\pm$ 0.10 & 0.81 $\pm$ 0.09 & 0.59 $\pm$ 0.09 & 0.16 $\pm$ 0.02 \\
Fictitious Class Removal   & 0.42 $\pm$ 0.18 & 0.37 $\pm$ 0.17 & 0.40 $\pm$ 0.13 & 0.09 $\pm$ 0.02 & 0.50 $\pm$ 0.20 & 0.47 $\pm$ 0.17 & 0.51 $\pm$ 0.14 & 0.12 $\pm$ 0.02 \\
Class Vector               & 1.00 $\pm$ 0.00 & 1.00 $\pm$ 0.00 & 0.99 $\pm$ 0.00 & 0.19 $\pm$ 0.01 & 1.00 $\pm$ 0.00 & 1.00 $\pm$ 0.00 & 0.99 $\pm$ 0.00 & 0.19 $\pm$ 0.01 \\
Fisher Forgetting          & 0.99 $\pm$ 0.00 & 0.99 $\pm$ 0.01 & 0.89 $\pm$ 0.04 & 0.21 $\pm$ 0.01 & 0.99 $\pm$ 0.00 & 0.99 $\pm$ 0.01 & 0.89 $\pm$ 0.04 & 0.20 $\pm$ 0.01 \\
Finetune                   & 0.99 $\pm$ 0.00 & 0.99 $\pm$ 0.00 & 0.88 $\pm$ 0.00 & 0.30 $\pm$ 0.01 & 0.99 $\pm$ 0.00 & 0.98 $\pm$ 0.00 & 0.88 $\pm$ 0.00 & 0.29 $\pm$ 0.01 \\
Retrain                    & 0.99 $\pm$ 0.00 & 0.98 $\pm$ 0.00 & 0.87 $\pm$ 0.00 & 0.31 $\pm$ 0.01 & 0.98 $\pm$ 0.00 & 0.98 $\pm$ 0.00 & 0.87 $\pm$ 0.00 & 0.30 $\pm$ 0.01 \\
Deep Repurposing           & 1.00 $\pm$ 0.00 & 1.00 $\pm$ 0.00 & 1.00 $\pm$ 0.00 & 0.20 $\pm$ 0.01 & 1.00 $\pm$ 0.00 & 1.00 $\pm$ 0.00 & 1.00 $\pm$ 0.00 & 0.20 $\pm$ 0.01 \\

\midrule
\multicolumn{9}{c}{\textbf{Flowers102}} \\
\midrule
Base Model                 & 1.00 $\pm$ 0.00 & 1.00 $\pm$ 0.00 & 0.99 $\pm$ 0.00 & 0.49 $\pm$ 0.01 & 1.00 $\pm$ 0.00 & 1.00 $\pm$ 0.00 & 0.99 $\pm$ 0.00 & 0.46 $\pm$ 0.00 \\
SSD                        & 0.34 $\pm$ 0.53 & 0.34 $\pm$ 0.52 & 0.30 $\pm$ 0.47 & 0.13 $\pm$ 0.20 & 0.33 $\pm$ 0.53 & 0.32 $\pm$ 0.52 & 0.29 $\pm$ 0.46 & 0.11 $\pm$ 0.19 \\
LFSSD                      & 1.00 $\pm$ 0.01 & 0.98 $\pm$ 0.03 & 0.92 $\pm$ 0.09 & 0.43 $\pm$ 0.06 & 0.99 $\pm$ 0.02 & 0.98 $\pm$ 0.03 & 0.91 $\pm$ 0.10 & 0.41 $\pm$ 0.05 \\
Boundary Shrink            & 1.00 $\pm$ 0.00 & 0.97 $\pm$ 0.02 & 0.84 $\pm$ 0.04 & 0.35 $\pm$ 0.04 & 0.99 $\pm$ 0.01 & 0.96 $\pm$ 0.02 & 0.82 $\pm$ 0.04 & 0.34 $\pm$ 0.04 \\
Boundary Expand            & 1.00 $\pm$ 0.00 & 0.97 $\pm$ 0.02 & 0.85 $\pm$ 0.04 & 0.36 $\pm$ 0.04 & 0.99 $\pm$ 0.00 & 0.96 $\pm$ 0.02 & 0.83 $\pm$ 0.03 & 0.35 $\pm$ 0.04 \\
Fictitious Class Removal   & 0.78 $\pm$ 0.13 & 0.72 $\pm$ 0.10 & 0.67 $\pm$ 0.07 & 0.30 $\pm$ 0.02 & 0.85 $\pm$ 0.09 & 0.81 $\pm$ 0.08 & 0.76 $\pm$ 0.05 & 0.35 $\pm$ 0.01 \\
Class Vector               & 1.00 $\pm$ 0.00 & 1.00 $\pm$ 0.00 & 0.99 $\pm$ 0.00 & 0.48 $\pm$ 0.01 & 1.00 $\pm$ 0.00 & 1.00 $\pm$ 0.00 & 0.99 $\pm$ 0.00 & 0.43 $\pm$ 0.01 \\
Fisher Forgetting          & 1.00 $\pm$ 0.00 & 0.99 $\pm$ 0.01 & 0.93 $\pm$ 0.04 & 0.41 $\pm$ 0.04 & 1.00 $\pm$ 0.00 & 0.99 $\pm$ 0.01 & 0.92 $\pm$ 0.04 & 0.38 $\pm$ 0.03 \\
Finetune                   & 1.00 $\pm$ 0.00 & 1.00 $\pm$ 0.00 & 0.98 $\pm$ 0.00 & 0.59 $\pm$ 0.01 & 1.00 $\pm$ 0.00 & 1.00 $\pm$ 0.00 & 0.97 $\pm$ 0.00 & 0.57 $\pm$ 0.01 \\
Retrain                    & 1.00 $\pm$ 0.00 & 0.99 $\pm$ 0.00 & 0.95 $\pm$ 0.00 & 0.70 $\pm$ 0.01 & 1.00 $\pm$ 0.00 & 0.98 $\pm$ 0.00 & 0.95 $\pm$ 0.00 & 0.70 $\pm$ 0.01 \\
Deep Repurposing           & 1.00 $\pm$ 0.00 & 1.00 $\pm$ 0.00 & 1.00 $\pm$ 0.00 & 0.51 $\pm$ 0.01 & 1.00 $\pm$ 0.00 & 1.00 $\pm$ 0.00 & 1.00 $\pm$ 0.00 & 0.48 $\pm$ 0.01 \\

\midrule
\multicolumn{9}{c}{\textbf{ImageNet}} \\
\midrule
Base Model                 & 1.00 $\pm$ 0.00 & 1.00 $\pm$ 0.00 & 0.96 $\pm$ 0.00 & 0.27 $\pm$ 0.00 & 1.00 $\pm$ 0.00 & 1.00 $\pm$ 0.00 & 0.96 $\pm$ 0.00 & 0.26 $\pm$ 0.00 \\
SSD                        & 0.03 $\pm$ 0.08 & 0.03 $\pm$ 0.07 & 0.02 $\pm$ 0.06 & 0.01 $\pm$ 0.02 & 0.03 $\pm$ 0.08 & 0.03 $\pm$ 0.08 & 0.02 $\pm$ 0.06 & 0.01 $\pm$ 0.02 \\
LFSSD                      & 1.00 $\pm$ 0.00 & 1.00 $\pm$ 0.00 & 0.95 $\pm$ 0.00 & 0.27 $\pm$ 0.00 & 1.00 $\pm$ 0.00 & 1.00 $\pm$ 0.00 & 0.95 $\pm$ 0.00 & 0.26 $\pm$ 0.00 \\
Boundary Shrink            & 0.98 $\pm$ 0.01 & 0.97 $\pm$ 0.01 & 0.86 $\pm$ 0.03 & 0.25 $\pm$ 0.01 & 0.97 $\pm$ 0.01 & 0.95 $\pm$ 0.02 & 0.85 $\pm$ 0.03 & 0.24 $\pm$ 0.01 \\
Boundary Expand            & 0.98 $\pm$ 0.01 & 0.97 $\pm$ 0.01 & 0.86 $\pm$ 0.03 & 0.25 $\pm$ 0.01 & 0.97 $\pm$ 0.02 & 0.95 $\pm$ 0.02 & 0.85 $\pm$ 0.03 & 0.24 $\pm$ 0.01 \\
Fictitious Class Removal   & 0.70 $\pm$ 0.28 & 0.73 $\pm$ 0.24 & 0.65 $\pm$ 0.28 & 0.17 $\pm$ 0.10 & 0.74 $\pm$ 0.24 & 0.74 $\pm$ 0.24 & 0.70 $\pm$ 0.25 & 0.18 $\pm$ 0.08 \\
Class Vector               & 1.00 $\pm$ 0.00 & 1.00 $\pm$ 0.00 & 0.93 $\pm$ 0.00 & 0.26 $\pm$ 0.00 & 1.00 $\pm$ 0.00 & 0.99 $\pm$ 0.00 & 0.93 $\pm$ 0.00 & 0.26 $\pm$ 0.00 \\
Fisher Forgetting          & 0.95 $\pm$ 0.01 & 0.95 $\pm$ 0.01 & 0.82 $\pm$ 0.01 & 0.23 $\pm$ 0.00 & 0.95 $\pm$ 0.01 & 0.93 $\pm$ 0.01 & 0.82 $\pm$ 0.01 & 0.22 $\pm$ 0.00 \\
Finetune                   & 0.98 $\pm$ 0.00 & 0.97 $\pm$ 0.00 & 0.86 $\pm$ 0.00 & 0.30 $\pm$ 0.00 & 0.97 $\pm$ 0.00 & 0.95 $\pm$ 0.00 & 0.86 $\pm$ 0.00 & 0.28 $\pm$ 0.00 \\
Retrain                    & 0.71 $\pm$ 0.00 & 0.78 $\pm$ 0.00 & 0.62 $\pm$ 0.00 & 0.22 $\pm$ 0.00 & 0.66 $\pm$ 0.00 & 0.69 $\pm$ 0.00 & 0.60 $\pm$ 0.00 & 0.20 $\pm$ 0.00 \\
Deep Repurposing           & 1.00 $\pm$ 0.00 & 1.00 $\pm$ 0.00 & 0.96 $\pm$ 0.00 & 0.27 $\pm$ 0.00 & 1.00 $\pm$ 0.00 & 1.00 $\pm$ 0.00 & 0.96 $\pm$ 0.00 & 0.26 $\pm$ 0.00 \\

\bottomrule
\end{tabular}%
}
\end{table*}

The comparative methods show that retained-set degradation is often concentrated in harder retained examples. Boundary Shrink and Boundary Expand reduce CIFAR10 Q4 accuracy to $0.44$ on average, compared with $0.64$ for DR, and reduce CIFAR100 Q3 accuracy to $0.61$, compared with $1.00$ for DR. Fictitious Class Removal is more unstable, particularly on CIFAR100, where Q1--Q3 accuracy falls to $0.42$, $0.37$, and $0.40$. SSD also shows dataset-dependent collapse, reaching only $0.34/0.33$ accuracy/F1 on the easiest Flowers102 quartile and $0.03/0.03$ on ImageNet Q1. Fine-tuning and retraining can improve the hardest quartile in some cases, such as CIFAR10 Q4 F1 of $0.67$ and $0.66$, respectively, but this comes with the optimisation cost reported in Table~\ref{tab:appendix_core_results}.

Overall, the classification results show that DR provides a consistent repair across output, calibration, confidence, representation, and retained-difficulty diagnostics. Unlike methods that either suppress obsolete behaviour at the cost of retained utility or preserve accuracy while leaving obsolete probability and latent structure intact, DR removes obsolete predictions and probability mass across all datasets while maintaining retained performance close to the base model or improving it in some cases. Representation-level metrics show that DR also reduces obsolete geometry in the latent space, although some latent traces can remain, as expected for a repair method rather than full erasure. Finally, the quartile analysis indicates that DR does not shift the cost of repair onto difficult retained examples and can actually improve the accuracy of the most difficult examples in the test dataset.

Table~\ref{tab:appendix_utkface_full_results} provides the full UTKFace regression results corresponding to the summary reported in Table~\ref{tab:regression-main-results}. In addition to MAE, RMSE, obsolete prediction rate, and repair time, the appendix table reports within-threshold accuracy. Within 5 and Within 10 denote the fraction of predictions whose absolute error is at most 5 and 10 years, respectively; higher values therefore indicate that more predictions fall within an acceptable age-error tolerance.

\begin{table*}[htb]
\centering
\caption{UTKFace label-region regression results reporting predictive error, within-threshold accuracy, obsolete-region suppression, and repair time.}
\label{tab:appendix_utkface_full_results}
\footnotesize
\setlength{\tabcolsep}{5pt}
\resizebox{\textwidth}{!}{%
\begin{tabular}{llcccccc}
\toprule
Method & Split & MAE $\downarrow$ & RMSE $\downarrow$ & Within 5 $\uparrow$ & Within 10 $\uparrow$ & Obs. rate $\downarrow$ & Repair (s) $\downarrow$ \\
\midrule
\multirow{2}{*}{Base model}
& $D_{\mathrm r}$ & $5.69 \pm 0.06$ & $7.70 \pm 0.10$ & $0.558 \pm 0.006$ & $0.830 \pm 0.003$ & $0.014 \pm 0.002$ & \multirow{2}{*}{--} \\
& $D_{\mathrm o}$ & $10.11 \pm 0.61$ & $13.34 \pm 0.79$ & $0.339 \pm 0.020$ & $0.607 \pm 0.029$ & $0.607 \pm 0.011$ & \\
\midrule

\multirow{2}{*}{Fine-tuning}
& $D_{\mathrm r}$ & $5.28 \pm 0.10$ & $7.16 \pm 0.13$ & $0.587 \pm 0.011$ & $0.855 \pm 0.008$ & $0.003 \pm 0.001$ & \multirow{2}{*}{$282.05 \pm 140.76$} \\
& $D_{\mathrm o}$ & $14.86 \pm 0.79$ & $17.21 \pm 0.74$ & $0.111 \pm 0.024$ & $0.317 \pm 0.047$ & $0.379 \pm 0.040$ & \\
\midrule

\multirow{2}{*}{Retraining}
& $D_{\mathrm r}$ & $4.46 \pm 0.04$ & $6.26 \pm 0.04$ & $0.668 \pm 0.004$ & $0.895 \pm 0.004$ & $0.000 \pm 0.000$ & \multirow{2}{*}{$1208.13 \pm 43.73$} \\
& $D_{\mathrm o}$ & $19.68 \pm 1.07$ & $22.21 \pm 0.98$ & $0.032 \pm 0.021$ & $0.176 \pm 0.044$ & $0.000 \pm 0.000$ & \\
\midrule

\multirow{2}{*}{Deep Repurposing}
& $D_{\mathrm r}$ & $5.68 \pm 0.11$ & $7.66 \pm 0.13$ & $0.556 \pm 0.010$ & $0.830 \pm 0.005$ & $0.000 \pm 0.000$ & \multirow{2}{*}{$16.00 \pm 3.93$} \\
& $D_{\mathrm o}$ & $21.09 \pm 2.35$ & $23.73 \pm 2.24$ & $0.038 \pm 0.014$ & $0.159 \pm 0.076$ & $0.000 \pm 0.000$ & \\
\bottomrule
\end{tabular}%
}
\end{table*}

The results show the expected trade-off between retained regression error and obsolete-region suppression. Retraining gives the strongest retained predictive performance, reducing $D_{\mathrm{r}}$ MAE to $4.46$ and increasing Within 10 to $0.895$, but requires $1208.13$ seconds on average. Fine-tuning also improves retained accuracy, reducing $D_{\mathrm{r}}$ MAE to $5.28$, but only partially suppresses the obsolete region, leaving an obsolete prediction rate of $0.379$ on $D_{\mathrm{o}}$. In contrast, DR preserves the base model's retained performance almost exactly, with $D_{\mathrm{r}}$ MAE changing from $5.69$ to $5.68$ and Within 10 remaining $0.830$, while reducing the obsolete prediction rate to $0.000$ on both splits.

Table~\ref{tab:ade20k-seg-results} reports the ADE20K semantic segmentation results under taxonomy-level obsolescence; these results complement the qualitative example in Figure~\ref{fig:segmentation_example}. The obsolete set consists of fine-grained chair subtypes, namely \texttt{armchair}, \texttt{seat}, \texttt{swivel chair}, and \texttt{stool}, while the remaining ADE20K classes are retained, where the task is taxonomy-obsolescence, where sub-chair classes are all repurposed into a single \texttt{chair} class.

\begin{table*}[htb]
\centering
\caption{ADE20K semantic segmentation results. The obsolete classes are chair subtypes
$\{\texttt{armchair}, \texttt{seat}, \texttt{swivel chair}, \texttt{stool}\}$, while all other ADE20K classes are retained.}
\label{tab:ade20k-seg-results}
\resizebox{\textwidth}{!}{
\begin{tabular}{llccccccc}
\toprule
Method & Split
& Pixel Acc. $\uparrow$
& Ret. Pixel Acc. $\uparrow$
& Ret. mIoU $\uparrow$
& NLL $\downarrow$
& ECE $\downarrow$
& Obs. Pred. $\downarrow$
& Eff. Obs. Prob. $\downarrow$ \\
\midrule
Base & $D_{\mathrm{r}}$
& $0.77 \pm 0.00$
& $0.77 \pm 0.00$
& $0.36 \pm 0.00$
& $\mathbf{0.83 \pm 0.00}$
& $0.06 \pm 0.00$
& $0.00 \pm 0.00$
& $0.00 \pm 0.00$ \\

DR & $D_{\mathrm{r}}$
& $\mathbf{0.78 \pm 0.01}$
& $\mathbf{0.78 \pm 0.01}$
& $\mathbf{0.36 \pm 0.01}$
& $0.84 \pm 0.03$
& $\mathbf{0.06 \pm 0.00}$
& $\mathbf{0.00 \pm 0.00}$
& $\mathbf{0.00 \pm 0.00}$ \\

\midrule

Base & $D_{\mathrm{o}}$
& $\mathbf{0.72 \pm 0.00}$
& $0.74 \pm 0.00$
& $\mathbf{0.25 \pm 0.00}$
& $0.95 \pm 0.00$
& $\mathbf{0.05 \pm 0.00}$
& $0.07 \pm 0.00$
& $0.06 \pm 0.00$ \\

DR & $D_{\mathrm{o}}$
& $0.68 \pm 0.02$
& $\mathbf{0.75 \pm 0.01}$
& $0.25 \pm 0.00$
& -- 
& $0.06 \pm 0.00$
& $\mathbf{0.00 \pm 0.00}$
& $\mathbf{0.00 \pm 0.00}$ \\
\bottomrule
\end{tabular}
}
\end{table*}

On the retained split $D_{\mathrm{r}}$, DR preserves segmentation utility. Pixel accuracy increases slightly from $0.77$ to $0.78$, while retained mIoU remains stable at approximately $0.36$. The obsolete prediction rate and effective obsolete probability are both reduced to zero, indicating that DR removes leakage of obsolete chair-subtype predictions from retained images without degrading retained segmentation quality.

On the obsolete-containing split $D_{\mathrm{o}}$, DR achieves the intended removal of obsolete output behaviour. The obsolete prediction rate decreases from $0.07$ to $0.00$, and the effective obsolete probability mass is also reduced from $0.06$ to $0.00$. This shows that obsolete chair-subtype labels are not merely avoided at the final decision level, but are removed from the repaired probability output. Retained-pixel accuracy improves from $0.74$ to $0.75$, indicating that DR does not harm, and slightly improves the segmentation of the retained regions within images that contain obsolete objects. It does this within $21.71 \pm 0.72$ seconds, showing that DR is a fast, analytical repair tool that can reduce obsolete predictions and improve retained ones.

The lower all-pixel accuracy on $D_{\mathrm{o}}$ reflects evaluation against the original ADE20K labels, where obsolete chair subtypes remain valid targets.

\subsection{Comparison to Logit Suppression}
\label{sec:appendix-Comparison to Logit Suppression}

Logit suppression is the simplest way to remove an obsolete class at deployment: the obsolete logit is masked before prediction, preventing the class from being returned by the final decision rule. However, logit suppression only changes the post-processed prediction and therefore defaults to the next highest prediction instead of deciding the most reasonable next prediction. The original model may still remain highly confident in that decision, and preserve obsolete structure in its latent representation.

\begin{table*}[htb]
\centering
\caption{Diagnostic comparison between Logit Suppression (LS) and Deep Repurposing (DR) on CIFAR10. 
Although both methods remove obsolete predictions after post-processing, LS leaves substantial obsolete probability mass and obsolete latent structure in the underlying model. 
Lower is better for obsolete probability, centroid rate, direction score, confidence, and unlearning time.}
\label{tab:ls_dr_diagnostic}
\footnotesize
\setlength{\tabcolsep}{5pt}
\begin{tabular}{lcc|ccc}
\toprule
& \multicolumn{2}{c|}{$\mathbf{D_{\mathrm{r}}}$ utility}
& \multicolumn{3}{c}{$\mathbf{D_{\mathrm{o}}}$ obsolete-output behaviour} \\
\cmidrule(lr){2-3}
\cmidrule(lr){4-6}
\textbf{Method}
& \textbf{Acc.} $\uparrow$
& \textbf{F1} $\uparrow$
& \textbf{Obs. Rate} $\downarrow$
& \textbf{Raw Obs. Prob.} $\downarrow$
& \textbf{Unlearn (s)} $\downarrow$ \\
\midrule
Logit Suppression
& $0.91 \pm 0.01$
& $0.91 \pm 0.01$
& $0.00 \pm 0.00$
& $0.88 \pm 0.05$
& $0.00 \pm 0.00$ \\

Deep Repurposing
& $0.91 \pm 0.01$
& $0.91 \pm 0.01$
& $0.00 \pm 0.00$
& $\mathbf{0.00 \pm 0.00}$
& $1.16 \pm 0.04$ \\
\bottomrule
\end{tabular}

\vspace{0.65em}
\resizebox{\textwidth}{!}{%
\begin{tabular}{lcccc|cc}
\toprule
& \multicolumn{4}{c|}{$\mathbf{D_{\mathrm{o}}}$ confidence diagnostics}
& \multicolumn{2}{c}{$\mathbf{D_{\mathrm{o}}}$ Latent diagnostics} \\
\cmidrule(lr){2-5}
\cmidrule(lr){6-7}
\textbf{Method}
& \textbf{Eff. Conf.} $\downarrow$
& \textbf{Raw Conf.} $\downarrow$
& \textbf{Eff. Ent.} $\uparrow$
& \textbf{Raw Ent.} $\uparrow$
& \textbf{Near Obs. Cent.} $\downarrow$
& \textbf{Obs. Dir.} $\downarrow$ \\
\midrule
Logit Suppression
& $0.69 \pm 0.05$
& $0.93 \pm 0.02$
& $0.91 \pm 0.13$
& $0.19 \pm 0.07$
& $0.90 \pm 0.04$
& $0.60 \pm 0.06$ \\

Deep Repurposing
& $\mathbf{0.60 \pm 0.03}$
& $\mathbf{0.60 \pm 0.03}$
& $\mathbf{1.12 \pm 0.08}$
& $\mathbf{1.12 \pm 0.08}$
& $\mathbf{0.04 \pm 0.05}$
& $\mathbf{0.30 \pm 0.02}$ \\
\bottomrule
\end{tabular}%
}
\end{table*}

Table~\ref{tab:ls_dr_diagnostic} compares logit suppression and DR on CIFAR10. Both methods achieve the same retained accuracy and F1, and both reduce the effective obsolete prediction rate on $D_{\mathrm{o}}$ to zero. However, their behaviour differs substantially before post-processing. Under logit suppression, the raw obsolete probability remains high at $0.88 \pm 0.05$, showing that the model still strongly supports the obsolete class even though this prediction is hidden from the output. In contrast, DR reduces the raw obsolete probability to $0.00 \pm 0.00$, indicating that obsolete behaviour is removed from the repaired output itself rather than only masked externally.

The confidence and latent diagnostics show the same pattern. Logit suppression produces a mismatch between effective and raw uncertainty: after masking, confidence decreases and entropy increases, but the raw model remains highly confident with low entropy. DR avoids this mismatch, with matching effective and raw confidence and entropy because the repaired output is itself obsolete-free. Moreover, logit suppression leaves most obsolete samples nearest to the obsolete centroid ($0.90 \pm 0.04$), whereas DR reduces this rate to $0.04 \pm 0.05$. DR also lowers the obsolete-direction score from $0.60 \pm 0.06$ to $0.30 \pm 0.02$, suggesting that obsolete samples are moved away from the obsolete representation direction.

These results clarify the distinction between naive output suppression and deep repurposing. Logit suppression is essentially cost-free and is sufficient when the only requirement is to hide an obsolete label from the final prediction. DR incurs a small adaptation cost of $1.16 \pm 0.04$ seconds, but provides a stronger repair: it removes obsolete probability mass from the raw output and reduces obsolete structure in the latent space while preserving retained utility. This is important when model outputs, confidences, or representations are consumed by downstream modules, where a hidden but still internally active obsolete class may remain problematic.

\subsection{Ablation Study}
\label{sec:appendix-Ablation Study}

Next, we ablate the hyperparameters that Deep Repurposing introduces: the stripping margin $m$, the Bayes evidence weight $\beta$, and the contrastive repair weight $\lambda$. The margin $m$ controls the strength of obsolete-retained margin suppression during latent stripping. The weight $\beta$ controls how much retained prototype evidence contributes after stripping, while $\lambda$ controls the contribution of the contrastive gain used to redirect removed evidence toward retained regions.

\begin{figure*}[htb]
    \centering
    \includegraphics[width=\linewidth]{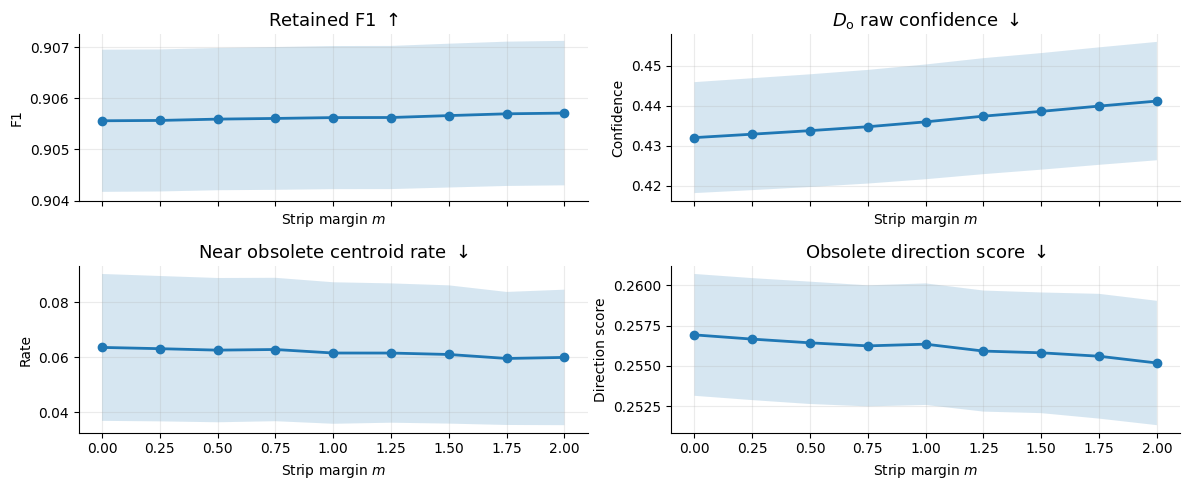}
    \caption{Ablation of the stripping margin $m$ across various values.}
    \label{fig:strip_margin_ablation}
\end{figure*}

Ablating the stripping margin $m$, seen in Figure~\ref{fig:strip_margin_ablation}, shows that DR is stable across a wide range of margin values. Retained F1 is almost flat, increasing only marginally from roughly $\approx0.9055$ to $\approx0.9057$, so stronger stripping does not degrade retained performance. In contrast, $D_o$ raw confidence rises mildly as $m$ increases, suggesting that overly strong margin stripping can increase residual confidence on obsolete samples even while improving latent separation. Overall, the effect of $m$ is small, and the overlapping uncertainty bands indicate that DR is not highly sensitive to this hyperparameter.

\begin{figure*}[htb]
    \centering
    \includegraphics[width=\linewidth]{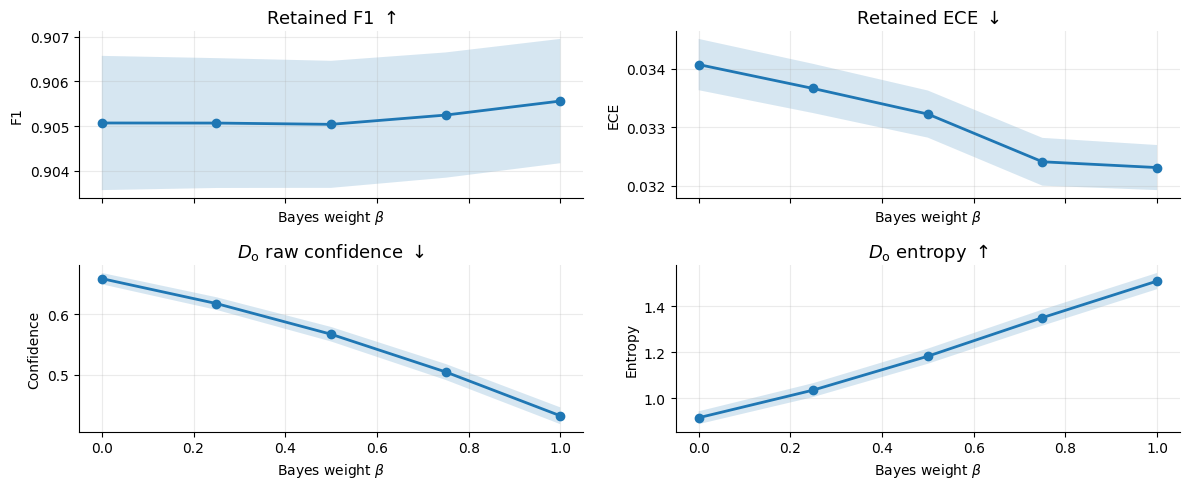}
    \caption{Ablation of the Bayes weight $\beta$ across various values.}
    \label{fig:beta_ablation}
\end{figure*}

Ablating $\beta$, seen in Figure~\ref{fig:beta_ablation}, shows that retained prototype evidence is important for stable repair. As $\beta$ increases, retained F1 remains effectively unchanged and even improves slightly at the largest value, while retained ECE decreases. This suggests that prototype evidence improves calibration slightly without sacrificing retained accuracy. $D_o$ raw confidence falls sharply as $\beta$ increases, and $D_o$ entropy rises correspondingly. Thus, larger $\beta$ makes obsolete-region inputs less overconfident after repair, while maintaining the retained task. These results indicate that the Bayes term provides useful retained-region structure that improves uncertainty and obsolete-output suppression.

\begin{figure*}[htb]
    \centering
    \includegraphics[width=\linewidth]{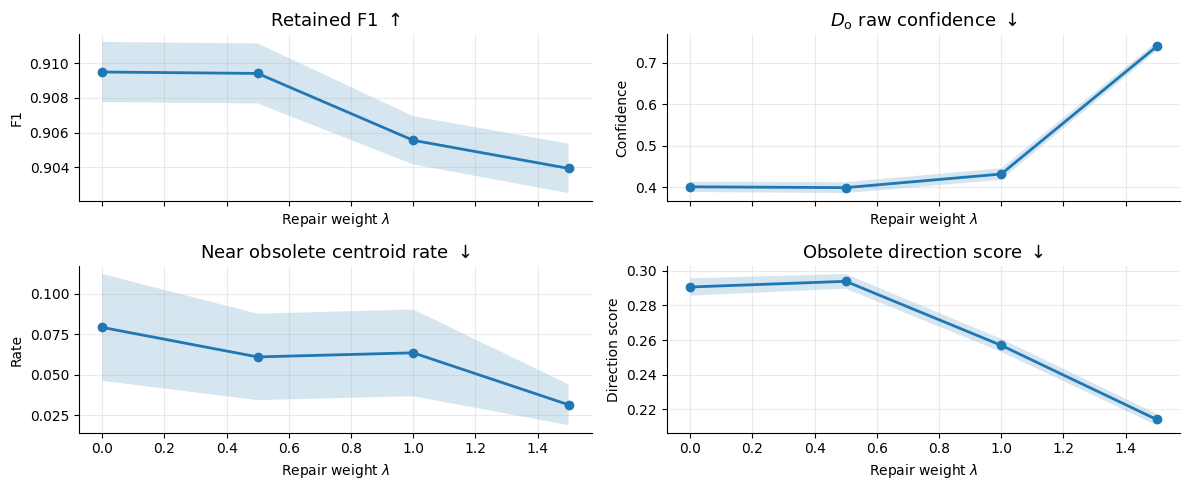}
    \caption{Ablation of the contrastive repair weight $\lambda$ across various values.}
    \label{fig:lambda_ablation}
\end{figure*}

The repair-weight ablation, seen in Figure~\ref{fig:lambda_ablation}, shows that $\lambda$ mainly controls the strength of representation-level repurposing. Increasing $\lambda$ reduces both the near-obsolete-centroid rate and the obsolete direction score, indicating stronger movement away from obsolete latent geometry as expected. This benefit comes with a trade-off: retained F1 drops slightly as $\lambda$ becomes large, and $D_o$ raw confidence rises sharply at the largest value. Thus, the contrastive repair term is beneficial, but only when applied moderately. Small-to-intermediate values preserve retained performance while improving obsolete-geometry suppression, whereas overly large values over-emphasise repair and degrade the retained-obsolete trade-off.

\begin{figure*}[htb]
    \centering
    \includegraphics[width=\linewidth]{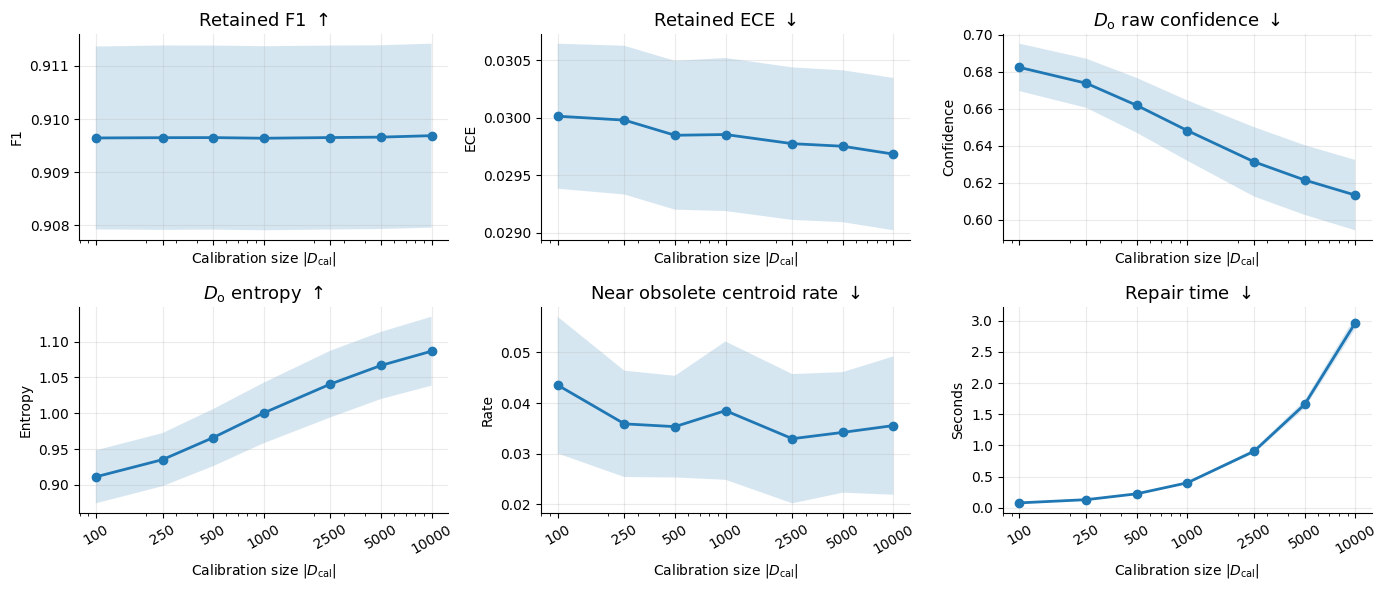}
    \caption{Ablation of the size of the calibration dataset $|\mathcal{D}_{cal}|$ across various values.}
    \label{fig:dcal_ablation}
\end{figure*}

Figure~\ref{fig:dcal_ablation} shows that DR is largely insensitive to the calibration-set size for retained performance: retained F1 is effectively flat across all values of $|\mathcal{D}_{\mathrm{cal}}|$, and retained ECE changes only marginally with overlapping uncertainty bands. Increasing the calibration size mainly affects obsolete-split behaviour, where $\mathcal{D}_o$ raw confidence decreases steadily and entropy increases, suggesting that larger calibration sets provide a better estimate of retained/obsolete geometry and lead to less over-confident redistribution of obsolete samples. The near-obsolete centroid rate remains low across the full range, with only minor non-monotonic variation, indicating that even small calibration sets are sufficient to move samples away from obsolete centroids. The main trade-off is computational: repair time increases with $|\mathcal{D}_{\mathrm{cal}}|$. At $\leq500$ samples, DR takes $\approx0.25s$ to repair on CIFAR10 but at $10,000$ samples it takes $\approx3.0s$. 

Tables~\ref{tab:msl-refinement-ablation-main} and~\ref{tab:msl-refinement-ablation-latent} examine the effect of residual refinement on the MSL case study. The unrepaired base model retains substantial obsolete behaviour, assigning obsolete samples back to the obsolete classes at an average rate of $0.672$ with obsolete probability mass of $0.398$ on average. Both DR variants remove this behaviour completely, reducing the obsolete prediction rate and obsolete probability mass to $0.000$. This confirms that the core DR repair is sufficient for obsolete-output suppression in this setting.

\begin{table*}[htb]
\centering
\caption{Residual refinement ablation on the MSL case study, including the unrepaired base model.}
\label{tab:msl-refinement-ablation-main}
\scriptsize
\setlength{\tabcolsep}{2.4pt}
\renewcommand{\arraystretch}{1.05}
\resizebox{\textwidth}{!}{%
\begin{tabular}{@{}lcccccccc@{}}
\toprule
\textbf{Method} 
& \textbf{$D_{\mathrm{r}}$ Acc. $\uparrow$}
& \textbf{$D_{\mathrm{r}}$ F1 $\uparrow$}
& \textbf{$D_{\mathrm{r}}$ NLL $\downarrow$}
& \textbf{$D_{\mathrm{r}}$ ECE $\downarrow$}
& \textbf{$D_{\mathrm{o}}$ Conf. $\downarrow$}
& \textbf{$D_{\mathrm{o}}$ Ent. $\uparrow$}
& \textbf{Obs. Rate $\downarrow$}
& \textbf{Obs. Prob. $\downarrow$} \\
\midrule
Base
& $0.803 \pm 0.001$
& $0.759 \pm 0.002$
& $0.887 \pm 0.002$
& $0.157 \pm 0.001$
& $0.433 \pm 0.003$
& $2.092 \pm 0.008$
& $0.672 \pm 0.007$
& $0.398 \pm 0.003$ \\

DR (w/o ref)
& $0.803 \pm 0.001$
& $0.792 \pm 0.002$
& $0.866 \pm 0.002$
& $\mathbf{0.149 \pm 0.001}$
& $\mathbf{0.253 \pm 0.001}$
& $\mathbf{2.556 \pm 0.004}$
& $\mathbf{0.000 \pm 0.000}$
& $\mathbf{0.000 \pm 0.000}$ \\

DR
& $\mathbf{0.884 \pm 0.001}$
& $\mathbf{0.856 \pm 0.002}$
& $\mathbf{0.579 \pm 0.002}$
& $0.186 \pm 0.001$
& $0.280 \pm 0.007$
& $2.347 \pm 0.028$
& $\mathbf{0.000 \pm 0.000}$
& $\mathbf{0.000 \pm 0.000}$ \\
\bottomrule
\end{tabular}%
}
\end{table*}

\begin{table}[htb]
\centering
\caption{Representation and hard-example metrics for the MSL DR refinement ablation.}
\label{tab:msl-refinement-ablation-latent}
\scriptsize
\setlength{\tabcolsep}{4pt}
\renewcommand{\arraystretch}{1.05}
\begin{tabular}{@{}lcccc@{}}
\toprule
\textbf{Method}
& \textbf{$D_{\mathrm{o}}$ Centroid $\downarrow$}
& \textbf{$D_{\mathrm{o}}$ Direction $\downarrow$}
& \textbf{Q4 Acc. $\uparrow$}
& \textbf{Q4 F1 $\uparrow$} \\
\midrule
Base
& -- 
& -- 
& $0.222 \pm 0.002$
& $0.177 \pm 0.010$ \\

DR (w/o ref.)
& $\mathbf{0.452 \pm 0.080}$
& $0.430 \pm 0.013$
& $0.222 \pm 0.002$
& $0.186 \pm 0.011$ \\

DR
& $0.816 \pm 0.062$
& $\mathbf{0.278 \pm 0.008}$
& $\mathbf{0.476 \pm 0.046}$
& $\mathbf{0.238 \pm 0.013}$ \\
\bottomrule
\end{tabular}
\end{table}

Residual refinement mainly affects retained utility. Without refinement, DR improves retained macro-F1 over the base model from $0.759$ to $0.792$, but retained accuracy remains unchanged at $0.803$. With refinement, retained accuracy increases to $0.884$ and macro-F1 to $0.856$, while retained NLL decreases substantially from $0.887$ for the base model to $0.579$ on average.

The hard-example metrics in Table~\ref{tab:msl-refinement-ablation-latent} support this interpretation. On the hardest retained quartile, DR without refinement gives similar accuracy to the base model, whereas refinement increases Q4 accuracy from $0.222$ to $0.476$ and Q4 macro-F1 from $0.186$ to $0.238$. The representation-level effects are mixed: refinement lowers the obsolete-direction score from $0.430$ to $0.278$, indicating weaker alignment with the obsolete-retained direction, but increases the nearest-obsolete-centroid rate from $0.452$ to $0.816$. Thus, refinement improves retained predictive repair and reduces one form of obsolete alignment, while not uniformly improving all latent-space metrics.


\section{Additional Discussion}
\label{sec:appendix-Additional Discussion}

The experiments above demonstrate the practical motivation for Deep Repurposing: deployed models may require targeted behavioural repair when classes become obsolete, unreliable, or no longer operationally useful. This section discusses the broader implications of this setting, including how DR relates to alternative update mechanisms/deep learning subfields, what assumptions it makes, and where it should be viewed as complementary rather than competitive with existing model maintenance pipelines.

\subsection{Federated Learning}
\label{sec:appendix-discussion-Federated Learning}

Federated learning (FL) is a natural alternative to consider for deployed systems, since it enables multiple clients to collaboratively train a shared model while keeping raw data decentralised~\citep{mcmahan2017communication, kairouz2021advances}. However, FL and Deep Repurposing address different failure modes. FL primarily concerns where training data reside and how model updates are aggregated across clients; DR instead concerns what should be done when an already deployed model contains an output behaviour that has become obsolete, unreliable, or operationally undesirable.

Thus, FL could in principle be used to retrain a new model, particularly as part of a longer-term fleet-level maintenance process. However, this requires communication rounds, client coordination, aggregation, validation, and redeployment. Even when raw data are not transmitted, communicating model updates or distributing a newly trained model can remain expensive in bandwidth-, energy-, or latency-constrained environments~\citep{mcmahan2017communication, bonawitz2019towards}. These assumptions are especially restrictive for remote or autonomous platforms such as rovers, UAVs, and underwater vehicles, where connectivity may be intermittent, and the need to suppress an unreliable output may arise before the next scheduled model update.

DR could additionally be seen as complementary to FL rather than a replacement for it. FL may provide a mechanism for eventual global retraining when sufficient communication, data, and coordination are available. DR instead provides a local post-hoc repair mechanism: once a class is identified as obsolete, the deployed classifier can be modified without waiting for a federated training cycle or transmitting a replacement model through the full deployment pipeline.

\subsection{Deep Unlearning and Privacy}
\label{sec:appendix-discussion-Deep Unlearning and Privacy}

Machine unlearning is a useful comparison, especially when the obsolete region corresponds to data, users, or groups whose influence should be removed from the model. In that setting, the aim is not just to alter the final prediction, but to make the model behave as though the forget set had not contributed to training~\citep{guo2020certified}. This is why unlearning is closely tied to privacy, membership-inference risk, and right-to-erasure requirements. Recent methods, therefore, update or dampen model parameters to reduce target information while trying to preserve retained utility~\citep{foster2024fast}.

Deep Repurposing has a different purpose. DR does not certify that an obsolete class, region, or group has been removed from the model's training influence. The pretrained parameters remain fixed, and the obsolete region may still have shaped the learned representation. DR should therefore not be viewed as privacy-preserving unlearning, or as a substitute for certified deletion when the obsolete set involves personal data, consent withdrawal, or regulatory removal. In those cases, machine unlearning, retraining, or another deletion mechanism remains the appropriate tool.

Instead, DR treats obsolescence as an operational repair problem. A region may become invalid, unreliable, or no longer useful even when there is no requirement to erase its historical influence. Examples include a retired rover hardware class (Section~\ref{sec:appendix-case-study1}), an unreliable biological taxonomy label (Section~\ref{sec:appendix-case-study2}), or a weather regime that is no longer valid for the revised task. In such cases, removing all learned structure associated with the obsolete region may be unnecessary, and may discard information that still helps retained predictions. DR instead suppresses the obsolete region as a valid output while reallocating retained-compatible evidence through the model's latent geometry.

\subsection{Continual Learning}
\label{sec:appendix-discussion-Continual Learning}

Continual learning addresses model adaptation after deployment. Its usual objective is to update a model as new tasks, classes, or distributions arrive, while limiting catastrophic forgetting of previous knowledge~\citep{parisi2019continual}. This is different from the setting considered here. DR does not aim to learn a new task sequence or acquire new classes; instead, it repairs a fixed model when part of the existing output space has become obsolete.

This distinction also affects the computational setting. Continual learning methods typically require some form of optimisation, replay, regularisation, adapter training, or parameter update. In this sense, they are closer to fine-tuning than to DR in our experiments: they can provide a principled long-term update mechanism, but they still require training data, validation, and deployment of updated parameters. DR avoids this cost by keeping the pretrained model fixed and using a calibration-derived analytic repair map at inference time.



\section{Experimental Setup}
\label{sec:appendix-Experimental Setup}

This section outlines our experimental setup in detail to ensure reproducibility and to make the evaluation protocol transparent. We first present which and how we selected obsolete classes in our evaluation, after which we present the datasets employed for evaluation. Next, we summarise the models/architectures used throughout this paper and then detail the comparative methods considered in our study. Lastly, we report training details and runtime complexity.

\subsection{Selection of Obsolete Classes/Regions}
\label{sec:appendix-Selection of Obsolete Classes}

For CIFAR-10, the obsolete-class evaluation is performed exhaustively: each of the ten classes is treated as obsolete once, giving ten obsolete-class runs. This is feasible because the label space is small and allows us to avoid any sampling decision when defining the obsolete class.

For datasets with substantially larger label spaces, however, an exhaustive obsolete-class sweep becomes impractical. CIFAR-100, Flowers-102, and ImageNet contain 100, 102, and 1000 classes, respectively. Evaluating every possible obsolete class would require 100, 102, and 1000 obsolete-class configurations per method and seed. Since each configuration must be evaluated across the full retained set, obsolete set, and all baselines, this would substantially increase the experimental cost while also producing many redundant class-removal cases. We therefore use a stratified selection protocol for large-label datasets.

The goal of this protocol is not to choose classes that favour any particular method, but to ensure that the selected obsolete classes cover qualitatively different levels of base-model difficulty. In practical deployment, an obsolete class may correspond to a class that the model recognises very reliably, a class of average difficulty, or a class that is already uncertain or error-prone. These cases place different demands on a repurposing method: removing an easy class tests whether the method can suppress a highly confident obsolete decision channel; removing a hard class tests whether the method remains stable when the obsolete class is poorly separated or inconsistently represented; and medium classes provide a representative middle case.

To construct the obsolete-class subset, we first evaluate the trained base model independently on each class in a separate validation set from the train/cal/test sets. For every class, we record class-level accuracy, negative log-likelihood (NLL), expected calibration error (ECE), mean predictive confidence, and predictive entropy. We then define a reliability-risk score:

\begin{equation}
R_c = z(\mathrm{NLL}_c) + z(\mathrm{ECE}_c) + z(H_c) - z(\mathrm{Acc}_c)
\label{eq:reliability-risk-score}
\end{equation}

where $\mathrm{NLL}_c$ is the mean negative log-likelihood for class $c \in \{0, 1, \ldots C\}$, $\mathrm{ECE}_c$ is the class-level calibration error, $H_c$ is the mean predictive entropy, and $\mathrm{Acc}_c$ is the class-level accuracy. The operator $z(\cdot)$ denotes standardisation over classes within the same dataset:

\begin{equation}
z(m_c)=\frac{m_c - \mu_m}{\sigma_m + \epsilon}
\label{eq:standardisation-over-classes}
\end{equation}

where $m$ denotes one of the class-level metrics, $\mu_m$ and $\sigma_m$ are the mean and standard deviation of metric $m$ computed across all classes in the dataset, and $\epsilon=10^{-12}$ for numerical stability. The signs in $R_c$ follow the direction of risk: larger NLL, ECE, and entropy indicate a less reliable class, whereas larger accuracy indicates a more reliable class and is therefore subtracted. Standardising each term prevents any single metric from dominating purely because of its numerical scale~\citep{joint2008handbook}. The score is used only as a selection heuristic for ranking classes from easy to hard; it is not used as an evaluation metric for any repurposing method. Classes with lower reliability risk correspond to easier, more reliable base-model classes, while classes with higher reliability risk correspond to harder, less reliable classes. We then rank all classes within each dataset by this score and select two classes from the easy end of the ranking, two classes from the middle of the ranking, and two classes from the hard end. For a dataset with $C$ classes, this corresponds to selecting classes at ranks $\{1,2\}$, the two central ranks, and ranks $\{C-1,C\}$. Table~\ref{tab:selected_obsolete_classes} reports the selected classes and their base-model statistics.

\begin{table*}[htb]
\centering
\caption{Selected obsolete classes for the large-label datasets. Classes are ranked by the base model's reliability-risk score $R_c$, where lower values indicate easier/more reliable classes and higher values indicate harder/less reliable classes.}
\label{tab:selected_obsolete_classes}
\scriptsize
\resizebox{0.85\textwidth}{!}{%
\begin{tabular}{lllrlrrrr}
\toprule
\textbf{Dataset} & \textbf{Group} & \textbf{Class} & \textbf{Rank} & \textbf{ID} & \textbf{Acc.} & \textbf{NLL} & \textbf{ECE} & $\mathbf{R_c}$ \\
\midrule
CIFAR-100 & Easy   & tractor            & 1    & 89  & 0.970 & 0.269 & 0.107 & -4.959 \\
CIFAR-100 & Easy   & orange             & 2    & 53  & 0.960 & 0.262 & 0.095 & -4.922 \\
CIFAR-100 & Medium & table              & 50   & 84  & 0.860 & 0.637 & 0.111 & -0.880 \\
CIFAR-100 & Medium & worm               & 51   & 99  & 0.820 & 0.611 & 0.104 & -0.861 \\
CIFAR-100 & Hard   & boy                & 99   & 11  & 0.530 & 1.519 & 0.200 & 8.952 \\
CIFAR-100 & Hard   & girl               & 100  & 35  & 0.530 & 1.651 & 0.229 & 9.981 \\
\midrule
Flowers-102 & Easy   & oxeye daisy        & 1    & 48  & 1.000 & 0.150 & 0.138 & -6.428 \\
Flowers-102 & Easy   & black-eyed susan   & 2    & 62  & 1.000 & 0.160 & 0.139 & -6.427 \\
Flowers-102 & Medium & desert-rose        & 51   & 84  & 0.953 & 0.735 & 0.367 & 0.023 \\
Flowers-102 & Medium & wallflower         & 52   & 45  & 0.932 & 0.717 & 0.348 & 0.028 \\
Flowers-102 & Hard   & carnation          & 101  & 30  & 0.656 & 1.734 & 0.361 & 6.295 \\
Flowers-102 & Hard   & canterbury bells   & 102  & 2   & 0.400 & 2.104 & 0.149 & 6.706 \\
\midrule
ImageNet & Easy   & can opener          & 1    & 473 & 0.822 & 1.172 & 0.302 & -6.655 \\
ImageNet & Easy   & seat belt           & 2    & 785 & 0.911 & 0.936 & 0.375 & -6.253 \\
ImageNet & Medium & whiptail            & 500  & 41  & 0.956 & 1.033 & 0.541 & -0.072 \\
ImageNet & Medium & window shade        & 501  & 905 & 0.733 & 1.636 & 0.370 & -0.067 \\
ImageNet & Hard   & velvet              & 999  & 885 & 0.400 & 3.083 & 0.226 & 6.206 \\
ImageNet & Hard   & letter opener       & 1000 & 623 & 0.244 & 3.437 & 0.122 & 7.781 \\
\bottomrule
\end{tabular}%
}
\end{table*}

This produces six obsolete classes per large-label dataset. For example, CIFAR-100 moves from highly reliable classes such as \emph{tractor} and \emph{orange} to harder classes such as \emph{boy} and \emph{girl}. Similarly, Flowers-102 ranges from perfectly classified easy classes to substantially harder flower categories, while ImageNet ranges from relatively reliable classes such as \emph{can opener} and \emph{seat belt} to difficult classes such as \emph{velvet} and \emph{letter opener}. 

Importantly, this selection is performed once using only the base model, before applying any unlearning, suppression, fine-tuning, retraining, or Deep Repurposing method. The same selected obsolete classes are then used for every method. This prevents method-specific cherry-picking while keeping the evaluation tractable and ensuring that the obsolete-class benchmark includes easy, medium, and hard removal scenarios.

For regression, we use the same fixed-selection principle, but the meaning of an obsolete region changes with the task and classification-specific metrics are swapped with analogous regression variants. In classification, each obsolete region corresponds directly to a discrete class label. In regression, the obsolete region may instead be either a target-value interval or an application-defined semantic regime.

For UTKFace, obsolescence is defined over the target space. We partition the continuous age target into seven age regions: 0–12, 13–19, 20–29, 30–39, 40–49, 50–59, and 60–120. The base age regressor is trained on the full original task, with all age regions included. We then evaluate the trained base model separately on each age region using the calibration split. The selected obsolete region is the oldest age interval, 60–120. This region has the largest obsolete-candidate score on the calibration split. It also has the highest calibration MAE, the highest RMSE, the lowest Within-5 accuracy, and the highest mean predictive uncertainty, making it the least reliable target region under the base model. The same region is also ranked highest on the held-out test split. We therefore fix the obsolete set as $\mathcal{O} = \{60 - 120\}$, with all remaining age intervals forming the retained set $\mathcal{R}$. This defines a target-associated regression obsolescence setting, where a previously valid output interval is no longer treated as part of the repaired task.

For Bike Sharing, obsolescence is defined over a semantic region rather than over the numeric regression target. The model predicts the number of registered bike rentals, while the task regions are induced by the \texttt{weathersit} attribute: \texttt{clear}, \texttt{mist}, \texttt{light\_precip}, and \texttt{heavy\_precip}. This evaluates a different deployment scenario from UTKFace: the obsolete region is not a range of rental counts, but a weather regime whose semantic role is no longer valid under the revised task.

We select \texttt{light\_precip} as the obsolete semantic region because it is the most difficult non-degenerate weather regime under the base model. On the calibration split, \texttt{light\_precip} has the largest MAE and largest 90th-percentile absolute error among the populated weather regions. The same pattern holds on the held-out val split, where \texttt{light\_precip} again has the largest MAE, RMSE, and tail error. We therefore fix $\mathcal{O} = \{light\_precip\}$, with \texttt{clear}, \texttt{mist}, and \texttt{heavy\_precip} retained. This defines a semantic-region regression obsolescence setting, where DR is evaluated on whether it can suppress reliance on an obsolete operating regime while preserving the scalar prediction task.

In both regression settings, the obsolete region is selected before applying Deep Repurposing or any comparative method, using only the base model and the fixed calibration protocol. The same retained and obsolete test splits are then used for all methods. This prevents method-specific selection while allowing the regression experiments to test two distinct forms of task obsolescence: target-associated obsolescence in UTKFace and semantic-region obsolescence in Bike Sharing.

\subsection{Datasets}
\label{sec:appendix-datasets}

We conduct evaluations using a broad suite of well-established computer vision and tabular benchmarks that cover different domains and levels of task complexity. This design allows for a thorough assessment of repurposing across diverse conditions. Detailed information on each dataset, including sample counts, input resolution, class composition, and data splits, is provided below. Representative examples are displayed in Figure~\ref{fig:appendix-dataset-vis}.

\begin{figure*}[htb]
    \centering
    \begin{subfigure}{0.475\linewidth}
        \centering
        \includegraphics[width=\linewidth]{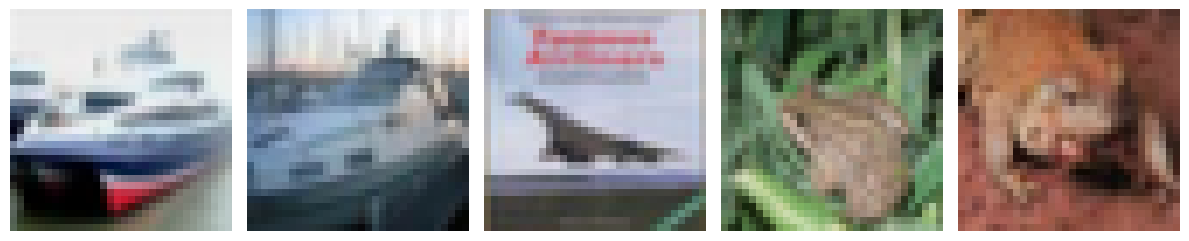}
        \caption{CIFAR10}
    \end{subfigure}
    \hfill
    \begin{subfigure}{0.475\linewidth}
        \centering
        \includegraphics[width=\linewidth]{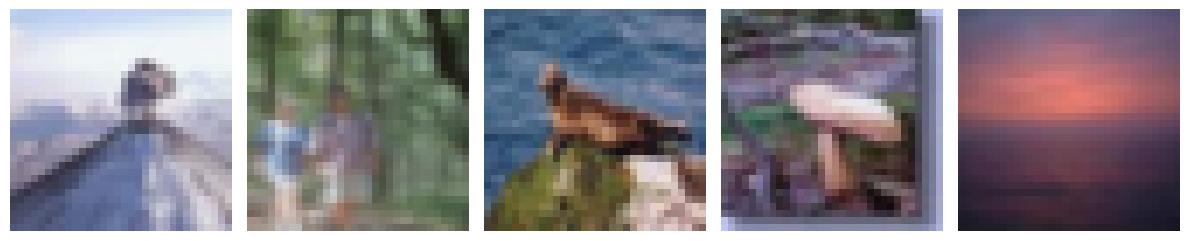}
        \caption{CIFAR100}
    \end{subfigure}
    \begin{subfigure}{0.475\linewidth}
        \centering
        \includegraphics[width=\linewidth]{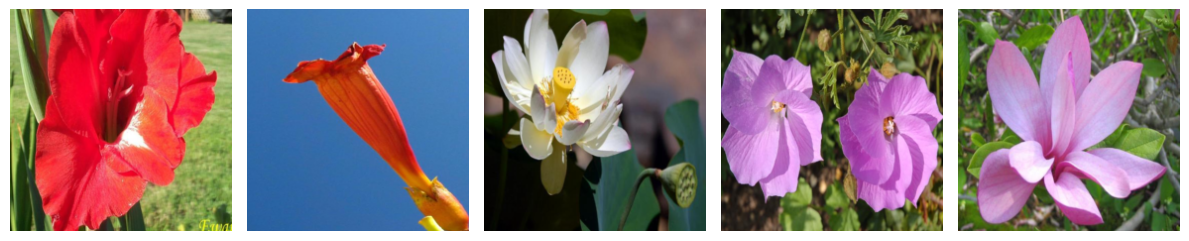}
        \caption{Oxford Flowers 102}
    \end{subfigure}
    \hfill
    \begin{subfigure}{0.475\linewidth}
        \centering
        \includegraphics[width=\linewidth]{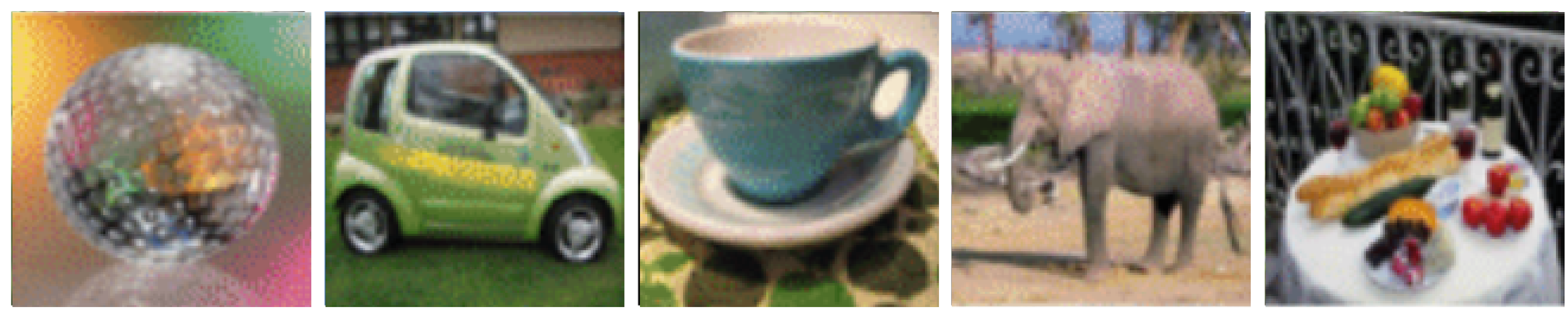}
        \caption{ImageNet}
    \end{subfigure}
    \begin{subfigure}{0.475\linewidth}
        \centering
        \includegraphics[width=\linewidth]{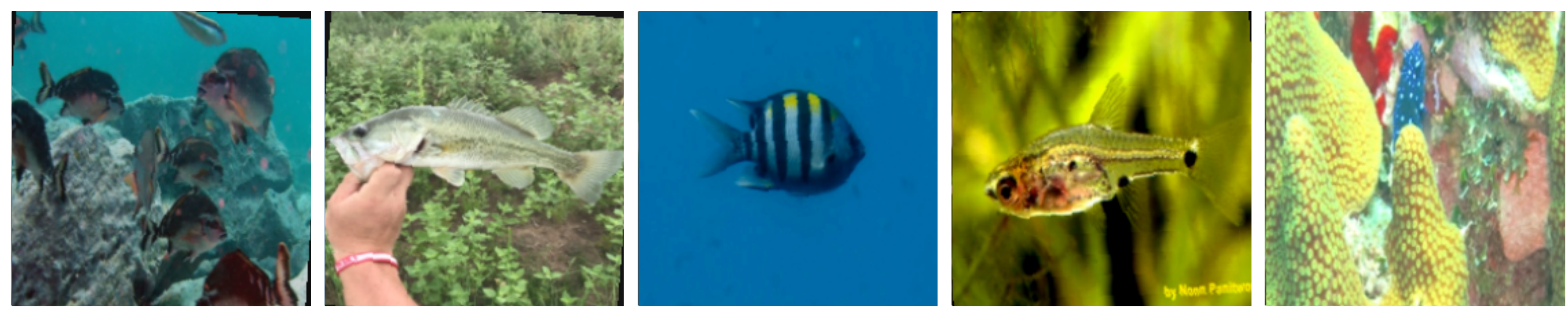}
        \caption{FishNet}
    \end{subfigure}
    \hfill
    \begin{subfigure}{0.475\linewidth}
        \centering
        \includegraphics[width=\linewidth]{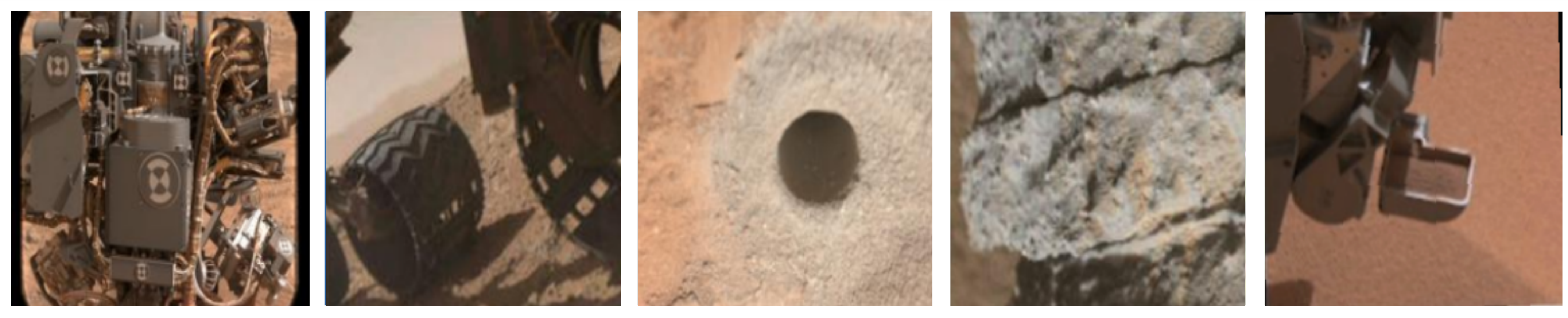}
        \caption{MSL}
    \end{subfigure}
    \begin{subfigure}{0.475\linewidth}
        \centering
        \includegraphics[width=\linewidth]{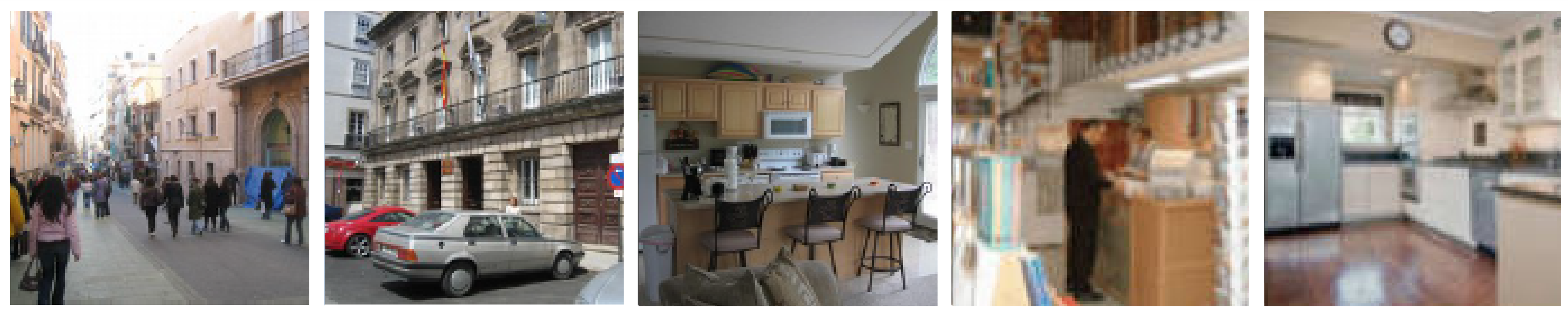}
        \caption{ADE20k}
    \end{subfigure}
    \hfill
    \begin{subfigure}{0.475\linewidth}
        \centering
        \includegraphics[width=\linewidth]{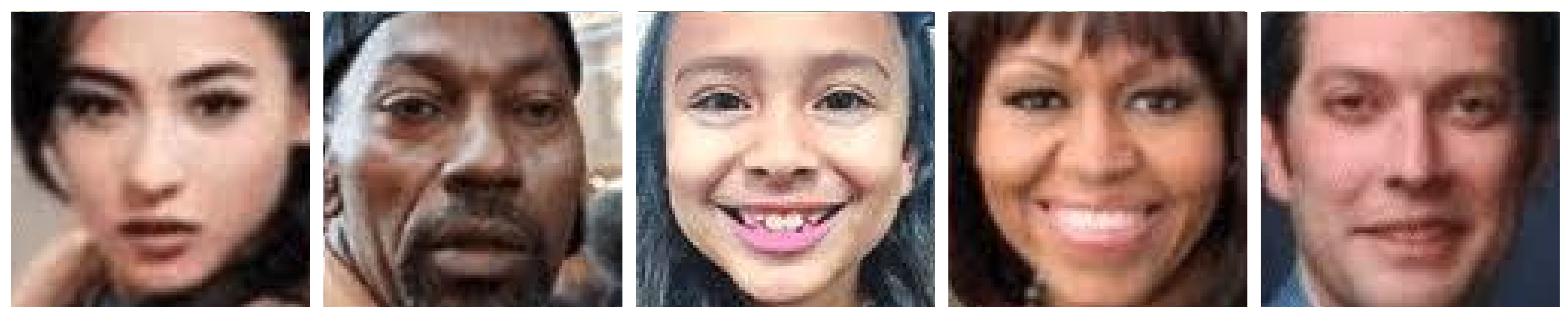}
        \caption{UTKFace}
    \end{subfigure}
\caption{Example images from the datasets used in our experimental evaluation.}
\label{fig:appendix-dataset-vis}
\end{figure*}

\begin{itemize}[noitemsep, nolistsep]
    \item \textbf{CIFAR10~\citep{krizhevsky2009learning}:} consists of 32x32 RGB images of real-world objects (e.g, birds, trucks, airplanes, frogs, etc) spanning 10 classes. We utilise the widely used standard split of 50,000 training samples, 8000 test samples, and 2000 validation samples. Pixel normalisation was applied, bounded $[0,1]$ across all three RGB channels. 
    \item \textbf{CIFAR100~\citep{krizhevsky2009learning}:} consists of 32x32 RGB images of real-world objects spanning of 100 classes that are fine-grained counterparts of the classes from CIFAR10. For example, CIFAR10 has the class \textit{truck}, whilst CIFAR100 has the classes \textit{pickup truck} and \textit{train}. We utilise the widely used standard split of 50,000 training samples, 8000 test samples, and 2000 validation samples. Pixel normalisation was applied, bounded $[0,1]$ across all three RGB channels.
    \item \textbf{Oxford Flowers~\citep{Nilsback08}:} consists of 64x64 RGB images of flowers commonly found in the United Kingdom, spanning 102 (e.g., Water Lily, Wild Pansy, etc.) classes. We utilise a split of 9826 training samples, 1638 test samples, and 1638 validation samples. This is double the size of the original dataset, as each image has been duplicated with a random augmentation to aid generalisation within training. Pixel normalisation was applied, bounded $[0,1]$ across all three RGB channels.
    \item \textbf{ImageNet~\citep{deng2009imagenet}:} consists of high-resolution RGB images of real-world objects spanning 1000 classes, including animals, vehicles, household objects, and tools. We use a split containing 1,281,167 training images, 10,000 validation images, and 40,000 test samples. Images were resized to $256\times256$ pixels, centre-cropped to $224\times224$, and normalised using the standard ImageNet channel means and standard deviations.
    \item \textbf{FishNet~\citep{khan2023fishnet}:} consists of RGB images of aquatic species spanning 83 taxonomic orders (e.g., Perciformes, Siluriformes, and Cypriniformes). We utilise a split of 65,626 training samples, 10,000 validation samples, and 18,906 test samples. Images were resized to $224\times224$ pixels and normalised using the standard ImageNet channel means and standard deviations.
    \item \textbf{MSL~\citep{wagstaff2018deep}:} consists of $256\times256$ RGB images collected from the Mars surface by the Mars Science Laboratory Curiosity rover, spanning 24 classes, including rover instruments, hardware components, calibration targets, ground, and horizon. We utilise the standard split of 3746 training samples, 1640 validation samples, and 1305 test samples. Images were resized to $224\times224$ pixels and normalised using the standard ImageNet channel means and standard deviations.
    \item \textbf{ADE20K~\citep{zhou2017scene}:} consists of RGB images of diverse indoor and outdoor scenes with pixel-level semantic annotations spanning 150 classes, including objects, surfaces, and scene regions. We utilise a split of 10,210 training samples, 10,000 validation samples, and 2000 test samples. Images and segmentation masks were resized to $512\times512$ pixels, with RGB channels normalised using the standard ImageNet mean and standard deviation.
    \item \textbf{UTKFace~\citep{zhifei2017cvpr}:} consists of RGB facial images labelled with ages ranging from 0 to 116 years, alongside gender and ethnicity attributes. We use age as the continuous regression target and utilise a stratified split of 17,069 training samples, 1897 validation samples, and 4742 test samples. Images were resized to $256\times256$ pixels, centre-cropped to $224\times224$, and normalised using the standard ImageNet channel means and standard deviations.
    \item \textbf{Bike Sharing~\citep{fanaee2014event}:} consists of hourly bike-rental records collected over two years, with tabular features describing temporal, seasonal, weather, and environmental conditions. We predict the number of registered bike rentals and utilise a split of 12,513 training samples, 1390 validation samples, and 3476 test samples. Categorical features were one-hot encoded, while continuous features were standardised using their feature-wise means and standard deviations. Example rows can be visualised in Table~\ref{tab:bikesharing-example-rows}.
\end{itemize}

\begin{table}[htb]
\centering
\caption{Example rows from the Bike Sharing hourly dataset.}
\label{tab:bikesharing-example-rows}
\begin{tabular}{@{}ccccccccc@{}}
\toprule
\textbf{Date} & \textbf{Season} & \textbf{Hour} & \textbf{Working day} & \textbf{Weather} & \textbf{Temp.} & \textbf{Humidity} & $\ldots$ & \textbf{cnt} \\
\midrule
2011-01-01 & 1 & 0 & 0 & 1 & 0.24 & 0.81 & $\ldots$ & 16 \\
2011-01-01 & 1 & 1 & 0 & 1 & 0.22 & 0.80 & $\ldots$ & 40 \\
2011-01-01 & 1 & 2 & 0 & 1 & 0.22 & 0.80 & $\ldots$ & 32 \\
\bottomrule
\end{tabular}
\end{table}

\subsection{Models}
\label{sec:appendix-models}

To assess the performance and generalisability of Deep Repurposing, we assess its performance against multiple models and architectures across our experimental evaluation. Below, we provide a concise description of each model together with the implementation details and hyperparameter settings adopted in our experiments:

\begin{itemize}[noitemsep, nolistsep]
    \item \textbf{EfficientNet-B0~\citep{tan2019efficientnet}:} is used as the base classifier for CIFAR10. We adapt the initial convolutional stem for the $32\times32$ CIFAR10 input resolution and replace the original ImageNet classification layer with a 10-class linear output head.
    \item \textbf{EfficientNetV2-S~\citep{tan2021efficientnetv2}:} is used as the base classifier for CIFAR100 and Oxford Flowers. For CIFAR100, we adapt the input stem for the $32\times32$ image resolution and replace the original ImageNet classification layer with a 100-class linear output head. For Oxford Flowers, the standard input stem is retained, and the classification layer is replaced with a 102-class linear output head.
    \item \textbf{ResNet-50~\citep{he2016deep}:} is used as the base classifier for ImageNet and as the base age-regression model for UTKFace. For ImageNet, we retain the standard architecture and its 1000-class classification head, using weights pretrained on the complete ImageNet-1K training set. For UTKFace, the model is initialised from the same ImageNet-pretrained weights, with the original classification head replaced by a single-output linear regression layer for age prediction and dropout applied to the pooled feature representation.
    \item \textbf{ViT-B/16~\citep{dosovitskiy2020image}:} is used as the base classifier for the MSL dataset. The original ImageNet classification head is replaced with an output layer matching the MSL class taxonomy.
    \item \textbf{MobileNetV3-Large~\citep{howard2019searching}:} is used as the base classifier for the FishNet dataset. The original ImageNet classification head is replaced with an output layer matching the 83 fish-order classes used in our experiments.
    \item \textbf{SegFormer-B0~\citep{xie2021segformer}:} is used as the base semantic segmentation model for ADE20K. We use the \texttt{nvidia/segformer-b0-finetuned-ade-512-512} checkpoint, which is pretrained and fine-tuned on the complete ADE20K semantic segmentation task with 150 output classes. Images are processed at $512\times512$ resolution, and the model produces dense pixel-level class predictions.
    \item \textbf{Multilayer Perceptron:} is used as the base regression model for the Bike Sharing dataset. The network contains three fully connected hidden layers of width 128, followed by a 64-dimensional latent layer and a single-output linear regression head. Each hidden stage uses layer normalisation, GELU activation, and dropout with probability $0.10$. The model is trained on the complete bike-demand prediction task before any semantic region is designated as obsolete, and the resulting checkpoint remains fixed across all comparative repair methods.
\end{itemize}

For all models, the model is initialised from ImageNet-pretrained weights for classification, ADE20K pretrained weights for segmentation and subsequently fine-tuned on the complete original task before any classes are designated as obsolete. The resulting dataset-specific checkpoints are fixed and shared across all comparative repair methods.

\subsection{Comparative Approaches}
\label{sec:appendix-Comparative Methods}

To assess the performance of DR, we benchmark it against a set of recent model unlearning, editing, and retraining methods that capture the state of the art in this area. Below, we provide a concise description of each method together with the implementation details and hyperparameter settings adopted in our experiments:

\begin{itemize}[noitemsep, nolistsep]
    \item \textbf{Base Model:} refers to the frozen base neural network on which all unlearning, editing, etc methods, including DR, are applied. It serves as the baseline.
    \item \textbf{Selective Synaptic Dampening (SSD)~\citep{foster2024fast}:} is a post-hoc, retraining-free unlearning method that estimates parameter importance using the diagonal Fisher information on the full and obsolete datasets. The selection weighting is set to $\alpha=10$, the dampening constant to $\lambda=1$, the dampening exponent to $1$, and the maximum dampening multiplier to $1$. Fisher importance is estimated from the calibration set using batches of 128.
    \item \textbf{Loss-Free Selective Synaptic Dampening (LFSSD)~\citep{foster2024loss}:} extends SSD by replacing its label-dependent Fisher importance with the gradient magnitude of the squared $\ell_2$ norm of the model output, allowing parameter importance to be estimated without labels or a task loss. In our implementation, the selection weighting is set to $\alpha=10$, the dampening constant to $\lambda=1$, the dampening exponent to $1$, and the maximum dampening multiplier to $1$. Importance is estimated from the calibration set using batches of 128, with the squared $\ell_2$ norm of the logits as the output objective.
    \item \textbf{Boundary Shrink~\citep{chen2023boundary}:} is a decision-boundary-based unlearning method that generates a perturbed cross-sample for each obsolete example and uses the original model to identify its nearest incorrect class. The obsolete samples are then relabelled with these neighbouring retained classes and used to fine-tune the model, shrinking the obsolete-class decision region. In our implementation, we use a perturbation magnitude of $\epsilon=0.25$ with one search step, followed by 10 fine-tuning epochs using SGD with a learning rate of $10^{-5}$, momentum of $0.9$, no weight decay, and a batch size of 128. Where necessary, the relabelled target is restricted to the highest-scoring retained class.
    \item \textbf{Boundary Expand~\citep{chen2023boundary}:} is a decision-boundary-based unlearning method that introduces an additional shadow class and relabels all obsolete samples to this auxiliary output. The expanded model is then fine-tuned so that obsolete information is redirected towards the shadow class, after which the added output neuron is removed to discard the associated activation. In our implementation, the model is fine-tuned for 10 epochs using SGD with a learning rate of $10^{-5}$, momentum of $0.9$, no weight decay, and a batch size of 128. For multiple obsolete classes, all obsolete samples are mapped to a single shared shadow class.
    \item \textbf{Fictitious Class Removal (FCR)~\citep{hakemi2025post}:} is a post-hoc editing method that treats the obsolete concept as a fictitious class and identifies the final-layer connection most strongly associated with it using class-specific activations, gradients, and entropy. The selected connection is then analytically modified to neutralise the corresponding decision hyperplane while leaving the remaining parameters unchanged. In our implementation, a probe copy is fine-tuned for 5 epochs using a learning rate of $10^{-3}$ and no weight decay to make the fictitious class identifiable. We apply a neutralisation rate of $1.0$, modify at most one weight per obsolete class, and estimate the association scores using batches of 128.
    \item \textbf{Fisher Forgetting~\citep{golatkar2020eternal}:} is a post-hoc weight-scrubbing method that approximates the retained-data loss curvature using the diagonal Fisher information matrix and modifies parameters according to their estimated sensitivity. We apply a deterministic scrub with scale $10^{-4}$, damping $10^{-2}$, and a maximum absolute update of $10^{-3}$.
    \item \textbf{Class Vector Unlearning~\citep{kim2026exploring}:} is a classifier-editing method that represents the adaptation of each class as the difference between its mean pretrained and fine-tuned latent representations. To unlearn an obsolete class, this class vector is reversed and mapped into the model weights so that obsolete samples are shifted away from their learned class-specific representation while limiting interference with retained classes. We evaluate the weight-space variant, using a scaling coefficient of $\lambda=-1.5$. Class centroids are estimated from the calibration set, and the final feature block is optimised for 20 epochs using AdamW with a learning rate of $10^{-4}$, no weight decay, a batch size of 32, one reference sample per obsolete class, and a gradient clipping norm of $1.0$.
    \item \textbf{Fine-tuning:} updates the pretrained model using only retained training samples, with all obsolete-class samples removed from the optimisation dataset. In our implementation, all model parameters are fine-tuned for 20 epochs using AdamW with a learning rate of $10^{-4}$, no weight decay, and a batch size of 128. This baseline represents conventional continual adaptation of the existing model without explicitly modifying or removing its obsolete output nodes.
    \item \textbf{Retraining:} trains a newly initialised model from scratch using only retained training samples, providing the conventional reference for a model that has not been exposed to the obsolete classes during training. The output layer is reduced to the retained classes during optimisation and subsequently mapped back to the original output space for evaluation, with obsolete logits fixed to $-10^{9}$. In our implementation, the model is trained for 50 epochs using AdamW with a learning rate of $10^{-4}$, weight decay of $10^{-4}$, and a batch size of 128.
    \item \textbf{Deep Repurposing (DR):} is our post-hoc, parameter-free repair method that operates in the latent space of a fixed pretrained model. During offline calibration, DR estimates diagonal Gaussian region prototypes, obsolete-to-retained removal and repair directions, and retained residual-refinement vectors from the calibration set. At inference, obsolete evidence is analytically removed and redistributed towards compatible retained regions, after which the output and latent representation are reconstructed using retained-region structure. We use a stripping margin of $m=1$, Bayes weight $\beta=1$, repair weight $\lambda=1$.
\end{itemize}

Unless otherwise stated, the hyperparameter settings for all comparative methods follow the recommendations of their respective papers and official implementations. Where a dataset- or architecture-specific adjustment was required, a small amount of manual tuning was performed, with the same settings then fixed across all corresponding experiments.

\subsection{Training Details}
\label{sec:appendix-Training Details}

To ensure consistency and reproducibility across all experiments, we adopt a fixed training configuration and architectural setup for all models, including our proposed Deep Repurposing and baseline comparators, unless explicitly stated otherwise.

All experiments were implemented in PyTorch and executed on a large performance GPU cluster using a maximum of three Nvidia A40 GPUs, 32 CPU cores, 167GB of memory, unless stated otherwise. All runs used 10 random seeds per obsolete class evaluated. So, for example, on the base CIFAR10 experiments in Table~\ref{tab:appendix_core_results}, we evaluate where each class is obsolete. So each method, therefore, has 100 runs as there are 10 classes in CIFAR10.

\subsection{Runtime and Complexity}
\label{sec:Runtime and Complexity}

Let $N=|\mathcal{D}_{\mathrm{cal}}|$ denote the number of calibration representations, $d$ the latent dimensionality, and $|\mathcal{C}|$, $|\mathcal{O}|$, and $|\mathcal{R}|$ the numbers of total, obsolete, and retained regions, respectively, where $\mathcal{R}=\mathcal{C}\setminus\mathcal{O}$. DR performs a single frozen forward pass over $\mathcal{D}_{\mathrm{cal}}$, after which the region prototypes and pairwise obsolete-retained directions are estimated without gradient optimisation. In the current implementation, fitting the region statistics requires $O(N|\mathcal{C}|+Nd)$ operations, while constructing the pairwise directions requires $O(|\mathcal{O}||\mathcal{R}|d)$. The main calibration cost arises from applying the core repair map to the calibration representations: in the worst case, contrastive transport evaluates each retained destination against all calibrated region prototypes, giving an overall one-off calibration complexity of:
\begin{equation}
O\!\left(
F_\theta(\mathcal{D}_{\mathrm{cal}})
+N|\mathcal{R}||\mathcal{C}|d
+|\mathcal{O}||\mathcal{R}|d
\right)
\end{equation}
where $F_\theta(\mathcal{D}_{\mathrm{cal}})$ denotes the cost of the frozen model forward pass. This calibration is performed only once before deployment. 

At inference, let $P$ denote the number of prediction locations and $M\leq P$ the number for which the repair mask $M(x)$ is active. Region-likelihood evaluation requires $O(P|\mathcal{C}|d)$ operations, while contrastive transport contributes $O(M|\mathcal{R}||\mathcal{C}|d)$. Residual refinement requires $O(P|\mathcal{R}|^2)$ and simplex projection requires $O(P|\mathcal{R}|\log|\mathcal{R}|)$. Thus, excluding score-consistent latent reconstruction, the worst-case DR overhead is:
\begin{equation}
O\!\left(
P|\mathcal{R}||\mathcal{C}|d
+P|\mathcal{R}|^2
\right)
\end{equation}
when all prediction locations activate repair. For score-based classification and segmentation, the current implementation additionally reconstructs the repaired latent using an SVD-based Moore-Penrose pseudoinverse of the retained head, contributing:
\begin{equation}
O\!\left(
|\mathcal{R}|d\min(|\mathcal{R}|,d)
+d^2|\mathcal{R}|
+Pd^2
+P|\mathcal{R}|d
\right)
\end{equation}
The persistent DR calibration state requires
$O(|\mathcal{C}|d+|\mathcal{O}||\mathcal{R}|d+|\mathcal{R}|^2)$
additional storage.

Despite this worst-case scaling, the region sets used in practice are finite and the operations are predominantly batched tensor computations, with contrastive transport evaluated only for representations that activate the repair mask. Consequently, DR remains lightweight in our experiments, requiring only $\approx$1.62s on CIFAR10 and remaining seconds-scale across substantially larger models and task spaces.


\end{document}